\DeclareMathOperator{\Tr}{Tr}
\DeclareMathOperator{\diag}{diag}
\DeclareMathOperator{\Diag}{Diag}
\newtheorem{lemma}{Lemma}
\newtheorem{theorem}{Theorem}
\newtheorem{corollary}{Corollary}
\author{Micha\"el Fanuel\thanks{KU Leuven, Department of Electrical Engineering (ESAT),
STADIUS Center for Dynamical Systems, Signal Processing and Data Analytics,
Kasteelpark Arenberg 10, B-3001 Leuven, Belgium. email: michael.fanuel@kuleuven.be}
\and Joachim Schreurs\footnotemark[1]
\and Johan A.K. Suykens\footnotemark[1]}
\title{Diversity sampling is an implicit regularization for kernel methods}
\begin{document}

\maketitle

% REQUIRED
\begin{abstract}
Kernel methods have achieved very good performance on large scale regression and classification problems, by using the Nystr\"om method and preconditioning techniques. The Nystr\"om approximation -- based on a subset of landmarks --   gives a low rank approximation of the kernel matrix, and is known to provide a form of implicit regularization. We further elaborate on the impact of sampling \emph{diverse} landmarks for constructing the Nystr\"om approximation in supervised as well as unsupervised kernel methods. By using Determinantal Point Processes for sampling, we obtain additional theoretical results concerning the interplay between diversity and regularization. Empirically, we demonstrate the advantages of training kernel methods based on subsets made of diverse points. In particular, if the dataset has a dense bulk and a sparser tail, we show that Nystr\"om kernel regression with diverse landmarks increases the accuracy of the regression in sparser regions of the dataset, with respect to a uniform landmark sampling.  A greedy heuristic is also proposed to select diverse samples of significant size within large datasets when exact DPP sampling is not practically feasible.
\end{abstract}

% REQUIRED
%\begin{keywords}
%Kernel, Nystr\"om, Determinantal Point Process, Regularization.
%\end{keywords}

% REQUIRED
%\begin{AMS}
%68T01, 60G55
%\end{AMS}

\section{Introduction}
Kernel methods often rely on low rank  approximations to deal with large scale datasets. This paper addresses the special case of Nystr\"om approximation that is defined hereafter. Namely, let $k(x,y)>0$ be a continuous and strictly positive definite kernel. Given data $\{x_i\in \mathbb{R}^d \}_{i\in [n]}$, kernel methods rely on the entries of the Gram matrix $K = [k(x_i,x_j)]_{i,j}$.
To deal with large scale problems, one often samples a subset of landmarks $\mathcal{C}\subseteq [n]$ and defines a $n\times |\mathcal{C}|$ sampling matrix $C$ obtained by selecting the columns of the identity matrix indexed by $\mathcal{C}$. This is useful to select rectangular and squared submatrices as follows:  $K_{\mathcal{C}} = KC$ and $K_{\mathcal{C}\mathcal{C}} = C^\top K C$.
Then, the $n\times n$ kernel matrix $K$ is approximated by a low rank Nystr\"om approximation 
\begin{equation}L(K,\mathcal{C}) = K_{\mathcal{C}}K_{\mathcal{C}\mathcal{C}}^{-1} K_{\mathcal{C}}^\top,\label{eq:Nystr}
\end{equation}
which involves inverting $K_{\mathcal{C}\mathcal{C}}$.
However, this submatrix can be  ill-conditioned. In practice, this happens especially when $\mathcal{C}$ is sampled uniformly at random and for the Gaussian kernel $k(x,y)= \exp(-\|x-y\|_2^2/\sigma^2)$, that is used in this paper.
We argue here that a sampling of $\mathcal{C}$ which yields a good performance is closely related to the \emph{diversity} of the set of landmarks. In this work, the diversity of $\mathcal{C}$ is measured by the value of $\det(K_{\mathcal{C}\mathcal{C}})$. This is intuitively understood thanks to the connection between determinant and volume \cite{KuleszaT12}. We claim that selecting a diverse sample \emph{implicitly regularizes} the corresponding submatrix. This is illustrated on the \texttt{Housing}  dataset in Figure~\ref{fig:Nystrom} where the Nystr\"om approximation error and condition number of $K_{\mathcal{C}\mathcal{C}}$ are given for several $\mathcal{C}$ of identical cardinality, associated with different diversities. Additional technical details are given in Section~\ref{sec:experiments}. The empirical findings described in Section~\ref{sec:experiments} indicate that {Nystr\"om-based} kernel methods are improved if the landmarks are both diverse and yield an accurate kernel approximation. To illustrate this statement, we use a greedy swapping algorithm, namely Algorithm~\ref{AlgGreedySwap} (blue line in Figure~\ref{fig:Nystrom}), which allows for obtaining a sample of landmarks with a given diversity. 
It is worth mentioning that different other methods exist to sample diverse landmarks, such as volume sampling~\cite{deshpande2006matrix}, greedy methods~\cite{CIVRIL},  Determinantal Point Processes~(DPP)~\cite{KuleszaT12}, etc.
The example of Figure~\ref{fig:Nystrom} illustrates the connection between diversity, regularization and Nystr\"om approximation error. Namely, we sample repeatedly subsets of the same size by using Uniform, Ridge Leverage Score (RLS) and DPP sampling (defined hereafter), which yield samples with an increasing diversity. Figure~\ref{fig:Nystrom} shows that the corresponding kernel submatrices have an increasing least eigenvalue, a decreasing Nystr\"om approximation error and a decreasing condition number. This highlights the implicit regularization due to diversity.
	\begin{figure}[h]
		\centering
				\begin{subfigure}[b]{0.32\textwidth}
			\includegraphics[width=\textwidth, height= 0.9\textwidth]{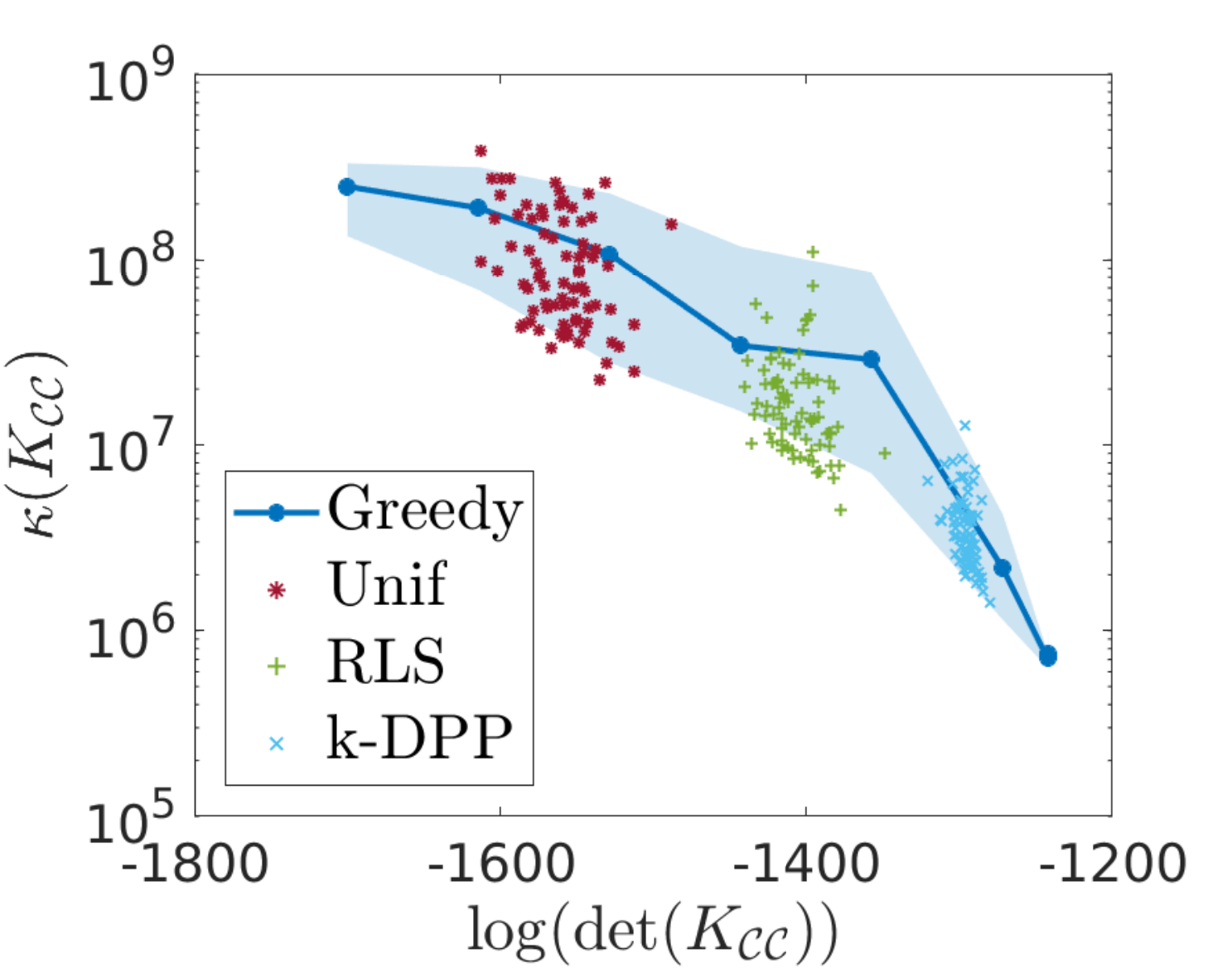}
			\caption{$\kappa(K_{\mathcal{C}\mathcal{C}})$}
		\end{subfigure}
		\begin{subfigure}[b]{0.32\textwidth}
			\includegraphics[width=\textwidth, height= 0.9\textwidth]{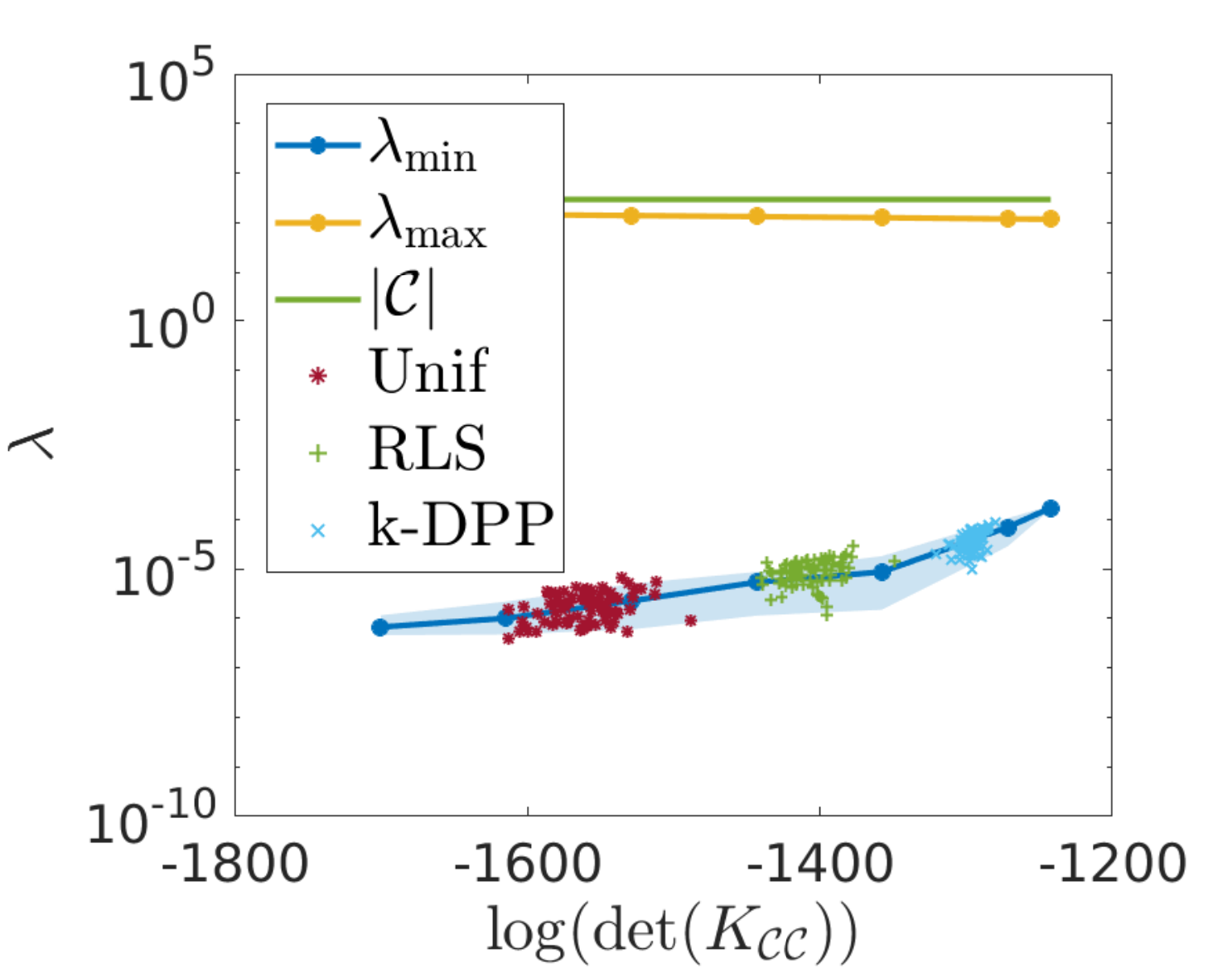}
			\caption{$\lambda_{\min}(K_{\mathcal{C}\mathcal{C}})$}
		\end{subfigure}
		\begin{subfigure}[b]{0.32\textwidth}
			\includegraphics[width=1\textwidth, height= 0.9\textwidth]{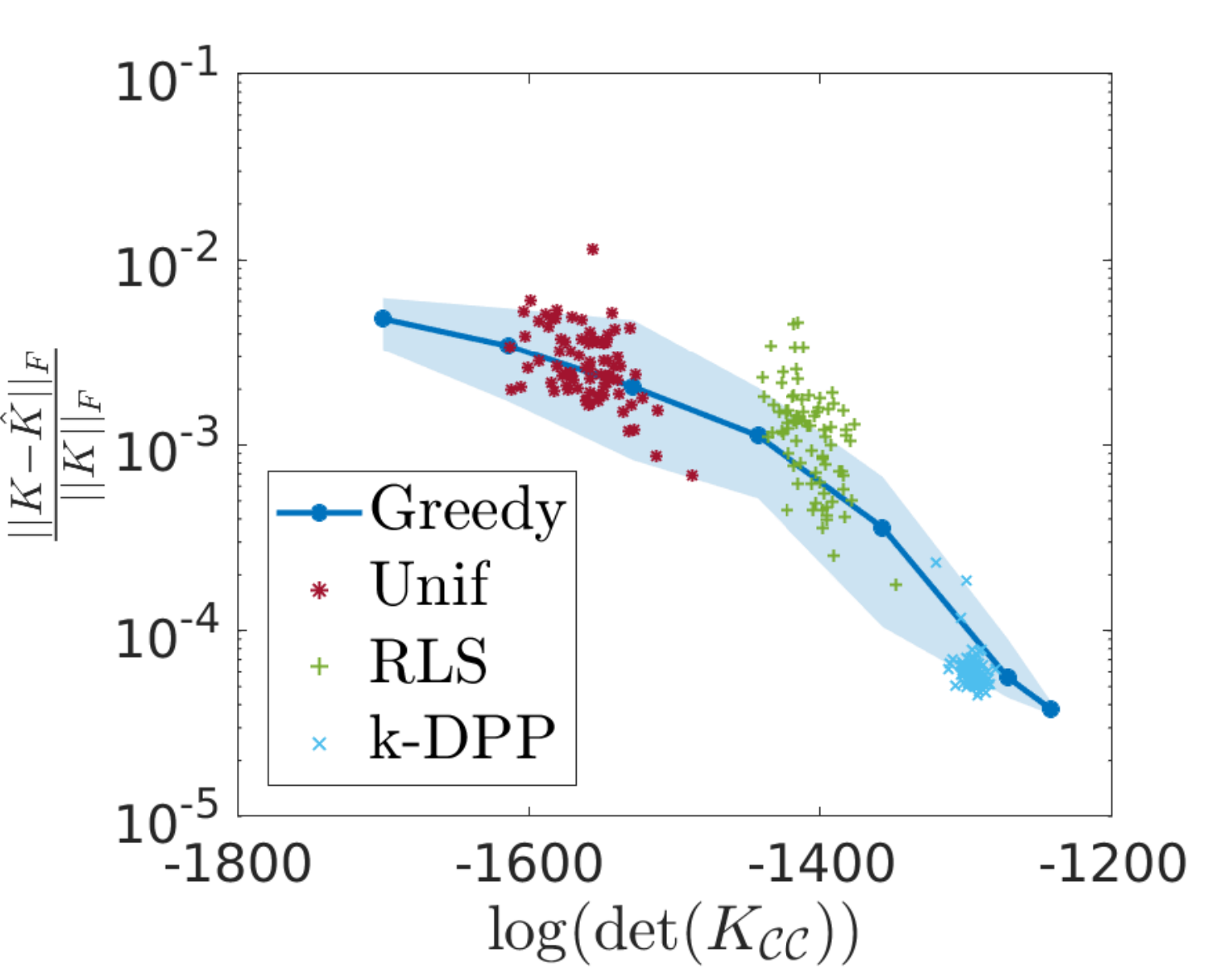}
			\caption{Approximation error}
		\end{subfigure}
	\caption{Nystr\"om approximation for the \texttt{Housing} dataset with parameters given in Table~\ref{Table:data}. The condition number, smallest and largest eigenvalues of $K_{\mathcal{C}\mathcal{C}}$, relative Frobenius norm of the approximation error of $\hat{K} = L(K,\mathcal{C})$ are plotted  versus the sample diversity. The larger $\mathrm{det}(K_{\mathcal{C}\mathcal{C}})$, the more diverse the subset. Error bars are standard deviations over 10 simulations.}\label{fig:Nystrom}
\end{figure}
Among diverse sampling methods, DPPs provide a natural probabilistic framework for diversity sampling. Their elegant definition allows to derive results formalizing our empirical observations about the interplay between regularization and diversity. Let us briefly define them in the simplest setting, while a more complete overview can be found in~\cite{KuleszaT12}.
\subsection{DPP sampling}

Let $L$ be a $n\times n$ positive definite symmetric matrix, called L-ensemble. Then, the probability that a subset $\mathcal{C}\subseteq [n]$ is sampled is defined as follows
\[
\Pr(Y = \mathcal{C}) = \det(L_{\mathcal{C}\mathcal{C}})/\det(\mathbb{I}+L).
\]
In this paper, we define $L = K/\alpha$ with $\alpha>0$ and denote the associated process  $DPP_L(K/\alpha)$.
Classically, an alternative viewpoint deals with the inclusion probabilities as given by
$
\Pr( \mathcal{C}\subseteq Y) = \det(P_{\mathcal{C}\mathcal{C}}),
$
where
\begin{equation}
P = K(K+\alpha\mathbb{I})^{-1},\label{eq:P}
\end{equation} is the marginal kernel  associated $L$-ensemble $L = K/\alpha$.
The diagonal of this soft projector matrix~\eqref{eq:P} yields the so-called Ridge Leverage Scores (RLS) of the data points:
\[\bm{\ell}_i = P_{ii} \text{ for } i\in [n],\] which have been used in order to sample landmarks points in various works~\cite{MuscoMusco,ElAlaouiMahoney,Bach2013} in the context of Nystr\"om approximations. RLS can be considered as a measure of importance or `outlierness' of a data point. The sum of the RLS yields the effective dimension $d_{\rm eff}(K/\alpha)$ which is also the expected size $d_{\rm eff}(K/\alpha) = \mathbb{E}_{\mathcal{C}}[|\mathcal{C}|]$ if $\mathcal{C}\sim DPP_L(K/\alpha)$.
Since the subset size $|\mathcal{C}|$ in itself also a random variable, it is also customary to use $k$-DPPs which are DPPs conditioned on a given subset size $k$. (see also \cite{Fastdpp}).

The following two sections motivate the impact of the regularity of $K_{\mathcal{C}\mathcal{C}}$ in two applications. Firstly, a better kernel approximation yields an improvement of the performance of unsupervised kernel methods such as Kernel Principal Component Analysis (KPCA)~\cite{scholkopf1998nonlinear, sterge2019gain} and Kernel $k$-means~\cite{wang2019scalable}. Secondly, the conditioning of $K_{\mathcal{C}\mathcal{C}}$ is also important for large-scale supervised learning methods -- based on Nystr\"om approximation -- as the convergence and accuracy of iterative solvers depends often of the condition number. Finally, sampling with a diverse method spread the points more over the full dataset. This is especially important for accuracy in less populated or `outlying' regions in the dataset. We now give a short overview of how the Nystr\"om approximation is used to speed up kernel PCA and  kernel ridge regression. 
\subsection{Kernel PCA}
The Nystr\"om method is used to develop a more computationally efficient approximate kernel PCA algorithm~\cite{sterge2019gain}. Let $\mathcal{H}$ be the Reproducing Kernel Hilbert Space associated to $k$ and assume that the data is sampled from a distribution $\mathbb{\rho}$ such that $\mathbb{E}_{X\sim\rho}[f(X)] = 0$ for all $f\in \mathcal{H}$. We recall that Kernel PCA is a principal component  analysis in a RKHS, i.e. it consists in finding the directions of maximum variance. Indeed, let $k_{x_i}(\cdot) = k(x_i,\cdot )$ and let the empirical covariance operator $C = \frac{1}{n}\sum_{i=1}^{n} k_{x_i}\otimes k_{x_i}$. Also, we define the subspace $\mathcal{H}_{\mathcal{C}} =  {\rm span}\{ k_{x_i}\text{ s.t. }  i\in \mathcal{C} \}$.
Then, the optimization problem 
\[
\sup_{f\in \mathcal{H}}\langle f, C f\rangle_{\mathcal{H}} \text{ s.t. } \|f\|_{\mathcal{H}} = 1 \text{ and } f\in \mathcal{H}_{\mathcal{C}},
\]
corresponds to a Nystr\"om approximation of KPCA if $\mathcal{C} \subset [n]$. The empirical estimation of KPCA involves the eigendecomposition of the matrix \[M =K_{\mathcal{C} \mathcal{C}}^{-1 / 2} K_{\mathcal{C}}^\top K_{\mathcal{C}} K_{\mathcal{C}\mathcal{C}}^{-1 / 2},\] sharing its non-zero eigenvalues with~\eqref{eq:Nystr}. Let $(\hat{\lambda}_{\ell, \mathcal{C}}, \bm{u}_{\ell})_{\ell=1}^{|\mathcal{C}|}$ be eigenpairs of $\frac{1}{n} M$ sorted in descending order. KPCA aims to construct the orthogonal projector $P_{\mathcal{C}} = \sum_{\ell=1}^{c} \bm{u}_{\ell}\bm{u}_{\ell}^{\top}$ on the subspace corresponding to the $c$ largest eigenvalues, with $c\leq |\mathcal{C}|$. Clearly, the conditioning of $K_{\mathcal{C}\mathcal{C}}$ is important for this task in view of the definition of $M$.
The reconstruction error for $c$ components assesses the quality of the approximation and is given by:
\begin{equation}
\hat{R}\left(P_{\mathcal{C}}\right)=\frac{1}{n} \operatorname{tr}(K)-\frac{1}{n} \sum_{\ell=1}^{c} \bm{u}_{\ell}^{\top} M \bm{u}_{\ell}=\sum_{\ell=1}^{n} \hat{\lambda}_{\ell}-\sum_{\ell=1}^{c} \hat{\lambda}_{\ell, \mathcal{C}},
\end{equation}
where $(\hat{\lambda}_{\ell}, \bm{v}_{\ell})_{\ell=1}^{n}$  are eigenpairs of $\frac{1}{n} K$. A small reconstruction error is then achieved thanks to an accurate Nystr\"om approximation as detailed in Section~\ref{sec:UnsupK}. 

\subsection{Regression}
In approximate Kernel Ridge Regression (KRR), the regressor is obtained from given input-output pairs $\{(x_i,y_i)\in \mathbb{R}^d \times \mathbb{R}\}_{i\in [n]}$ by solving
\begin{equation*}
\label{eq:ridgeRegression}
f^\star=  \arg\min_{f\in \mathcal{H}_{\mathcal{C}}}\frac{1}{n}\sum_{i=1}^{n}(y_i-f(x_i))^2 + \gamma \|f\|_{\mathcal{H}}^2, \text{ with } \gamma>0,
\end{equation*}
where $\mathcal{H}_{\mathcal{C}} = {\rm span} \{ k(x_i, \cdot)|i\in \mathcal{C}\}$ and $\mathcal{C}\subseteq [n]$. The regressor is $f^\star (\cdot) = \sum_{i\in \mathcal{C}}\alpha^\star_i k(x_i,\cdot)$  with  
\begin{equation}
 \bm{\alpha}^\star = (K_{\mathcal{C}}^\top K_{\mathcal{C}} + n \gamma K_{\mathcal{C}\mathcal{C}})^{-1}K_{\mathcal{C}}^\top  \bm{y}.\label{eq:SmallLS}
\end{equation}
The condition number of~\eqref{eq:SmallLS} crucially depends on the magnitude of the least eigenvalue of $K_{\mathcal{C}\mathcal{C}}$ which plays the role of regularization term. Notice that the full KRR is simply obtained by replacing $\mathcal{H}_{\mathcal{C}}$ by $\mathcal{H}$ in~\eqref{eq:ridgeRegression}.
\paragraph{Stability of the expected risk} The first consequence of an accurate Nystr\"om approximation is that the expected risk of approximate KRR is upper bounded by a controllable constant as it is explained in Section~\ref{sec:KRR}. This means that the training problem can not be dramatically affected by the Nystr\"om approximation.
\paragraph{Uniform test error} A second consequence is directly related to the diversity of the landmarks and is illustrated in Figure~\ref{fig:ToyExampleRegression}. Namely, Figure \ref{fig:ToyExampleRegression} shows the training dataset which consists  of 1000 sampled points generated as follows: $\bm{x}_i \sim \mathcal{N}(\bm{0},\mathbb{I})$ and $y_i= \bm{e}_1^\top \bm{x}_i + b + \epsilon_i$ with iid  $\epsilon_i\sim \mathcal{N}(0,\sigma^2)$, with $\sigma = 0.1$ and $b= 20$. The corresponding $y_i$ values are visualized by the color of the points. Landmarks are then sampled by using uniform and DPP sampling. Uniform sampling oversamples the dense parts, while a diverse sampling algorithm samples spreads the points over the full dataset. A kernel ridge regressor with Gaussian kernel is trained by using \eqref{eq:ridgeRegression}, where the optimal regularization parameter $\gamma>0$ is determined using cross-validation. Figure \ref{fig:binwisePrediction} shows the Mean Absolute Percentage Error (MAPE) in function of the ridge leverage scores of the test set, where each dot corresponds to the MAPE in the corresponding bin of the histogram. This stratification of the dataset allows to visualize how the regressor performs in dense (small RLS) and sparser (large RLS) groups of the dataset. 
 \begin{figure}[H]
 	\centering
 	\begin{subfigure}[t]{0.32\textwidth}
 		\includegraphics[width=\textwidth, height= 0.85\textwidth]{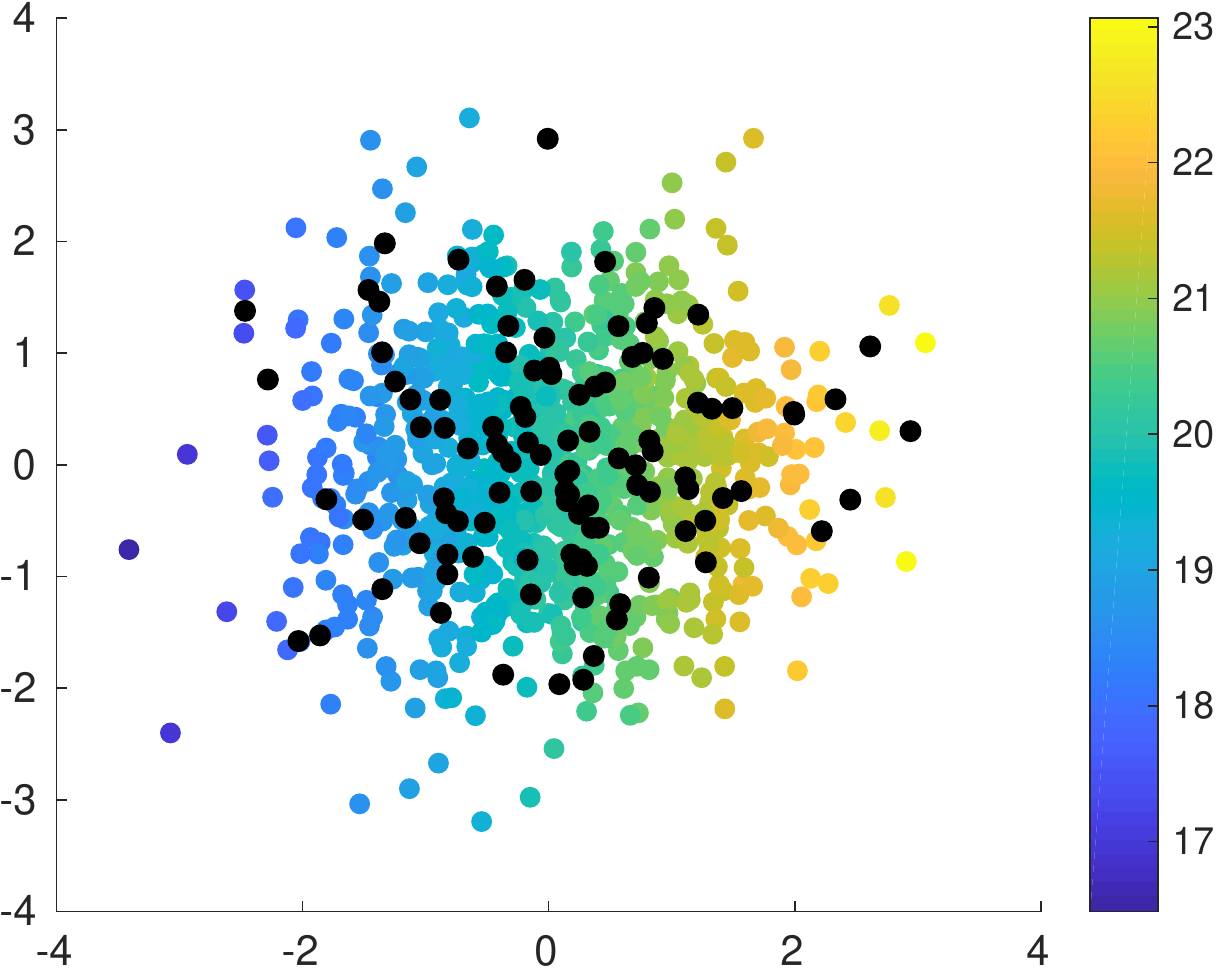}
 		\caption{Uniform sampling.}
 	\end{subfigure}
 	\begin{subfigure}[t]{0.32\textwidth}
 		\includegraphics[width=\textwidth, height= 0.85\textwidth]{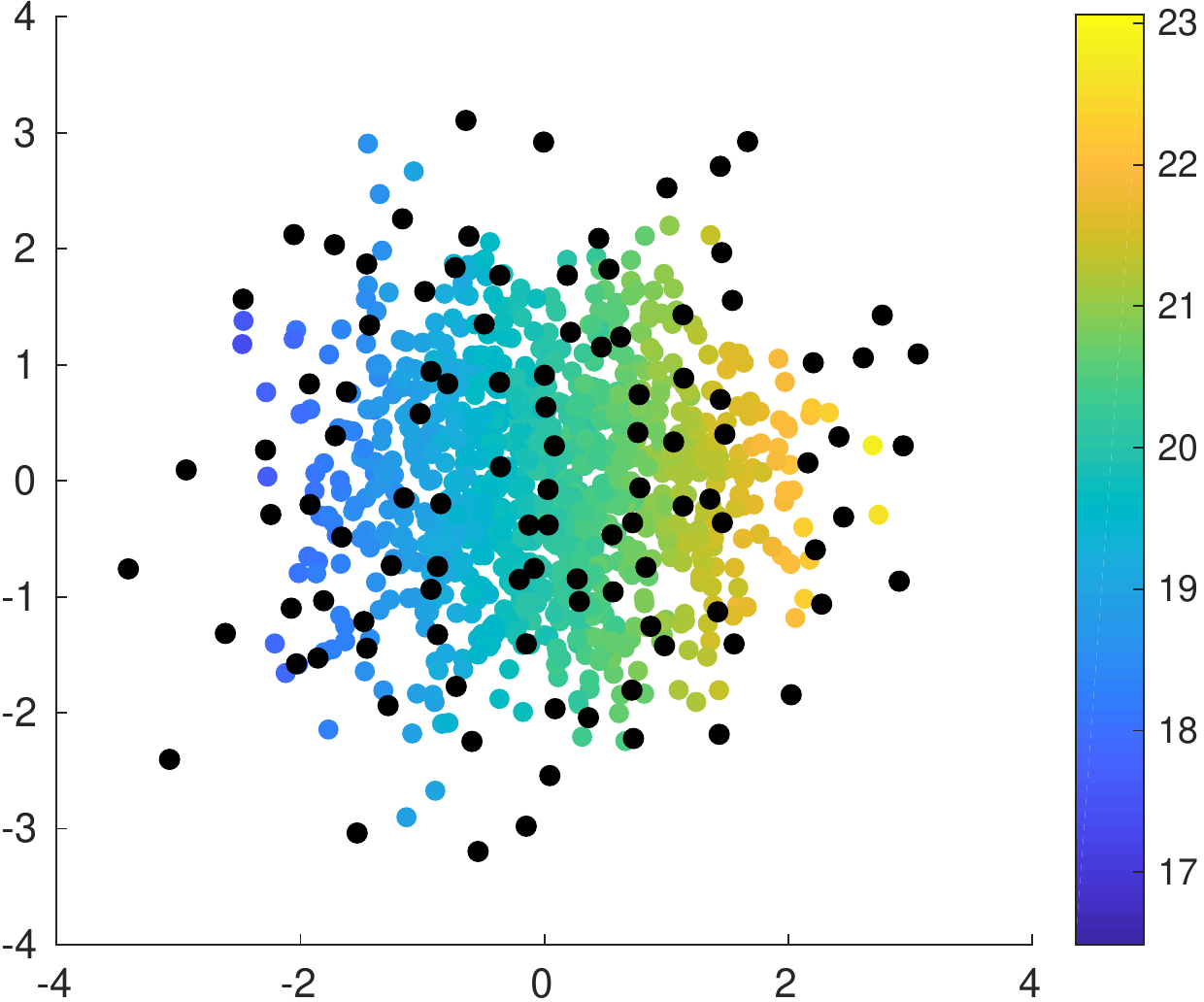}
 		\caption{k-DPP sampling.}
 	\end{subfigure}
 	\begin{subfigure}[t]{0.32\textwidth}
 		\includegraphics[width=\textwidth, height= 0.9\textwidth]{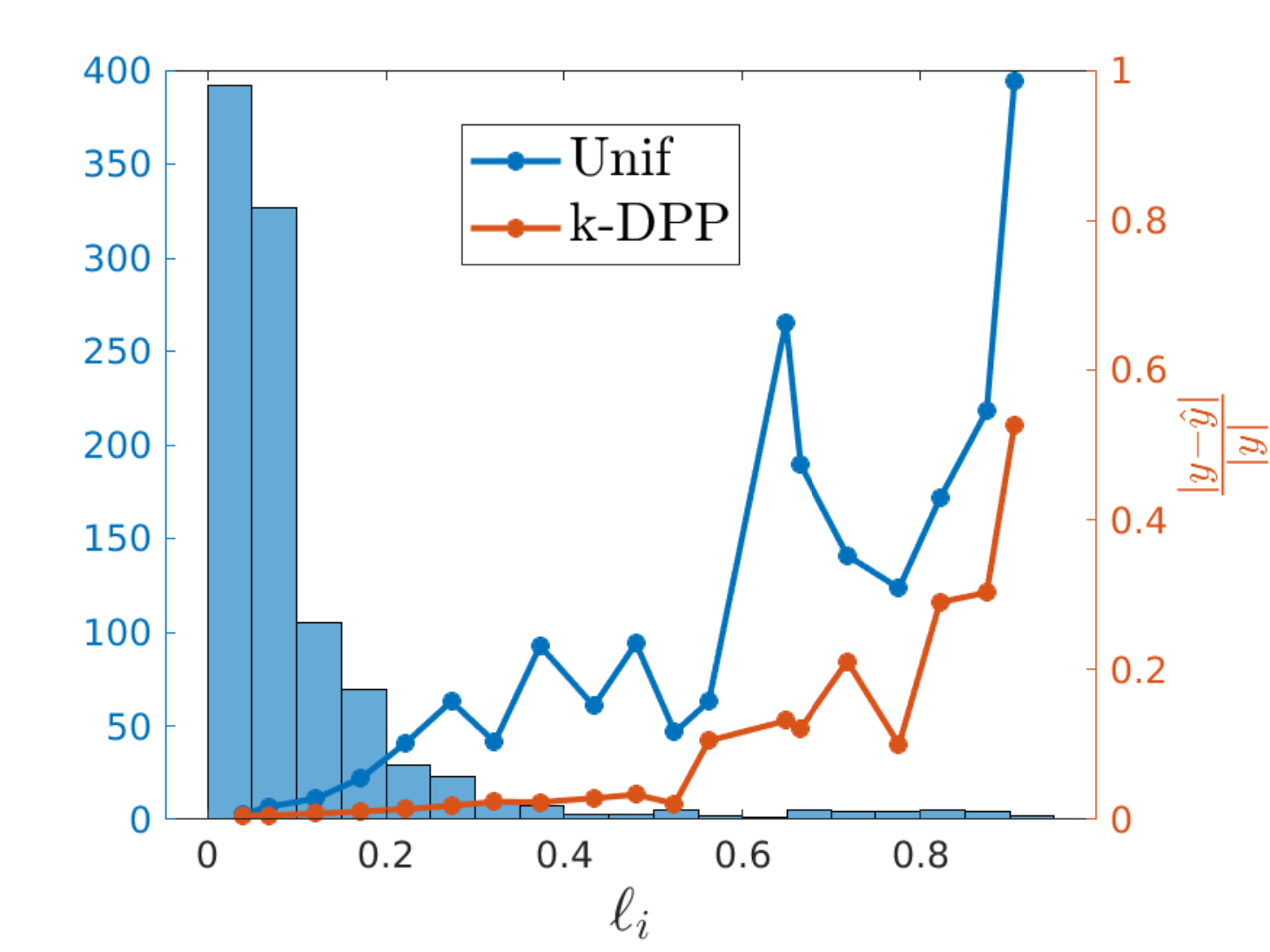}
 		\caption{RLS and binned MAPE.}
 		\label{fig:binwisePrediction}
 	\end{subfigure}	
 	\caption{Toy example of regression. In \ref{fig:binwisePrediction}, a histogram of RLS distribution and the MAPE test error in each bin are displayed.\label{fig:ToyExampleRegression}}
 \end{figure}
 Diverse sampling has a consistently better performance than uniform sampling, where the difference is more apparent for high leverage scores. This is especially important when sampling from datasets with long tail RLS distributions. Hence, in the case of diverse sampling, we emphasize that the percentage error is more uniform on the support of the dataset while the regressor makes a smaller error on points with larger leverage score compared to regressors obtained with uniform sampling, while the total MAPE shows only a minor difference.  Additional illustrations of this effect are given in Section~\ref{sec:experiments}. 
 %Notice that the regularization parameter $\gamma = 10^{-4}$ used for computing the test Ridge Leverage Scores is chosen such that the RLS distribution has a long tail.
Naturally, diverse sampling is less important if there is no long tail of in the RLS distribution.\\

We now want to emphasize why diverse sampling is important, especially in stratified datasets. Recently, there has been a lot of interest in not only predicting well in the majority of the data, but also for specific outlying points~\cite{valverde2014100,oakden2019hidden,chen2019slice}. These outlying points can e.g. correspond to serious diseases in a medical dataset, being less common than mild diseases. Incorrectly classifying these outliers could lead to significant harm to patients. The performance in these subpopulations is often overlooked. This is because aggregate performance measures such as MSE or sensitivity can be dominated by larger subsets, obscuring the fact that there may be an unidentified subset of cases where the performance is poor. These stratifications often occur in datasets with a long tail, i.e. the data distribution of each class is viewed as a mixture of distinct subpopulations~\cite{feldman2019does}. For example, images of dogs include different species photographed from different perspectives and under different conditions (such as close-ups, in the woods and during the rain). A long-tailed mixture distribution will have some subpopulations from which just a few or only a single one example was observed. When using sampling algorithms, it is therefore necessary to select points out of each subpopulation to achieve close-to-optimal generalization error. One expects that, before seeing the dataset, the learning algorithm does not know the frequencies of subpopulations and may not be able to predict accurately on a subpopulation without observing any examples from it. By making sure the selected subset is \emph{diverse} enough, there is a higher chance of every subpolation being included in the sample. In~\cite{feldman2019does}, it is argued that datasets with long tails are a possible reason why interpolating models or models that achieve zero error rate on the training data, can still generalize~\cite{belkin2018understand,liang2018just,belkin2018overfitting}. These hidden stratifications motivate the search for better loss functions. We therefore propose an unsupervised approach, where the loss function is determined on two parts of the data: the bulk and tail of the data. The bulk and tail of the data correspond to points with low and high outlyingness respectively, where the outlyingness is measured by the ridge leverage scores. By splitting the loss function into two parts, one can identify if the model is not only focusing on the majority data but also performing well in `outlying' subpopulations.
 
The rest of the paper includes theoretical results in Section~\ref{sec:main}, and numerical experiments in Section~\ref{sec:experiments}. The proofs and dataset description are given in appendix. Another application, namely kernel k-means, and additional numerical experiments can be found in supplementary material.
\section{Main results: Implicit regularization}
\label{sec:main}
On expectation, the largest and smallest eigenvalues of several matrices obtained by DPP sampling can be bounded, showing indeed that the spectrum of those submatrices are likely to be under control. This is formalized in Theorem~\ref{Thm1}, where we denoted by $\circ$ the entry-wise product between matrices.
\begin{theorem}[Implicit regularization]\label{Thm1}
Let $\mathcal{C}\sim DPP_L(K/\alpha)$  and let $C$ be a sampling matrix associated to the set $\mathcal{C}$. Then, we have
\[
\mathbb{E}_{C}\left[C K_{\mathcal{C}\mathcal{C}}^{-1}  C^\top\right]= (K +\alpha\mathbb{I} )^{-1},
 \text{ and }
 \mathbb{E}_{C}\left[C K_{\mathcal{C}\mathcal{C}} C^\top\right]= P_{(2)}\circ K ,
\]
with $P_{(2)} = \Diag(\bm{\ell}) + \bm{\ell}\bm{\ell}^\top-P\circ P \succeq 0$.
\end{theorem}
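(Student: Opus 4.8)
The plan is to handle the two identities separately, reducing each to a closed form for DPP correlation functions, and then to obtain positive semidefiniteness through a probabilistic representation rather than by a direct spectral argument.

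For the first identity, I would begin by noting that $C K_{\mathcal{C}\mathcal{C}}^{-1} C^\top$ is simply the $n\times n$ matrix obtained by placing the inverse submatrix $K_{\mathcal{C}\mathcal{C}}^{-1}$ in the rows and columns indexed by $\mathcal{C}$ and zeros elsewhere, so its $(i,j)$ entry equals $(K_{\mathcal{C}\mathcal{C}}^{-1})_{ij}$ when $i,j\in\mathcal{C}$ and $0$ otherwise. Since $K$ is strictly positive definite, $L_{\mathcal{C}\mathcal{C}}=K_{\mathcal{C}\mathcal{C}}/\alpha$ is invertible for every nonempty $\mathcal{C}$ (the empty set contributes a vacuous $0\times 0$ block). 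The cleanest route is then a log-determinant derivative trick. Starting from the principal-minor expansion $\det(\mathbb{I}+L)=\sum_{\mathcal{C}}\det(L_{\mathcal{C}\mathcal{C}})$ and differentiating with respect to the entry $L_{ij}$, Jacobi's formula gives $\partial_{L_{ij}}\det(\mathbb{I}+L)=\det(\mathbb{I}+L)\,((\mathbb{I}+L)^{-1})_{ji}$ on the left, while on the right only subsets containing both $i$ and $j$ survive, each contributing its cofactor $\det(L_{\mathcal{C}\mathcal{C}})(L_{\mathcal{C}\mathcal{C}}^{-1})_{ji}$. Dividing by $\det(\mathbb{I}+L)$ turns the right-hand side into $\mathbb{E}_{\mathcal{C}}[(C L_{\mathcal{C}\mathcal{C}}^{-1}C^\top)_{ji}]$, so $\mathbb{E}_{\mathcal{C}}[C L_{\mathcal{C}\mathcal{C}}^{-1}C^\top]=(\mathbb{I}+L)^{-1}$; substituting $L=K/\alpha$, using $C L_{\mathcal{C}\mathcal{C}}^{-1}C^\top=\alpha\, C K_{\mathcal{C}\mathcal{C}}^{-1}C^\top$ and $(\mathbb{I}+K/\alpha)^{-1}=\alpha(K+\alpha\mathbb{I})^{-1}$ then yields the claim.

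For the second identity I would compute the expectation entrywise. Because $C K_{\mathcal{C}\mathcal{C}} C^\top=\Diag(\mathbf{1}_{\mathcal{C}})\,K\,\Diag(\mathbf{1}_{\mathcal{C}})$, its $(i,j)$ entry is $K_{ij}\,\mathbf{1}[i\in\mathcal{C}]\,\mathbf{1}[j\in\mathcal{C}]$, and taking expectations gives $K_{ij}\Pr(\{i,j\}\subseteq Y)$. The DPP inclusion probabilities via the marginal kernel then give $\Pr(\{i,j\}\subseteq Y)=\det(P_{\{i,j\}\{i,j\}})=\bm{\ell}_i\bm{\ell}_j-P_{ij}^2$ for $i\neq j$ and $\Pr(i\in Y)=\bm{\ell}_i$ for $i=j$. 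The only thing to check is that these two cases assemble exactly into $P_{(2)}\circ K$: the off-diagonal of $\bm{\ell}\bm{\ell}^\top-P\circ P$ is $\bm{\ell}_i\bm{\ell}_j-P_{ij}^2$, while its diagonal vanishes because $P_{ii}=\bm{\ell}_i$ forces $\bm{\ell}_i^2-P_{ii}^2=0$, so adding $\Diag(\bm{\ell})$ restores precisely the diagonal $\bm{\ell}_i$; Hadamard-multiplying by $K$ then matches both cases.

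Finally, for $P_{(2)}\succeq 0$, instead of fighting the indefinite term $-P\circ P$ directly, I would exploit the probabilistic form just uncovered. Letting $\bm{\xi}\in\{0,1\}^n$ be the indicator vector of $\mathcal{C}$, we have $\xi_i\xi_j=\mathbf{1}[\{i,j\}\subseteq Y]$ and $\xi_i^2=\xi_i$, so the matrix $\mathbb{E}[\bm{\xi}\bm{\xi}^\top]$ has exactly the entries of $P_{(2)}$; since each outer product $\bm{\xi}\bm{\xi}^\top$ is positive semidefinite, so is its expectation. I expect the main obstacle to be the first identity, specifically getting the signs and the within-subset indexing of the cofactor expansion correct and verifying the partial-derivative identity cleanly at a symmetric $L$; the correlation-function computation and the semidefiniteness argument are then essentially bookkeeping.
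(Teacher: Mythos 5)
Your proposal is correct, and it differs from the paper's proof only in the first identity and in one welcome addition. For the second identity you do exactly what the paper does: compute $\mathbb{E}_{\mathcal{C}}[1_{i\in\mathcal{C}}1_{j\in\mathcal{C}}]$ from the marginal kernel ($\bm{\ell}_i$ on the diagonal, $\det(P_{\{i,j\}\{i,j\}})=\bm{\ell}_i\bm{\ell}_j-P_{ij}^2$ off it) and assemble the result entrywise into $P_{(2)}\circ K$. For the first identity, the paper instead uses the matrix determinant lemma: it writes $\bm{u}^{\top}_{\mathcal{C}} K_{\mathcal{C}\mathcal{C}}^{-1}\bm{w}_{\mathcal{C}} = 1-\det(K_{\mathcal{C}\mathcal{C}}-C^\top \bm{w}\bm{u}^{\top}C)/\det(K_{\mathcal{C}\mathcal{C}})$, averages over $\mathcal{C}$ using $\sum_{\mathcal{C}}\det(A_{\mathcal{C}\mathcal{C}})=\det(\mathbb{I}+A)$, and then applies the same lemma in reverse to recognize $\bm{u}^\top(\mathbb{I}+K)^{-1}\bm{w}$. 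Your route differentiates that same principal-minor identity entrywise via Jacobi's formula; since a rank-one update of a determinant is affine in the perturbation, your derivative and the paper's finite update carry identical information, so the two arguments are two faces of the same determinant identity. What yours buys is that it avoids carrying the auxiliary vectors $\bm{u},\bm{w}$ and the bordered matrix; the price is the symmetric-differentiation subtlety you flagged, which is resolved cleanly by observing that the principal-minor expansion is a polynomial identity in $n^2$ independent entries, so you may differentiate first and only then evaluate at the symmetric point $L=K/\alpha$. Finally, your proof that $P_{(2)}=\mathbb{E}_{\mathcal{C}}[\bm{\xi}\bm{\xi}^\top]\succeq 0$ (an expectation of rank-one positive semidefinite matrices) is a genuine addition: the theorem statement asserts $P_{(2)}\succeq 0$, but the paper's appendix proof never establishes it, and your probabilistic one-liner is exactly the right way to close that gap.
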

Notice that $P_{(2)}$ is positive semi-definite. 
Furthermore, if $\diag(K) = \bm{1}$ as in the case of the Gaussian kernel,  the largest eigenvalue
\begin{equation}
\lambda_{\max}(\mathbb{E}_{C}\left[C K_{\mathcal{C}\mathcal{C}} C^\top\right])\leq \lambda_{\max}(P_{(2)}) \leq  \|\bm{\ell}\|_{\infty} + \|\bm{\ell}\|^2_{2},\label{eq:upperLambda}
\end{equation} is bounded in terms of the leverage scores.
This is a direct consequence of a Corollary~2 in \cite{BAPAT1985107}, namely the spectrum of $A\circ B$ is majorized by the spectrum of $A$ if $A$ and $B$ are symmetric and positive semidefinite with $\diag(B) = \bm{1}$.
It is noticeable that the largest eigenvalue of the expected kernel submatrix is under control for DPP sampling. Indeed, other sampling schemes are not known to yield similar guarantees. Again, if $\diag(K) = \bm{1}$, the trace of $\mathbb{E}_{\mathcal{C}}\left[C^\top K_{\mathcal{C}\mathcal{C}} C\right])$  is the expected size of the sample,  $d_{\rm eff}(K/\alpha)$, which gives then another an upper bound for $\lambda_{\max}(\mathbb{E}_{C}\left[C K_{\mathcal{C}\mathcal{C}}C^\top\right])$. We observe empirically that the latter yields a much larger upper bound compared to~\eqref{eq:upperLambda}.
Importantly, the scale parameter  $\alpha>0$ both controls the size of the sample and regularizes the subkernels matrix in the following sense:
\[
\lambda_{\max}(\mathbb{E}_{\mathcal{C}}\left[C K^{-1}_{\mathcal{C}\mathcal{C}} C^\top\right])\leq \alpha^{-1}.
\]
These results on expectation can be instructive since we expect concentration about the mean. Indeed, Permantle and Peres showed that  strong Rayleigh measures -- generalizing DPPs -- obey Gauss-Poisson concentration bounds~\cite{pemantle2014concentration}. Corollary~\ref{corol2} is then a direct consequence of that concentration result. For convenience, we write $\bm{w}_{\mathcal{C}} = C^\top \bm{w}$, where $C$ is the sampling matrix associated to $\mathcal{C}\subseteq [n]$.
\begin{corollary}[Regularization with high probability]\label{corol2}
Let $\mathcal{C}\sim DPP_L(K/\alpha)$ and $\bm{w}\in \mathbb{R}^{n}$ such that $\|\bm{w}\|_2 = 1$. Then, we have 
\[
|\bm{w}^{\top}_{\mathcal{C}} K_{\mathcal{C}\mathcal{C}}^{-1}  \bm{w}_{\mathcal{C}}-\bm{w}^{\top} (K +\alpha\mathbb{I} )^{-1}  \bm{w}|\leq \frac{\sqrt{48n\log(\frac{5}{\delta})}}{\lambda_{\min}(K)}.
\]
with probability at least $1-\delta$.
\end{corollary}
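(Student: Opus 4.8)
The plan is to recognize the left-hand side as the deviation of a single random functional from its expectation, and then to concentrate it using the strong Rayleigh property of the DPP. Concretely, set $f(\mathcal{C}) = \bm{w}^{\top}_{\mathcal{C}} K_{\mathcal{C}\mathcal{C}}^{-1} \bm{w}_{\mathcal{C}} = \bm{w}^{\top} C K_{\mathcal{C}\mathcal{C}}^{-1} C^{\top} \bm{w}$. Taking expectations and pulling the deterministic vector $\bm{w}$ outside, Theorem~\ref{Thm1} gives $\mathbb{E}_{C}[f(\mathcal{C})] = \bm{w}^{\top}\mathbb{E}_{C}[C K_{\mathcal{C}\mathcal{C}}^{-1} C^{\top}]\bm{w} = \bm{w}^{\top}(K+\alpha\mathbb{I})^{-1}\bm{w}$. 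Hence the quantity to be bounded is exactly $|f(\mathcal{C}) - \mathbb{E}_{C}[f(\mathcal{C})]|$, and it suffices to prove a concentration inequality for $f$ around its mean.

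First I would control the range of $f$. Since $K_{\mathcal{C}\mathcal{C}}$ is a principal submatrix of $K$, Cauchy interlacing yields $\lambda_{\min}(K_{\mathcal{C}\mathcal{C}}) \geq \lambda_{\min}(K)$, so $\|K_{\mathcal{C}\mathcal{C}}^{-1}\|_2 \leq 1/\lambda_{\min}(K)$; together with $\|\bm{w}_{\mathcal{C}}\|_2 \leq \|\bm{w}\|_2 = 1$ this gives $0 \leq f(\mathcal{C}) \leq 1/\lambda_{\min}(K)$ for every $\mathcal{C}$. (One may also note that $f$ is monotone nondecreasing in $\mathcal{C}$, since $f(\mathcal{C}) = \max_{\bm{v}\,:\,\mathrm{supp}(\bm{v})\subseteq\mathcal{C}}\{2\bm{v}^{\top}\bm{w} - \bm{v}^{\top}K\bm{v}\}$ is a maximum over a feasible set that grows with $\mathcal{C}$, but only the range bound is needed.) Because any two values of $f$ lie in the same interval of length $1/\lambda_{\min}(K)$, flipping the membership of a single index changes $f$ by at most $1/\lambda_{\min}(K)$; equivalently, $g := \lambda_{\min}(K)\, f$ is $1$-Lipschitz with respect to the Hamming distance on subsets and satisfies $g(\emptyset) = 0$ with $g \geq 0$.

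Next I would invoke concentration. A DPP with symmetric positive semidefinite $L$-ensemble $L = K/\alpha$ is a strong Rayleigh measure, so the Gauss--Poisson bound of Pemantle and Peres~\cite{pemantle2014concentration} applies to the $1$-Lipschitz functional $g$. Writing $a = \mathbb{E}_{C}[g]$, their inequality has the shape $\Pr(|g - a| \geq t) \leq 5\exp(-t^{2}/(16(2a+t)))$. Using that $g$ is $1$-Lipschitz and nonnegative with $g(\emptyset) = 0$ gives the crude bound $a \leq n$, and for the relevant range $t \leq n$ (outside which the event is empty, since $|g-a|\leq n$) one gets $2a + t \leq 3n$, hence $\Pr(|g-a| \geq t) \leq 5\exp(-t^{2}/(48 n))$. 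Setting the right-hand side equal to $\delta$ gives $t = \sqrt{48\, n \log(5/\delta)}$, and rescaling by $1/\lambda_{\min}(K)$ (recall $g = \lambda_{\min}(K) f$) converts this into the claimed deviation bound for $f(\mathcal{C}) - \mathbb{E}_{C}[f(\mathcal{C})]$, valid with probability at least $1-\delta$.

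The main obstacle is the bookkeeping of constants in the last step: one must fix the precise form of the Pemantle--Peres inequality, verify that the denominator term in the exponent is genuinely dominated by a multiple of $n$ (yielding the factor $48$ and the prefactor $5$), and confirm that the bounded-differences constant $1/\lambda_{\min}(K)$ holds uniformly over all realizations of $\mathcal{C}$. The conceptual heart is the measure-theoretic input that $DPP_L(K/\alpha)$ is strong Rayleigh, so that the dependence between the indicator variables $\{i \in \mathcal{C}\}$ does not destroy the concentration; everything else is the reduction via Theorem~\ref{Thm1} together with elementary submatrix eigenvalue bounds.
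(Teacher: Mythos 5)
Your proof is correct and follows essentially the same route as the paper: reduce the claim to concentration of $f(\mathcal{C}) = \bm{w}_{\mathcal{C}}^{\top} K_{\mathcal{C}\mathcal{C}}^{-1}\bm{w}_{\mathcal{C}}$ around its mean (identified via Theorem~\ref{Thm1}), bound its Hamming--Lipschitz constant by $1/\lambda_{\min}(K)$ through the range bound and eigenvalue interlacing, and apply the Pemantle--Peres concentration inequality for strong Rayleigh measures to the rescaled $1$-Lipschitz functional. The only difference is that you explicitly derive the simplified bound $5\exp\left(-t^2/(48n)\right)$ from the mean-dependent form of the inequality, whereas the paper quotes that simplified statement directly; this fills in a step the paper takes as given rather than constituting a different argument.
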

A drawback of Corollary~\ref{corol2} is that the bound hereabove depends of the inverse of $\lambda_{\min}(K)$ which may be a large number. The result may be improved by finding a better upper found on the Lipschitz constant of the function $f(\mathcal{C}) = \bm{w}^{\top}_{\mathcal{C}} K_{\mathcal{C}\mathcal{C}}^{-1}  \bm{w}_{\mathcal{C}}$. We refer to the proof of Corollary~\ref{corol2} for more details.
\subsection{Nystr\"om approximation}
The subset obtained thanks to a DPP sampling is not only yielding a regular kernel submatrix, Corollary~\ref{corol3} states that it produces a good Nystr\"om approximation.
It also gives a natural connection between the projector
\[\mathbb{P}_{{\rm range}(K^{1/2}C)}  = K^{1/2}C(K_{\mathcal{C}\mathcal{C}})^{-1}C^\top K^{1/2},
\]
and the marginal kernel~\eqref{eq:P}.
\begin{corollary}[Expected Nystr\"om approximation]\label{corol3}
Let $\mathcal{C}\sim DPP_L(K/\alpha)$. Then, we have an expression for the Nystr\"om error on expectation
\[
\mathbb{E}_{\mathcal{C}}\left[K-L(K,\mathcal{C}) \right] = \alpha P,
\text{ and }\mathbb{E}_{\mathcal{C}}\left[\mathbb{P}_{{\rm range}( K^{1/2}C)}\right]=P.\]
\end{corollary}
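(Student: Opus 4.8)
The plan is to derive both identities directly from Theorem~\ref{Thm1}, since the key fact $\mathbb{E}_C[C K_{\mathcal{C}\mathcal{C}}^{-1} C^\top] = (K+\alpha\mathbb{I})^{-1}$ already packages all of the stochastic content of the problem. The second identity (for the projector) comes out almost immediately, and the first identity (for the Nystr\"om error) then follows from it by a short algebraic manipulation. Throughout I will use repeatedly that $K^{1/2}$, $K$ and $(K+\alpha\mathbb{I})^{-1}$ are all functions of the same symmetric matrix $K$ and therefore mutually commute.

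First I would handle the projector. Since $K^{1/2}$ is deterministic, linearity of expectation gives $\mathbb{E}_{\mathcal{C}}[\mathbb{P}_{{\rm range}(K^{1/2}C)}] = K^{1/2}\, \mathbb{E}_C[C K_{\mathcal{C}\mathcal{C}}^{-1} C^\top]\, K^{1/2}$. Applying Theorem~\ref{Thm1} to the inner expectation yields $K^{1/2}(K+\alpha\mathbb{I})^{-1}K^{1/2}$, and commuting $K^{1/2}$ past $(K+\alpha\mathbb{I})^{-1}$ collapses this to $K(K+\alpha\mathbb{I})^{-1}=P$, which is exactly the claimed marginal kernel~\eqref{eq:P}.

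Next I would reduce the Nystr\"om error to the projector result. Using $K_{\mathcal{C}}=KC$ and the symmetry of $K$, one has $L(K,\mathcal{C}) = KC(K_{\mathcal{C}\mathcal{C}})^{-1}C^\top K$; factoring $K=K^{1/2}K^{1/2}$ on both sides exhibits the Nystr\"om approximation as a conjugated projector, $L(K,\mathcal{C}) = K^{1/2}\,\mathbb{P}_{{\rm range}(K^{1/2}C)}\,K^{1/2}$. Taking expectations and substituting the projector identity just obtained gives $\mathbb{E}_{\mathcal{C}}[L(K,\mathcal{C})] = K^{1/2}PK^{1/2} = K^2(K+\alpha\mathbb{I})^{-1}$, again by commutativity. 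It then remains only to simplify $K - K^2(K+\alpha\mathbb{I})^{-1}$; writing $K = K(K+\alpha\mathbb{I})(K+\alpha\mathbb{I})^{-1}$ and cancelling the $K^2$ terms leaves $\alpha K(K+\alpha\mathbb{I})^{-1} = \alpha P$, as required.

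Given Theorem~\ref{Thm1}, the argument carries essentially no probabilistic content; the only genuine step is the factorization $L(K,\mathcal{C}) = K^{1/2}\mathbb{P}_{{\rm range}(K^{1/2}C)}K^{1/2}$, which makes the Nystr\"om approximation and the random projector two sides of the same object. The main thing to be careful about is the systematic use of the commutativity of functions of $K$ when moving $K^{1/2}$ across $(K+\alpha\mathbb{I})^{-1}$; everything else is linearity of the expectation and elementary matrix algebra.
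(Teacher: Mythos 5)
Your proof is correct and follows exactly the route the paper intends: the paper gives no separate proof of Corollary~\ref{corol3}, treating it as an immediate consequence of Theorem~\ref{Thm1}, and your argument---conjugating the identity $\mathbb{E}_{C}[CK_{\mathcal{C}\mathcal{C}}^{-1}C^\top]=(K+\alpha\mathbb{I})^{-1}$ by $K^{1/2}$ (resp.\ $K$) and simplifying via commutativity of functions of $K$---is precisely the algebra the paper leaves implicit (indeed, the first identity is later used in this form, $\mathbb{E}_{\mathcal{C}}[L(K,\mathcal{C})]=K-\alpha P$, in the proof of Theorem~\ref{thm:projection-cost}).
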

As a straightforward consequence of Corollary~\ref{corol3}, the nuclear norm of the approximation error is simply
$\mathbb{E}_{\mathcal{C}}[\Tr(K-L(K,\mathcal{C}))] = \alpha d_{\rm eff}(K/\alpha)$, since $K\succeq L(K,\mathcal{C})$.
To the best of our knowledge, only a weaker result about the accuracy of the Nystr\"om approximation~\cite{Fastdpp} related to $k$-DPPs exists in the literature.

\subsection{Unsupervised kernel methods \label{sec:UnsupK}}
Theorem~\ref{thm:projection-cost} states that the `distance' of $K$ to a $k$-dimensional subspace is well approximated by the `distance' of $L(K,\mathcal{C})$ to the same subspace, on expectation. An analogous results of RLS sampling can be found in~\cite{MuscoMusco}.

\begin{theorem}[Expected projection-cost preservation]\label{thm:projection-cost}
Let $\mathcal{C}\sim DPP_L(K/\alpha)$ and $X $ an orthogonal projector on a $k$-dimensional subspace. Denote $L =L(K,\mathcal{C})$. Then we have
\[
\Tr(K- X  KX  )\leq \mathbb{E}_{\mathcal{C}}[\Tr(L- X  LX  )] + c(\alpha)\leq \Tr(K- X  KX  ) + \min\{\alpha k,c(\alpha)\},
\]
where $c(\alpha) = \alpha d_{\rm eff}(K/\alpha)$.
\end{theorem}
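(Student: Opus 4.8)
The plan is to reduce the statement to an \emph{exact} computation of the middle expectation, leaving only two elementary bounds on a single nonnegative error term. First I would rewrite both projection costs through the complementary projector $Y = \mathbb{I}-X$. Since $X$ is an orthogonal projector, $X^2 = X = X^\top$, and cyclicity of the trace gives $\Tr(XKX)=\Tr(XK)$ and likewise for $L$, so that $\Tr(K-XKX)=\Tr(YK)$ and $\Tr(L-XLX)=\Tr(YL)$. The target double inequality then reads $\Tr(YK)\leq \mathbb{E}_{\mathcal{C}}[\Tr(YL)]+c(\alpha)\leq \Tr(YK)+\min\{\alpha k,c(\alpha)\}$.

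Next I would compute the middle term exactly using Corollary~\ref{corol3}. By linearity of trace and expectation, $\mathbb{E}_{\mathcal{C}}[\Tr(YL)]=\Tr(Y\,\mathbb{E}_{\mathcal{C}}[L])$, and Corollary~\ref{corol3} gives $\mathbb{E}_{\mathcal{C}}[L]=K-\alpha P$, whence $\mathbb{E}_{\mathcal{C}}[\Tr(YL)]=\Tr(YK)-\alpha\Tr(YP)$. Recalling that $d_{\rm eff}(K/\alpha)=\sum_i \bm{\ell}_i=\Tr(P)$, we have $c(\alpha)=\alpha\Tr(P)$, and since $\Tr(P)-\Tr(YP)=\Tr((\mathbb{I}-Y)P)=\Tr(XP)$, the two quantities collapse to $\mathbb{E}_{\mathcal{C}}[\Tr(YL)]+c(\alpha)=\Tr(YK)+\alpha\Tr(XP)$. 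The entire statement is therefore equivalent to the scalar sandwich $0\leq \alpha\Tr(XP)\leq \min\{\alpha k,c(\alpha)\}$.

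It then remains to bound the single scalar $\alpha\Tr(XP)$. The lower bound is immediate: $X$ and $P$ are positive semidefinite, so $\Tr(XP)\geq 0$. For the upper bound I would use two facts about $P=K(K+\alpha\mathbb{I})^{-1}$, namely that its eigenvalues lie in $[0,1)$, so $0\preceq P\preceq \mathbb{I}$, and that $X$ is a rank-$k$ orthogonal projector. Writing $\Tr(XP)=\Tr(XPX)$ and using $P\preceq \mathbb{I}$ gives $XPX\preceq X$, hence $\Tr(XP)\leq \Tr(X)=k$; separately, $\Tr(P)-\Tr(XP)=\Tr(YP)\geq 0$ gives $\Tr(XP)\leq \Tr(P)$. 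Combining, $\Tr(XP)\leq \min\{k,\Tr(P)\}$, which is exactly the required bound after multiplying through by $\alpha>0$.

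The argument is essentially bookkeeping once Corollary~\ref{corol3} is available; the only genuinely structural input is the spectral bound $P\preceq \mathbb{I}$, which converts the identity for $\mathbb{E}_{\mathcal{C}}[L]$ into the clean $\min\{\alpha k,c(\alpha)\}$ penalty. I expect the main care to be needed in handling the two competing upper bounds on $\Tr(XP)$ simultaneously and in verifying the identification $c(\alpha)=\alpha\Tr(P)$, rather than in any delicate estimate.
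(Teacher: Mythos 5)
Your proof is correct and follows essentially the same route as the paper's: both rest on the expectation identity $\mathbb{E}_{\mathcal{C}}[L(K,\mathcal{C})]=K-\alpha P$ from Corollary~\ref{corol3}, reduce the claim to controlling the single scalar $\alpha\Tr(XPX)$, and bound it by $\min\{\alpha\Tr(X),\alpha\Tr(P)\}=\min\{\alpha k,c(\alpha)\}$ using $P\preceq\mathbb{I}$. The only cosmetic difference is that the paper derives the left-hand inequality from the pointwise domination $L(K,\mathcal{C})\preceq K$ before taking expectations, whereas you obtain it directly from $\Tr(XP)\geq 0$; given the exact identity, the two are interchangeable.
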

A direct application of the above theorem is KPCA. Namely, the projector  $X^\star$ onto the leading $k$ components is obtained by
\begin{equation*}
\min_{X \in \Pi_k}\Tr(K - X K X),%\label{eq:Kmeans}
\end{equation*}
where $\Pi_k$ is the  set of  $n\times n$ projectors of rank $k$. Then, the result Theorem~\ref{thm:projection-cost} is a stability result relating the objective functions of KPCA with and without Nystr\"om approximation.
Empirical experiments can be found in supplementary material.

\subsection{Kernel Ridge Regression \label{sec:KRR}}
A simple consequence of Corollary~\ref{corol3} is that the expected risk of KRR approximated by Nystr\"om method with DPP sampling cannot be arbitrary larger than the risk corresponding to the full KRR.
Namely, let the outputs be $y_i = z_i + \epsilon_i$ where $\epsilon_i$ are iid $\mathcal{N}(0,\sigma^2)$ and let the solution of KRR be  $\hat{\bm{z}}_K = K(K+n\gamma\mathbb{I})^{-1}\bm{y}$. The expected risk is then defined as $\mathcal{R}(\hat{\bm{z}}_{K})=\mathbb{E}_{\epsilon} \|\hat{\bm{z}}_K - \bm{z}\|_2^2.$
Then, we can give a bound on the risk of KRR associated to the Nystr\"om approximation.
\begin{theorem}[Expected risk bound]\label{thm:RiskKRR}
Let $\mathcal{C}\sim DPP_L(K/\alpha)$, then we have
\[
\mathbb{E}_{\mathcal{C}}\left[\sqrt{\frac{\mathcal{R}(\hat{\bm{z}}_{L(K,\mathcal{C})})}{\mathcal{R}(\hat{\bm{z}}_{K})}}\right]\leq 1 + \frac{\alpha}{n\gamma} d_{\rm eff}(K/\alpha).
\]
\end{theorem}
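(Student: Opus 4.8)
My plan is to reduce the statement to a \emph{deterministic} bound, valid for every realization of $\mathcal{C}$, and only then take the expectation using Corollary~\ref{corol3}. Writing $L = L(K,\mathcal{C})$, $\beta = n\gamma$, and $\hat{\bm{z}}_L = L(L+\beta\mathbb{I})^{-1}\bm{y}$, the target conditional inequality is
\[
\sqrt{\frac{\mathcal{R}(\hat{\bm{z}}_{L})}{\mathcal{R}(\hat{\bm{z}}_{K})}}\le 1 + \frac{\Tr(K-L)}{n\gamma},
\]
which I claim holds for any $0\preceq L\preceq K$; once it is established, taking $\mathbb{E}_{\mathcal{C}}$ and using $\mathbb{E}_{\mathcal{C}}[\Tr(K-L)] = \alpha\Tr(P) = \alpha\, d_{\rm eff}(K/\alpha)$ from Corollary~\ref{corol3} (together with $K\succeq L\succeq 0$, also from Corollary~\ref{corol3}) finishes the proof. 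The first step is the bias--variance split: with $H_M = M(M+\beta\mathbb{I})^{-1}$ symmetric and $\bm{y}=\bm{z}+\bm{\epsilon}$, the mean-zero noise (with $\mathbb{E}_\epsilon[\bm\epsilon\bm\epsilon^\top]=\sigma^2\mathbb{I}$) kills the cross term and
\[
\mathcal{R}(\hat{\bm{z}}_M) = \|(\mathbb{I}-H_M)\bm{z}\|_2^2 + \sigma^2\Tr(H_M^2),\qquad \mathbb{I}-H_M = \beta(M+\beta\mathbb{I})^{-1}.
\]
I would then bound the bias and variance summands separately, each by the common factor $C = 1 + \Tr(K-L)/\beta\ge 1$, and conclude $\mathcal{R}(\hat{\bm{z}}_L)\le C^2\mathcal{R}(\hat{\bm{z}}_K)$ by squaring and adding.

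For the variance term I will use that $t\mapsto t/(t+\beta)$ is operator monotone, so $0\preceq L\preceq K$ yields $0\preceq H_L\preceq H_K$; the elementary chain $\Tr(H_L^2)\le \Tr(H_LH_K)\le \Tr(H_K^2)$, valid whenever $0\preceq H_L\preceq H_K$, then shows the variance does not increase and is trivially controlled by $C\ge 1$. For the bias term I will set $\bm{u} = (K+\beta\mathbb{I})^{-1}\bm{z}$ and use the resolvent identity
\[
(L+\beta\mathbb{I})^{-1}\bm{z} = \bm{u} + (L+\beta\mathbb{I})^{-1}(K-L)\bm{u}.
\]
Since $L\succeq 0$ gives $\lambda_{\max}\big((L+\beta\mathbb{I})^{-1}\big)\le\beta^{-1}$ and $K-L\succeq 0$ gives $\lambda_{\max}(K-L)\le\Tr(K-L)$, a triangle inequality produces $\|(L+\beta\mathbb{I})^{-1}\bm{z}\|_2\le C\,\|\bm{u}\|_2$, i.e. $\|(\mathbb{I}-H_L)\bm{z}\|_2\le C\,\|(\mathbb{I}-H_K)\bm{z}\|_2$ after multiplying through by $\beta$. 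Combining the two estimates gives the conditional inequality above.

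The main obstacle is that the seemingly natural triangle inequality $\sqrt{\mathcal{R}(\hat{\bm{z}}_L)}\le\sqrt{\mathcal{R}(\hat{\bm{z}}_K)}+\sqrt{\mathbb{E}_\epsilon\|\hat{\bm{z}}_L-\hat{\bm{z}}_K\|_2^2}$ is too lossy: one checks already in the scalar case with $L=0$ that its right-hand side overshoots $1+\Tr(K-L)/(n\gamma)$ as soon as $n\gamma$ exceeds the eigenvalue of $K$, because it mishandles the noise contribution. The fix is precisely to keep the bias and variance terms apart, exploiting that the variance never grows under $L\preceq K$ while only the bias absorbs the factor $C$. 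The decisive technical point is replacing the operator norm $\lambda_{\max}(K-L)$ by the trace $\Tr(K-L)$ (licit since $K-L\succeq 0$), since $\Tr(K-L)$ is the nuclear norm of the Nystr\"om residual whose expectation $\alpha\,d_{\rm eff}(K/\alpha)$ is exactly what Corollary~\ref{corol3} delivers, closing the bound on expectation.
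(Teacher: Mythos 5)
Your proof is correct and follows essentially the same route as the paper's: the same bias--variance decomposition, the same resolvent-identity bound showing the bias absorbs the factor $1+\Tr(K-L)/(n\gamma)$ (with $\|K-L\|_2$ relaxed to $\Tr(K-L)$), the same observation that the variance does not increase when $L\preceq K$, and the same final step of taking the expectation of $\Tr(K-L)$ via Corollary~\ref{corol3}. The only cosmetic difference is that you justify the variance monotonicity by operator monotonicity of $t\mapsto t/(t+\beta)$ and a trace chain, whereas the paper uses eigenvalue domination $\lambda_\ell(L)\leq\lambda_\ell(K)$.
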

The upper bound in Theorem~\ref{thm:RiskKRR} tends to $1$ as $\alpha\to 0$ since $d_{\rm eff}(K/\alpha)\leq n$. This consistently shows that the larger is the number of landmarks, the closest is the risk of approximate KRR from the full KRR.
Notice that the increase in the risk is also mitigated by the regularization parameter.
\paragraph{Preconditioners}
Rudi et al.~\cite{Rudi:2015} propose a preconditioning of the linear system~\eqref{eq:SmallLS} of the form 
\begin{equation}
B^{\top}(K_{\mathcal{C}}^\top K_{\mathcal{C}} + n \gamma K_{\mathcal{C}\mathcal{C}})B \big(B^{-1}\bm{\alpha}\big) = B^\top K_{\mathcal{C}}^\top\bm{y}\label{eq:precondionerUniform}
\end{equation} where $B$ is obtained by solving 
$
B B^\top = \big(K_{\mathcal{C}\mathcal{C}}D_{\mathcal{C}\mathcal{C}}K_{\mathcal{C}\mathcal{C}}  + n \gamma K_{\mathcal{C}\mathcal{C}} \big)^{-1}
$, thanks to a Cholesky decomposition, where $D_{\mathcal{C}\mathcal{C}}$ is an appropriate diagonal matrix.
In the case of the uniform sampling of $\mathcal{C}$, the authors of~\cite{Rudi:2015} propose $D_{\mathcal{C}\mathcal{C}} = (n/|\mathcal{C}|) \mathbb{I}_{\mathcal{C}\mathcal{C}}$. For RLS sampling, they argue for $D_{\mathcal{C}\mathcal{C}} =\Diag(\bm{\ell}_{\mathcal{C}})^{-1}$, where $\bm{\ell}$ contains the so-called ridge leverage scores. We emphasize that the computation of $B$ indeed crucially depends on the magnitude of the least eigenvalue of $K_{\mathcal{C}\mathcal{C}}$. It is then interesting to sample diverse landmarks so that $K_{\mathcal{C}\mathcal{C}}$ is likely to be regular. 
A natural motivation for choosing $D_{\mathcal{C}\mathcal{C}} =\Diag(\bm{\ell}_{\mathcal{C}})^{-1}$ from the DPP viewpoint is given in Corollary~\ref{Corol:K2}.
This result naturally follows from Lemma~\ref{Lem:Estimator}, which can also be found in the context of Monte-Carlo integration~\cite{bardenet} with projective DPPs.
\begin{lemma}\label{Lem:Estimator}
Let $\mathcal{C}\sim DPP_L(K/\alpha)$ and  $\bm{v}$ and $\bm{w}\in \mathbb{R}^{n}$ 
Then, we have the identities 
$
\mathbb{E}_{\mathcal{C}}\left[\bm{v}^{\top}_{\mathcal{C}} \bm{w}_{\mathcal{C}}\right] = \bm{v}^\top \Diag(\bm{\ell})\bm{w},
$
and 
$
\mathbb{V}_{\mathcal{C}}\left[\bm{v}^{\top}_{\mathcal{C}} \bm{w}_{\mathcal{C}}\right] = (\bm{v}\circ \bm{w})^\top\Big( \Diag(\bm{\ell})-P\circ P\Big)(\bm{v}\circ \bm{w}).
$
\end{lemma}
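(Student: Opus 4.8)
The plan is to reduce both identities to statements about the first- and second-order inclusion probabilities of the DPP, which are encoded by the marginal kernel $P$ of~\eqref{eq:P}. First I would write the bilinear statistic as a linear combination of inclusion indicators,
\[
f(\mathcal{C}) := \bm{v}^{\top}_{\mathcal{C}} \bm{w}_{\mathcal{C}} = \sum_{i=1}^{n} v_i w_i \, \mathbf{1}[i \in \mathcal{C}] = \sum_{i=1}^{n} a_i \, \mathbf{1}[i \in \mathcal{C}],
\]
where $a_i = v_i w_i = (\bm{v}\circ\bm{w})_i$. The expectation is then immediate by linearity: since the single-point inclusion probability is exactly the leverage score, $\Pr(i\in\mathcal{C}) = \det(P_{\{i\}\{i\}}) = P_{ii} = \bm{\ell}_i$, we get $\mathbb{E}_{\mathcal{C}}[f] = \sum_i a_i \bm{\ell}_i = \bm{v}^\top \Diag(\bm{\ell})\bm{w}$, which is the first identity.

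For the variance I would expand $\mathbb{E}_{\mathcal{C}}[f^2] = \sum_{i,j} a_i a_j \Pr(\{i,j\}\subseteq\mathcal{C})$ and treat the diagonal and off-diagonal terms separately. The only point requiring care -- and really the sole (minor) obstacle here -- is the diagonal accounting: because $\mathbf{1}[i\in\mathcal{C}]^2 = \mathbf{1}[i\in\mathcal{C}]$, the diagonal contributes $\sum_i a_i^2 \bm{\ell}_i$ with $\bm{\ell}_i$ rather than $\bm{\ell}_i^2$, whereas for $i\neq j$ the pairwise inclusion probability is the $2\times 2$ principal minor of the marginal kernel,
\[
\Pr(\{i,j\}\subseteq\mathcal{C}) = \det(P_{\{i,j\}\{i,j\}}) = P_{ii}P_{jj} - P_{ij}^2 = \bm{\ell}_i\bm{\ell}_j - P_{ij}^2,
\]
where I use the symmetry of $P$. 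Subtracting $(\mathbb{E}_{\mathcal{C}}[f])^2 = \sum_{i,j} a_i a_j \bm{\ell}_i\bm{\ell}_j$ cancels every $\bm{\ell}_i\bm{\ell}_j$ cross term, leaving
\[
\mathbb{V}_{\mathcal{C}}[f] = \sum_i a_i^2\bm{\ell}_i - \sum_i a_i^2 P_{ii}^2 - \sum_{i\neq j} a_i a_j P_{ij}^2.
\]

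Finally I would recognize the entry-wise products $P_{ii}^2 = (P\circ P)_{ii}$ and $P_{ij}^2 = (P\circ P)_{ij}$, so that the last two sums recombine into the full quadratic form $\bm{a}^\top (P\circ P)\,\bm{a}$, while the first is $\bm{a}^\top \Diag(\bm{\ell})\,\bm{a}$. With $\bm{a} = \bm{v}\circ\bm{w}$ this yields $\mathbb{V}_{\mathcal{C}}[\bm{v}^{\top}_{\mathcal{C}} \bm{w}_{\mathcal{C}}] = (\bm{v}\circ\bm{w})^\top\big(\Diag(\bm{\ell})-P\circ P\big)(\bm{v}\circ\bm{w})$, as claimed. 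The argument is a direct moment computation with no substantive difficulty beyond the diagonal bookkeeping; the essential input is simply that the inclusion probabilities of $DPP_L(K/\alpha)$ are the principal minors of $P$.
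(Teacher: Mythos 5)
Your proof is correct and follows essentially the same route as the paper: the paper's own (very terse) proof likewise reduces both identities to the first- and second-order inclusion probabilities $\mathbb{E}_{\mathcal{C}}[1_{i\in\mathcal{C}}]=P_{ii}$ and $\mathbb{E}_{\mathcal{C}}[1_{i\in\mathcal{C}}1_{j\in\mathcal{C}}]=1_{i=j}P_{ii}+(P_{ii}P_{jj}-P_{ij}^2)$, leaving the moment expansion implicit. You have simply written out that expansion in full, including the correct diagonal bookkeeping ($\bm{\ell}_i$ rather than $\bm{\ell}_i^2$), which is exactly the computation the paper's proof sketches.
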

Corollary~\ref{Corol:K2} then motivates the approximation of  $K_{\mathcal{C}}^\top K_{\mathcal{C}}$ in~\eqref{eq:SmallLS} by $K_{\mathcal{C}\mathcal{C}}D_{\mathcal{C}\mathcal{C}}K_{\mathcal{C}\mathcal{C}}$.
\begin{corollary}\label{Corol:K2}
Let $\mathcal{C}\sim DPP_L(K/\alpha)$ and $K_{\mathcal{C}} = KC$. Then, the following identity holds:
$
\mathbb{E}_{\mathcal{C}}\left[K_{\mathcal{C}} \Diag(\bm{\ell}_{\mathcal{C}})^{-1} K_{\mathcal{C}}^\top\right] = K^2.
$
\end{corollary}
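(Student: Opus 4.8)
The plan is to reduce this matrix identity to the scalar identity of Lemma~\ref{Lem:Estimator} by testing against arbitrary vectors. Since both sides are $n\times n$ matrices, it suffices to show that $\bm{x}^\top \mathbb{E}_{\mathcal{C}}[K_{\mathcal{C}}\Diag(\bm{\ell}_{\mathcal{C}})^{-1}K_{\mathcal{C}}^\top]\bm{y} = \bm{x}^\top K^2\bm{y}$ for every $\bm{x},\bm{y}\in\mathbb{R}^n$, and by linearity of expectation this amounts to evaluating $\mathbb{E}_{\mathcal{C}}[\bm{x}^\top K_{\mathcal{C}}\Diag(\bm{\ell}_{\mathcal{C}})^{-1}K_{\mathcal{C}}^\top\bm{y}]$.

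First I would unfold the sampling matrices. Writing $K_{\mathcal{C}}=KC$ and using the symmetry $K=K^\top$, set $\bm{a}=K\bm{x}$ and $\bm{b}=K\bm{y}$; then $K_{\mathcal{C}}^\top\bm{y}=C^\top K\bm{y}=\bm{b}_{\mathcal{C}}$ and $\bm{x}^\top K_{\mathcal{C}}=(C^\top K\bm{x})^\top=\bm{a}_{\mathcal{C}}^\top$, so the quadratic form collapses to $\bm{a}_{\mathcal{C}}^\top\Diag(\bm{\ell}_{\mathcal{C}})^{-1}\bm{b}_{\mathcal{C}}=\sum_{i\in\mathcal{C}}a_i b_i/\ell_i$. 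Here I would note that $\ell_i=P_{ii}>0$ for every $i$ because $K$ is strictly positive definite, so the diagonal reweighting $\Diag(\bm{\ell}_{\mathcal{C}})^{-1}$ is well defined.

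Next I would cast this sum into the exact form handled by Lemma~\ref{Lem:Estimator}. Choosing $\bm{v}=\Diag(\bm{\ell})^{-1}\bm{a}$ and $\bm{w}=\bm{b}$ gives $\sum_{i\in\mathcal{C}}a_i b_i/\ell_i=\bm{v}_{\mathcal{C}}^\top\bm{w}_{\mathcal{C}}$. The first identity of Lemma~\ref{Lem:Estimator} then yields $\mathbb{E}_{\mathcal{C}}[\bm{v}_{\mathcal{C}}^\top\bm{w}_{\mathcal{C}}]=\bm{v}^\top\Diag(\bm{\ell})\bm{w}=\sum_i (a_i/\ell_i)\,\ell_i\,b_i=\bm{a}^\top\bm{b}=\bm{x}^\top K^2\bm{y}$. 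Since $\bm{x},\bm{y}$ were arbitrary, the matrix identity follows.

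The computation is essentially mechanical, so there is no genuine obstacle; the one place where something substantive happens is the exact cancellation $\ell_i\cdot(1/\ell_i)=1$. This is precisely the Horvitz--Thompson (importance-sampling) mechanism: the reweighting by $\Diag(\bm{\ell}_{\mathcal{C}})^{-1}$ is tuned to the DPP marginal inclusion probability $\Pr(i\in\mathcal{C})=\ell_i$, and only because these two quantities coincide does the estimator become unbiased for $K^2$. Equivalently, one could bypass the Lemma and argue directly that $K_{\mathcal{C}}\Diag(\bm{\ell}_{\mathcal{C}})^{-1}K_{\mathcal{C}}^\top=\sum_{i\in\mathcal{C}}\ell_i^{-1}K_{\cdot,i}K_{\cdot,i}^\top$, whose expectation is $\sum_{i}\Pr(i\in\mathcal{C})\,\ell_i^{-1}K_{\cdot,i}K_{\cdot,i}^\top=\sum_i K_{\cdot,i}K_{\cdot,i}^\top=K^2$.
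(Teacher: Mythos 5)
Your proof is correct and takes essentially the same route as the paper: the paper offers no separate proof of this corollary, saying only that it ``naturally follows from Lemma~\ref{Lem:Estimator}'', and your argument is precisely that derivation made explicit --- reducing the matrix identity to the first identity of the lemma (equivalently, to the inclusion marginal $\Pr(i\in\mathcal{C})=\bm{\ell}_i$), with the Horvitz--Thompson cancellation $\bm{\ell}_i\cdot\bm{\ell}_i^{-1}=1$ doing the work. Both your quadratic-form reduction and your direct rank-one-sum variant are faithful expansions of the paper's intended one-line argument.
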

A formula for the variance can also be obtained thanks to Lemma~\ref{Lem:Estimator}. Again, using DPP sampling with inverse leverage score preconditioning has the advantage that $K_{\mathcal{C}\mathcal{C}}$ is likely to be regular, in contrast with leverage score sampling.

\section{Experimental results}
\label{sec:experiments}

In this section, we illustrate the effect of sampling a subset $\mathcal{C}$ with small or large $\det(K_{\mathcal{C}\mathcal{C}})$ on a number of public datasets.  A swapping algorithm, described in Algorithm~\ref{AlgGreedySwap}, is used to determine subsets of size $|\mathcal{C}| = k$ with a given log-determinant $d_p$, such that $|\mathrm{log}(\mathrm{det}(K_{\mathcal{C}\mathcal{C}})) - d_p| \leq \epsilon$ and where $\epsilon>0$ is a numerical tolerance. The methods swaps points in and out of an initial subset, so that the swapped point is accepted if the determinant of the new submatrix is closer to the desired determinant $d_p$. If the determinant of the subset is too small, we sample a new candidate by using (approximate) leverage scores sampling. Otherwise, if the determinant is too large, we use inverse leverage scores sampling. The size of the subset is chosen to be the effective dimension $k = \sum_{i=1}^n \bm{\ell}_i$, where $\{\bm{\ell}_i\}_{i=1}^n$  are the ridge leverage scores with regularization parameter $\alpha = \lambda n$. The same ridge leverage scores are used in the greedy swapping algorithm. The algorithm stops if the desired precision is reached or the number of iterations exceeds 2000, whichever happens earlier. For large-scale problems, the ridge leverage scores are approximated using  Recursive Ridge Leverage Sampling (RRLS)~\cite{MuscoMusco} with $n_{RRLS}$ points. The size of the subset is chosen to be the effective dimension of the approximate RLS with regularization parameter $\alpha = \lambda n$. The same approximate RLS are used in the greedy swapping algorithm, where the maximum iterations is now equal to 5000.   

\paragraph{Settings}
In the sequel, a Gaussian kernel with bandwidth $\sigma$ is used after standardizing the data.
All the simulations are repeated 10 times, the averaged is displayed and the errorbars show the $0.05$ and $0.95$ quantile. The datasets and hyperparameters are given in Table~\ref{Table:data}. In the first case-studies, the following  exact algorithms are used to sample $k$ landmarks: Uniform sampling (Unif.), Ridge Leverage Score sampling (RLS)~\cite{ElAlaouiMahoney} and k-DPP \cite{kulesza2011k}.  For a fair comparison, we chose to use $k$-DPP rather than DPP so that the number of landmarks is always constant and equal to the expected subset size of the associated DPP. In the large-scale experiments: Unif., RRLS and the greedy swapping method are compared.

\begin{algorithm}[h]
	\centering
	\begin{algorithmic}[1]
		\Statex {\bf input}: Matrix $K\succ 0$, sample size $k$, ridge leverage scores $\{\bm{\ell}_i\}_{i=1}^n$, preferred log-determinant $d_{p}$ and precision $\epsilon>0$.
		\Statex {\bf initialization}: Sample an initial subset $|\mathcal{C}| = k$ uniformly at random.
		\Statex Determine the Cholesky decomposition $R$, with $K_{\mathcal{C}\mathcal{C}} = R^\top R$.
		\Statex {\bf repeat}:
		\Statex \quad Determine the log-determinant $d = 2\sum_{i=1}^k \mathrm{log}(R_{ii})$.	
		\Statex \quad {\bf if:} $|d - d_{p}| \leq \epsilon$	
		\Statex \qquad \quad {\bf break}
		\Statex \quad {\bf if:} $d < d_{p}$
		\Statex \qquad \quad Sample a new point $\tilde{c}$ out of the remaining subset with $p_i \sim  \bm{\ell}_i$
		\Statex \quad {\bf otherwise:}
		\Statex \qquad \quad Sample a new point $\tilde{c}$ out of the remaining subset with $p_i \sim 1 - \bm{\ell}_i$
		\Statex \quad Swap a uniform selected point out of $\mathcal{C}$ with the newly sampled $\tilde{c}$, which gives the new subset $\tilde{\mathcal{C}}$
		\Statex \quad Do a rank-1 update to the Cholesky decomposition, which gives $\tilde{R}$ and determine $\tilde{d} = 2\sum_{i=1}^k \mathrm{log}(\tilde{R}_{ii})$.
		\Statex \quad {\bf if:} $|\tilde{d} - d_{p}| \leq  |d - d_{p}|$	
		\Statex \qquad \quad Keep the swapped point and update $R = \tilde{R}$, $\mathcal{C} = \tilde{\mathcal{C}}$		
		\Statex {\bf return} $\mathcal{C}$.
	\end{algorithmic} 
	\caption{Greedy Swapping Algorithm based on the (approximated) ridge leverage scores. %At anytime, if the determinant of the subset is too small, a new candidate is sampled using the ridge leverage scores. Otherwise, inverse ridge leverage scores sampling is used.
	\label{AlgGreedySwap}}
\end{algorithm}

\paragraph{Nystr\"om approximation}
The impact of diversity on the Nystr\"om approximation is illustrated on the \texttt{Housing}, \texttt{Abalone}, \texttt{codRNA} and \texttt{MiniBooNE} datasets\footnote{\url{https://www.cs.toronto.edu/~delve/data/datasets.html}, \url{https://archive.ics.uci.edu/ml/index.php}\label{footnote:datasets}}.
The condition number of $K_{\mathcal{C}\mathcal{C}}$, its largest/smallest eigenvalues and the accuracy of the Nystr\"om approximation are plotted as a function of the determinant in Figure~\ref{fig:Nystrom}. 
For completeness, the largest and smallest eigenvalues of $K_{\mathcal{C}\mathcal{C}}$ are also given in the appendix. The accuracy of the approximation is evaluated by calculating $\|K-\hat{K}\|_F/\|K\|_F$ with $\hat{K} = KC (K_{\mathcal{C}\mathcal{C}}+\varepsilon\mathbb{I}_{\mathcal{C}\mathcal{C}})^{-1}C^\top K$ with $\varepsilon = 10^{-12}$ for numerical stability. 
Afterwards, the following  algorithms are used to sample $k$ landmarks: Uniform sampling (Unif.), exact ridge leverage score sampling (RLS)~\cite{ElAlaouiMahoney} and k-DPP \cite{kulesza2011k}.  
The results in Figure~\ref{fig:Nystrom} show that the 3 sampling algorithms follow the general trend of the greedy swapping algorithm, namely, we have to following empirical observations:
\textbf{1)} Sampling a more diverse subset results in a smaller condition number $\kappa(K_{\mathcal{C}\mathcal{C}})$. This is mainly because a larger determinant corresponds to a larger $\lambda_{\min}(K_{\mathcal{C}\mathcal{C}})$.  Indeed,  diverse sampling is a computational regularization. 
\textbf{2)} Sampling a diverse subset gives a more accurate Nystr\"om approximation.  In practice, we observe that RLS sampling yield \emph{effectively} more diverse samples compared to uniform sampling.
Notice that in the presence of outliers, taking samples with an extremely large $\mathrm{det}(K_{\mathcal{C}\mathcal{C}})$ thanks to the Greedy Swapping Algorithm might increase the error on the Nystr\"om approximation as it explained in Supplementary Material. The results for the large-scale experiments are visualized on Figure~\ref{fig:NystromLS}. The accuracy of the approximation is now evaluated by averaging the Frobenius norm error $\|K-\hat{K}\|_F$ over 50 subsets of size 3000.

	\begin{figure}[h]
		\centering
		\begin{subfigure}[b]{0.45\textwidth}
			\includegraphics[width=1\textwidth, height= 0.95\textwidth]{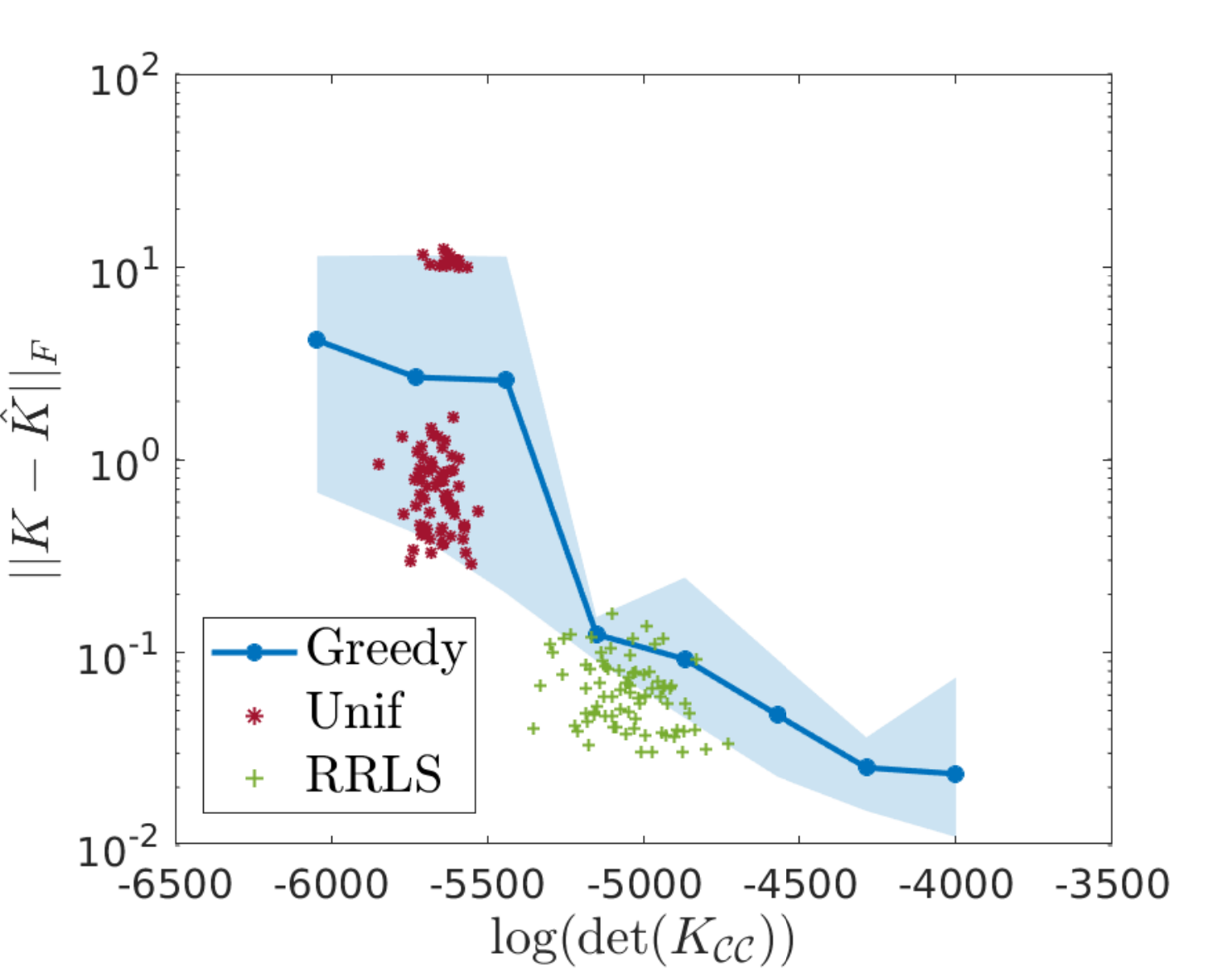}
			\caption{\texttt{MiniBooNE}: error}
		\end{subfigure}
		\begin{subfigure}[b]{0.45\textwidth}
			\includegraphics[width=1\textwidth, height= 0.95\textwidth]{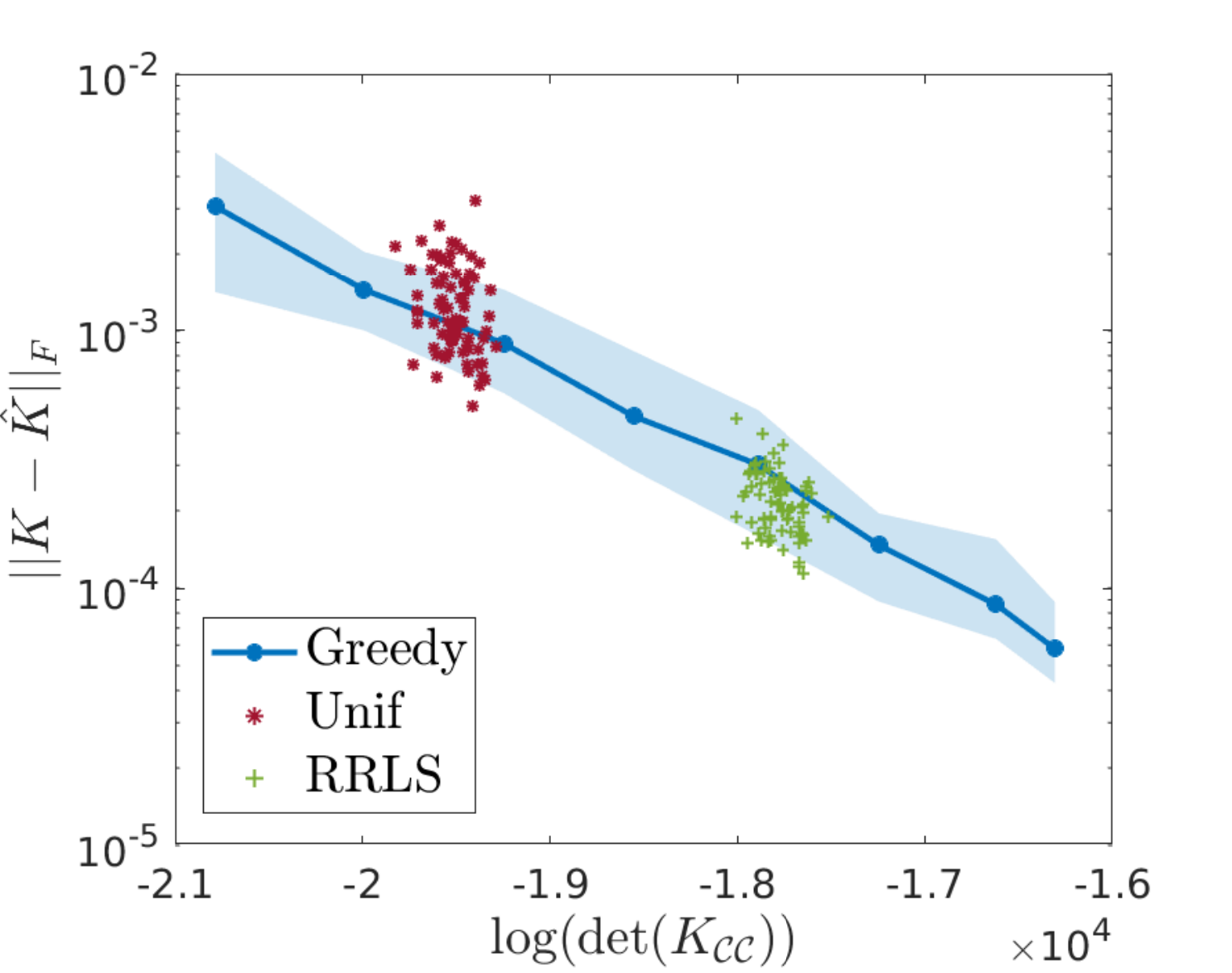}
			\caption{\texttt{codRNA}: error}
		\end{subfigure}		
		\caption{Large-scale Nystr\"om approximation results. The Frobenius norm of the approximation error of $\hat{K} = L(K,\mathcal{C})$ is plotted versus the sample diversity. The larger  $\mathrm{det}(K_{\mathcal{C}\mathcal{C}})$, the more diverse the subset.}\label{fig:NystromLS}
	\end{figure}

	\begin{figure}[h]
		\centering
		\begin{subfigure}[t]{0.24\textwidth}
			\includegraphics[width=\textwidth, height= \textwidth]{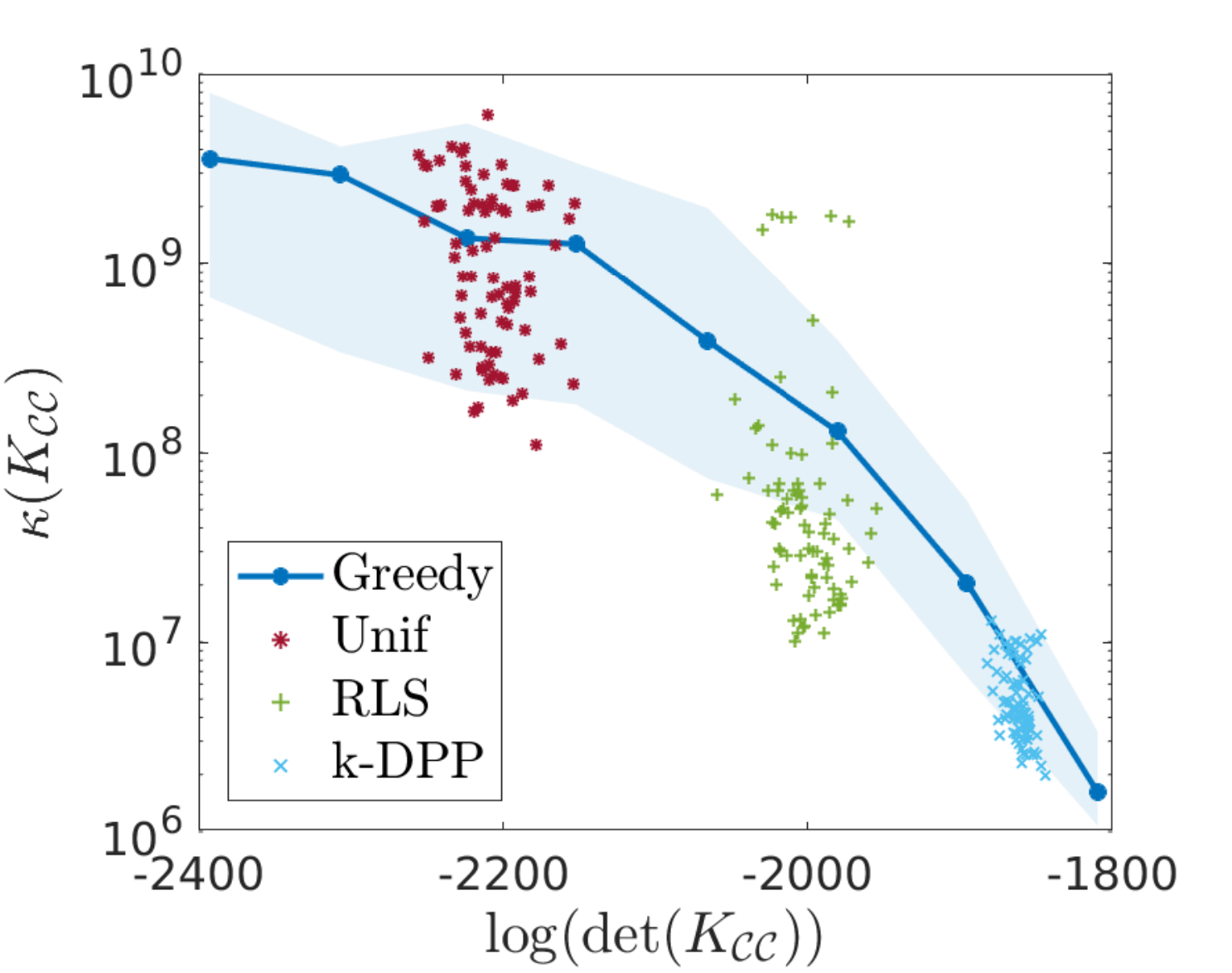}
			\caption{\texttt{A.Credit}: $\kappa(K_{\mathcal{C}\mathcal{C}})$}
		\end{subfigure}
		\begin{subfigure}[t]{0.24\textwidth}
			\includegraphics[width=\textwidth, height= \textwidth]{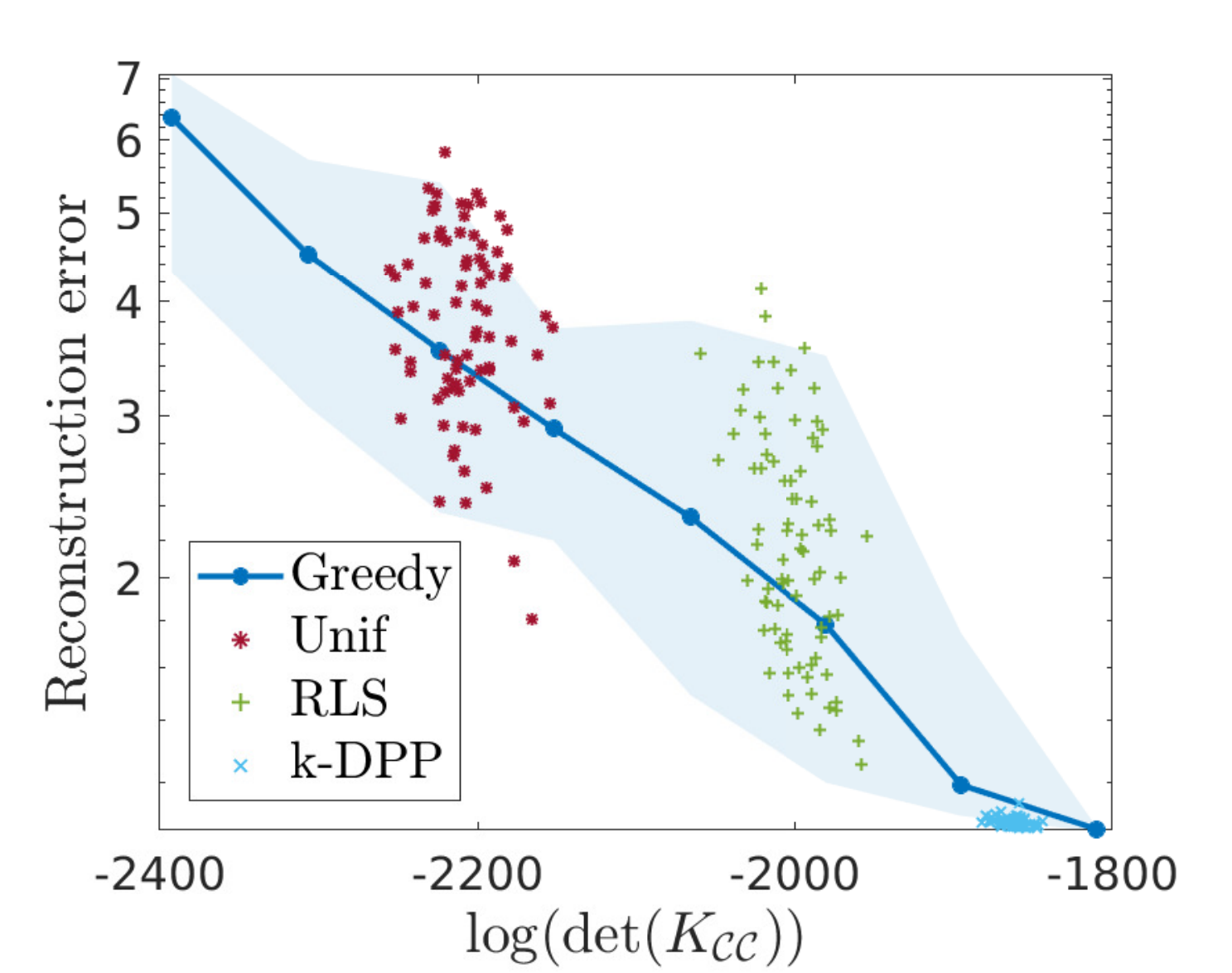}
			\caption{\texttt{A.Credit}: error}
		\end{subfigure}
		\begin{subfigure}[t]{0.24\textwidth}
			\includegraphics[width=\textwidth, height= \textwidth]{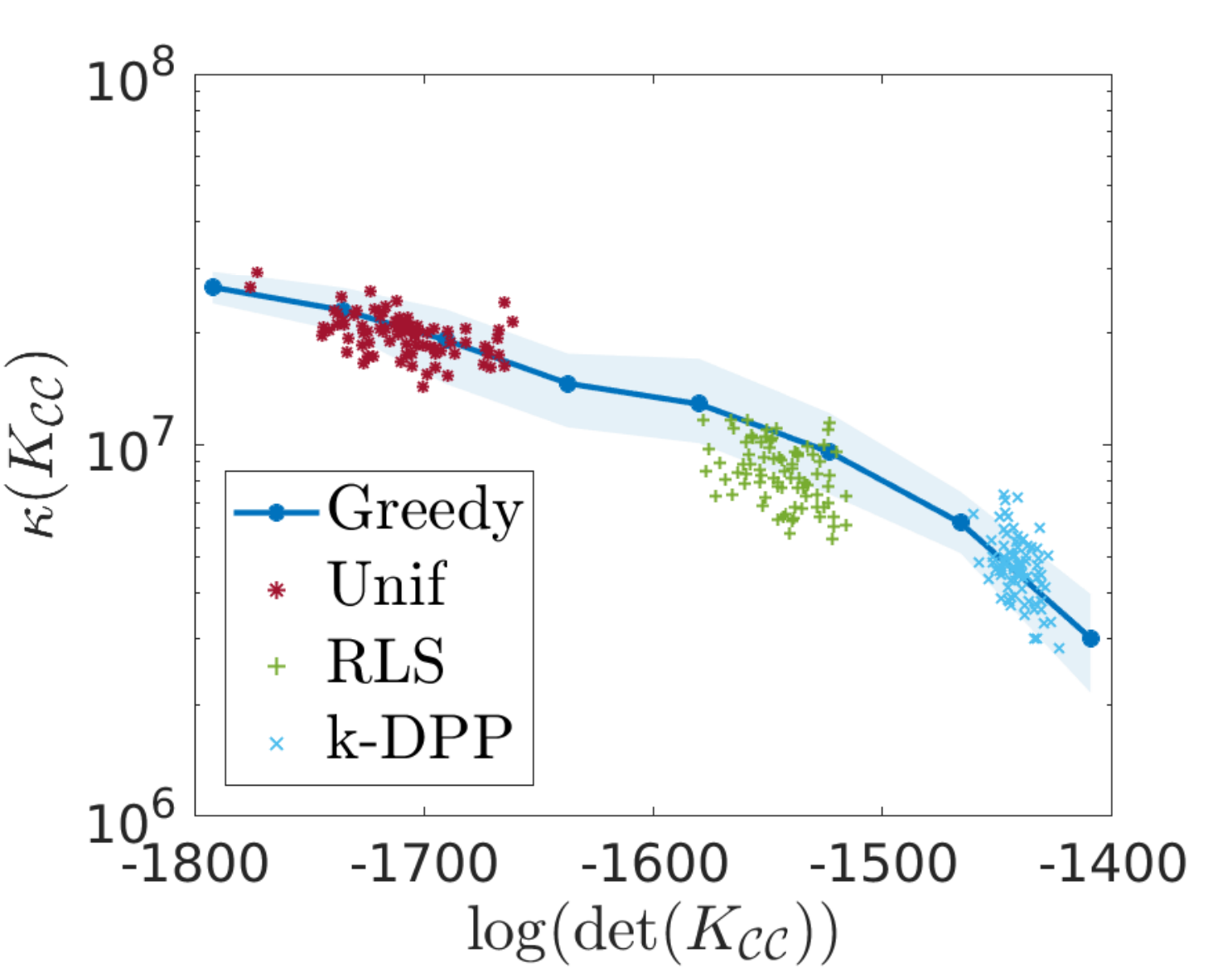}
			\caption{\texttt{B.Cancer}: $\kappa(K_{\mathcal{C}\mathcal{C}})$}
		\end{subfigure}
		\begin{subfigure}[t]{0.24\textwidth}
			\includegraphics[width=\textwidth, height= \textwidth]{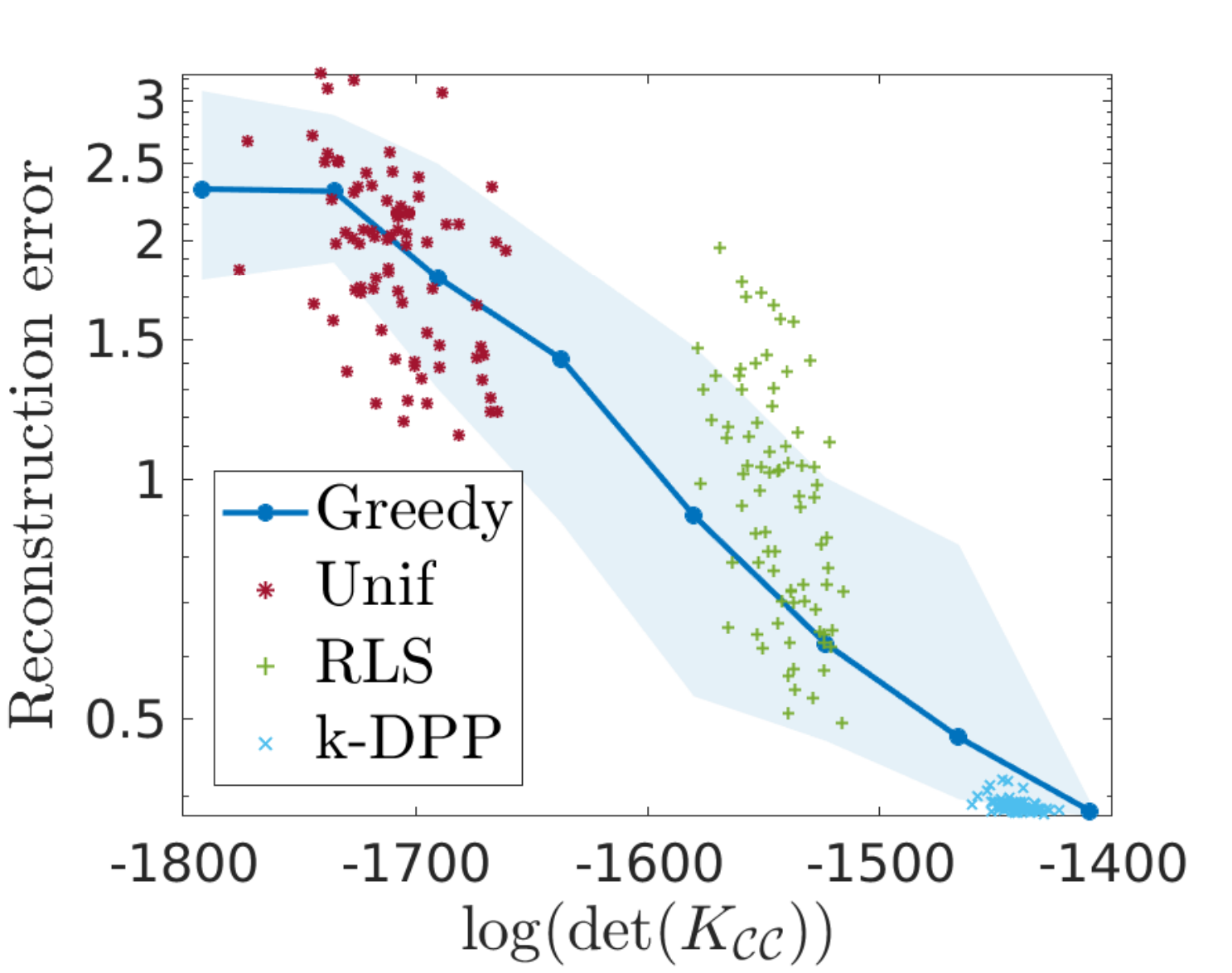}
			\caption{\texttt{B.Cancer}: error}
		\end{subfigure}					
		\caption{KPCA results. The condition number, minimum eigenvalues and reconstruction error using half of the components are plotted as a function of $\mathrm{det}(K_{\mathcal{C}\mathcal{C}})$.}\label{fig:KPCA}
	\end{figure}
	\begin{figure}[h]
		\centering
		\begin{subfigure}[t]{0.24\textwidth}
			\includegraphics[width=\textwidth, height= 0.95\textwidth]{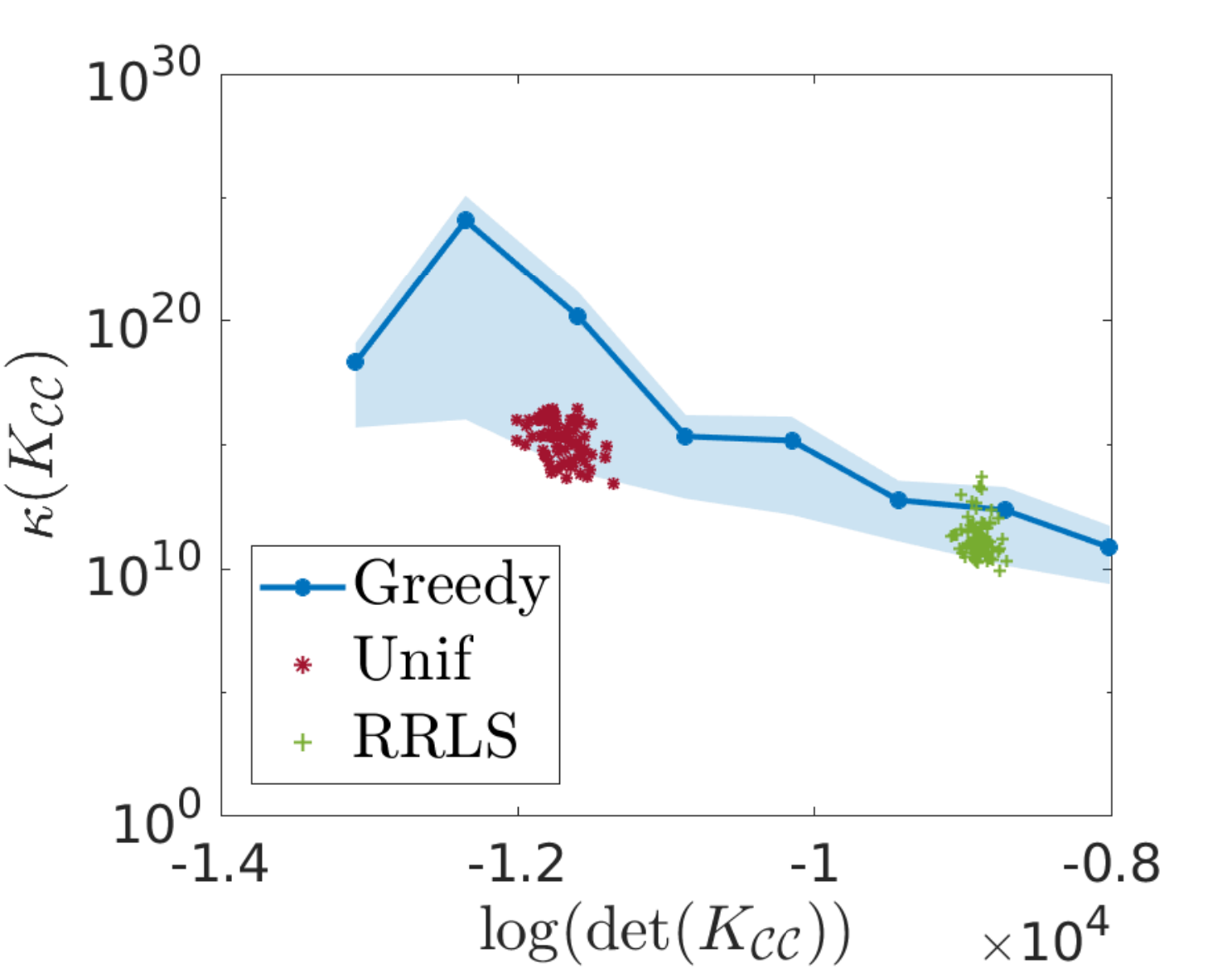}
			\caption{\texttt{Adult}: $\kappa(K_{\mathcal{C}\mathcal{C}})$}
		\end{subfigure}
		\begin{subfigure}[t]{0.24\textwidth}
			\includegraphics[width=\textwidth, height= 0.95\textwidth]{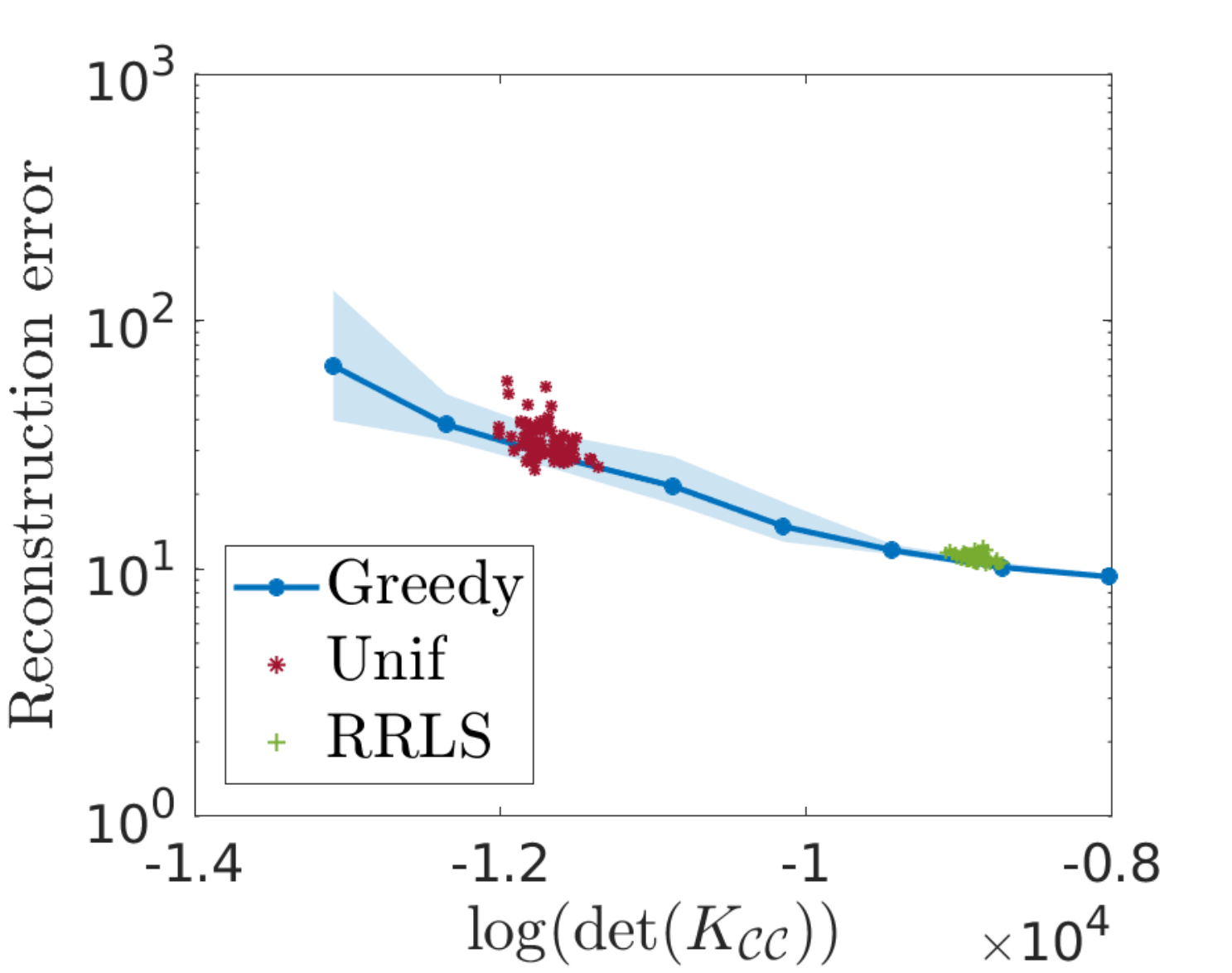}
			\caption{\texttt{Adult}: error}
		\end{subfigure}
		\begin{subfigure}[t]{0.24\textwidth}
			\includegraphics[width=\textwidth, height= 0.95\textwidth]{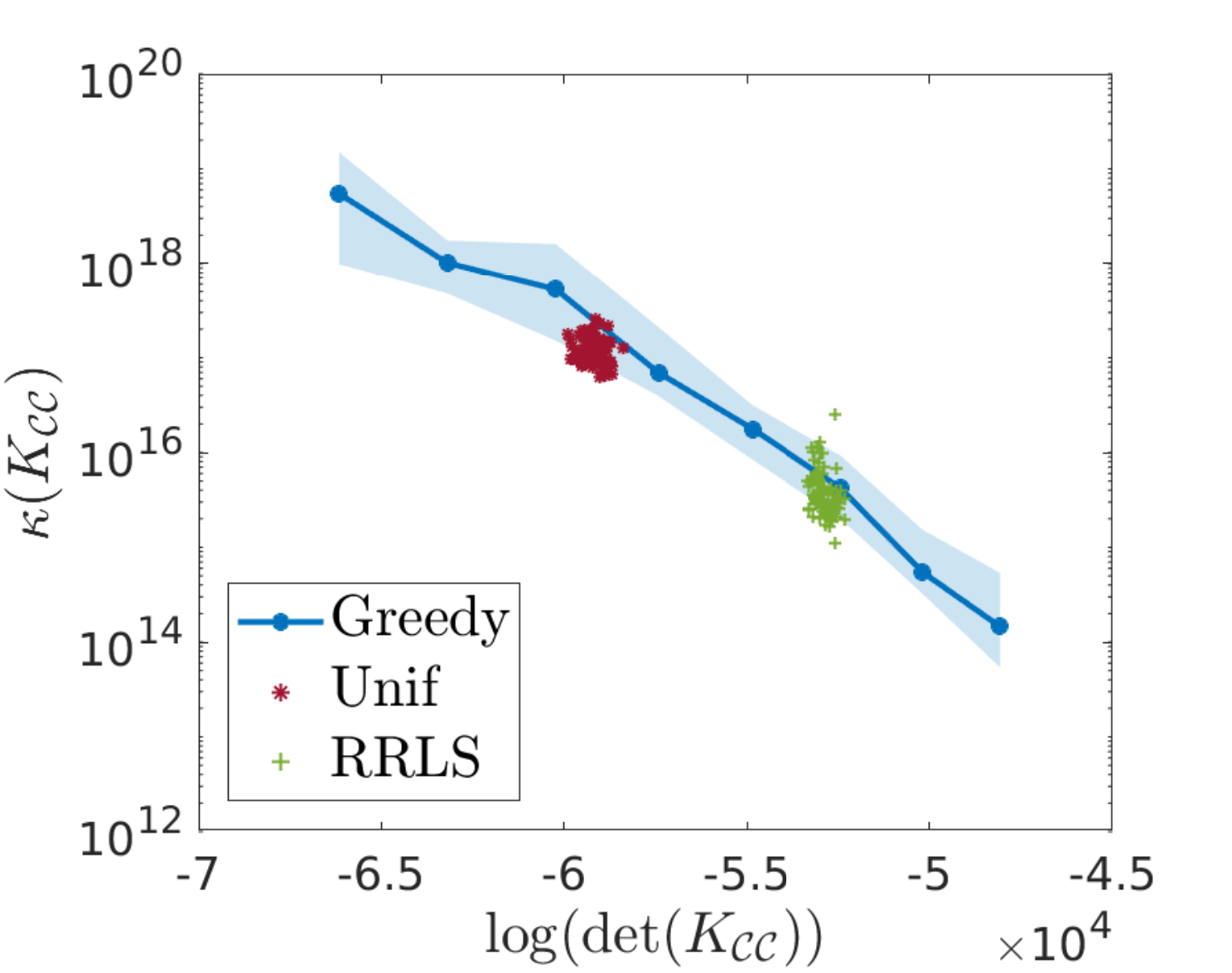}
			\caption{\texttt{Cov}: $\kappa(K_{\mathcal{C}\mathcal{C}})$}
		\end{subfigure}
		\begin{subfigure}[t]{0.24\textwidth}
			\includegraphics[width=\textwidth, height= 0.95\textwidth]{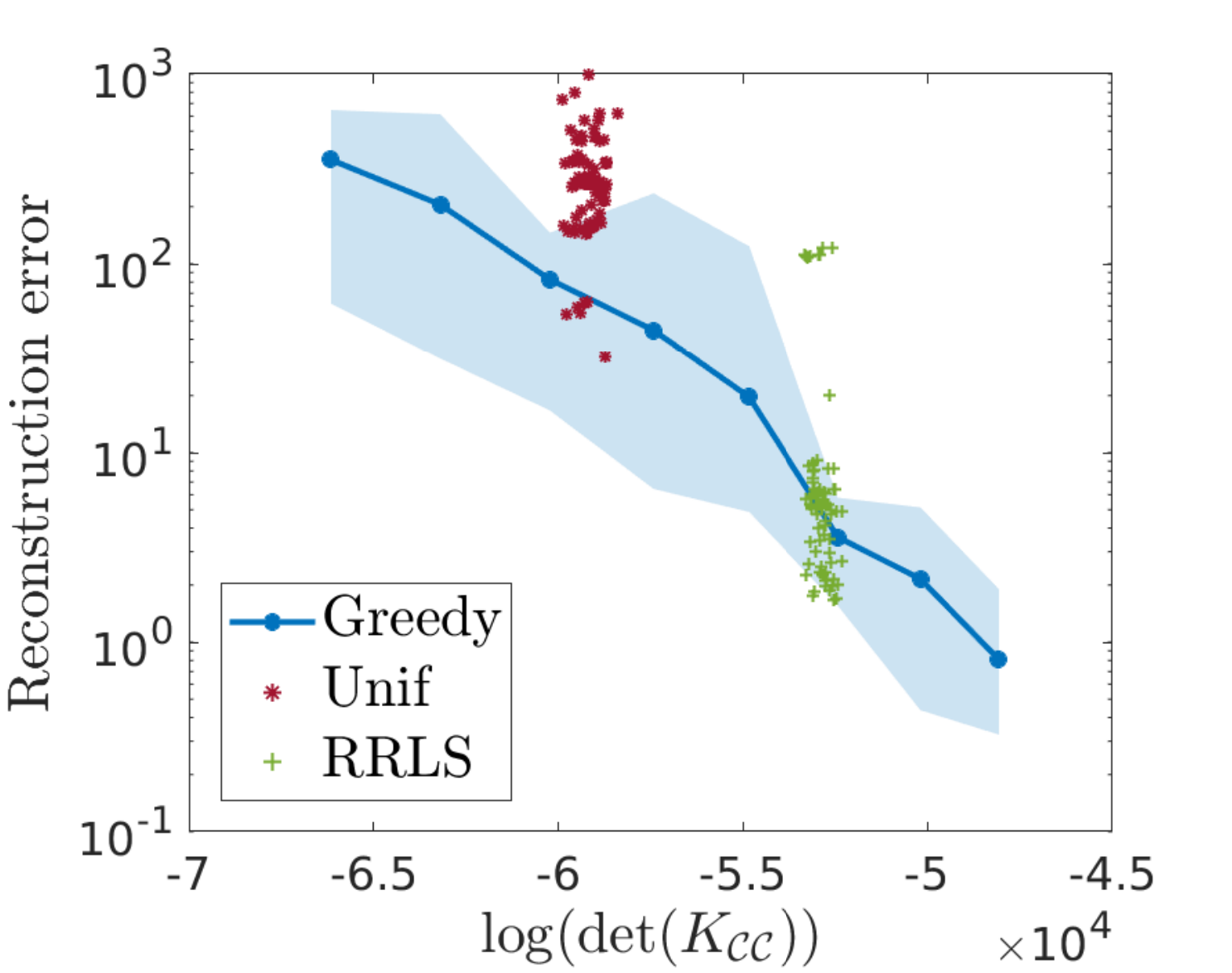}
			\caption{\texttt{Cov}: error}
		\end{subfigure}					
		\caption{Large-scale KPCA results. The condition number and reconstruction error using half of the components are plotted as a function of $\mathrm{det}(K_{\mathcal{C}\mathcal{C}})$.}\label{fig:LS_KPCA}
	\end{figure}	
\paragraph{Kernel PCA}
The numerical experiments are done on the \texttt{Breast Cancer} (\texttt{B. Cancer}), \texttt{Australian Credit} (\texttt{A. Credit}), \texttt{Adult} and \texttt{Covertype} datasets\footnotemark[1].
The condition number, smallest/largest eigenvalues of $K_{\mathcal{C}\mathcal{C}}$ and the reconstruction error are plotted as a function of the determinant. The averaged results are visualized in Figure~\ref{fig:KPCA}. Information about the datasets and hyperparameters used for the experiments is given in Table~\ref{Table:data} in appendix. Empirically, sampling a more diverse subset results in a smaller reconstruction error and a smaller condition number.  The results for the large-scale experiments are visualized on Figure~\ref{fig:LS_KPCA}.

\paragraph{Regression and Stratification of the error} To conclude, we verify the usefulness of diversity for a supervised learning task. The dataset is split in $50\%$ training data and $50\%$ test data, so to make sure the train and test set have similar RLS distributions. The test RLS distribution is visualized in Figure~\ref{fig:EIG_Regression}.
The regression experiment is repeated on the \texttt{Abalone}, \texttt{Wine Quality}, \texttt{Bike Sharing} (\texttt{Bike S.}) and \texttt{YearPredictionMSD} (\texttt{Year}) datasets\footnotemark[1] by using KRR. The MAPE of the kernel ridge regression  is calculated as a function of $\mathrm{det}(K_{\mathcal{C}\mathcal{C}})$. 
To evaluate the performance, the dataset is stratified, i.e., the test set is divided into `bulk' and `tail' as follows:
the bulk corresponds to test points where the RLS are smaller than or equal to the 70\% quantile, while the tail of the data corresponds  to test points where the ridge leverage score is larger than the 70\% quantile. 
	 \begin{figure}[h]
	\centering	
	\begin{subfigure}[b]{0.45\textwidth}
		\includegraphics[width=1.02\textwidth, height= 0.8\textwidth]{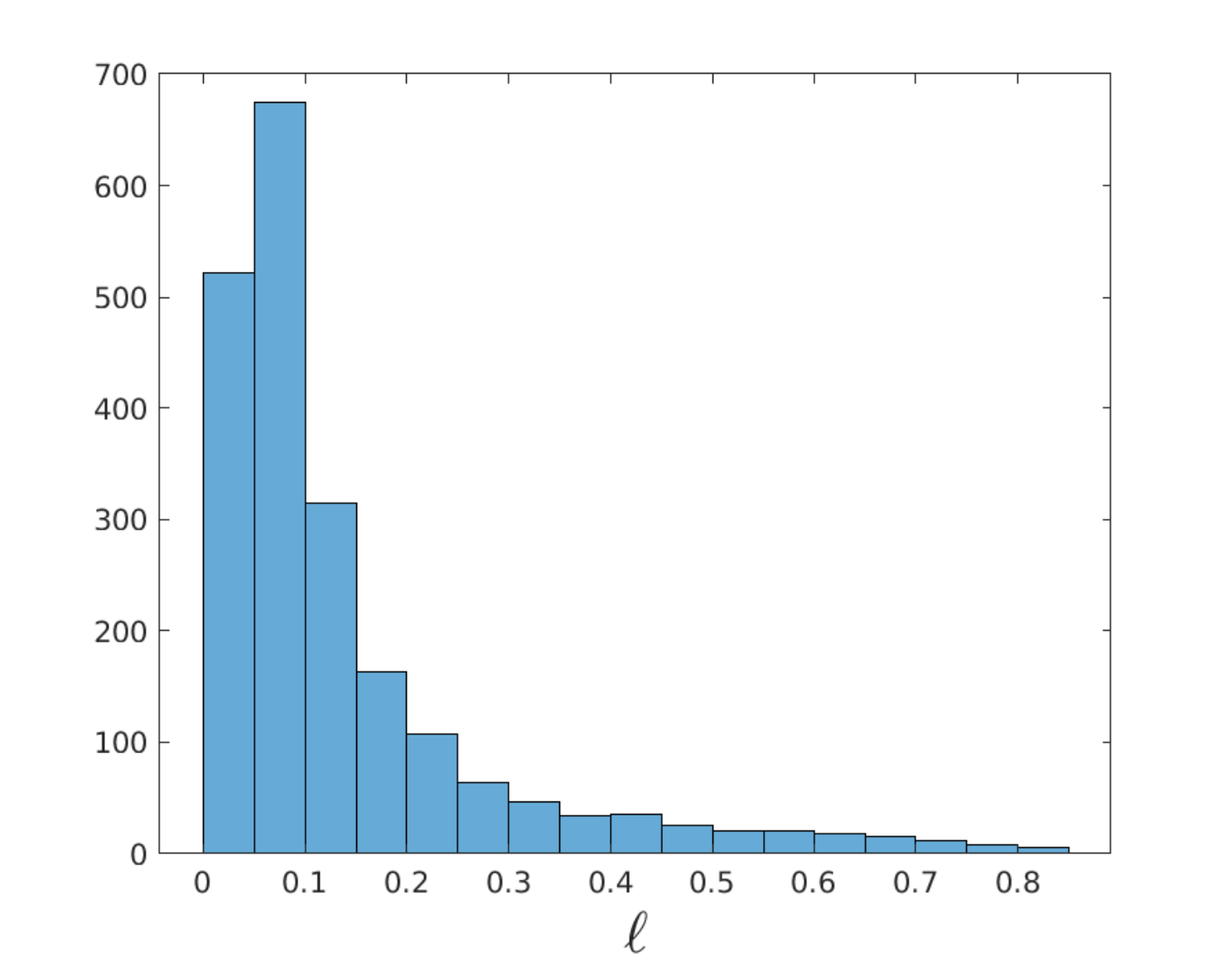}
		\caption{\texttt{Abalone}: RLS distribution}
	\end{subfigure}
	\begin{subfigure}[b]{0.45\textwidth}
		\includegraphics[width=\textwidth, height= 0.8\textwidth]{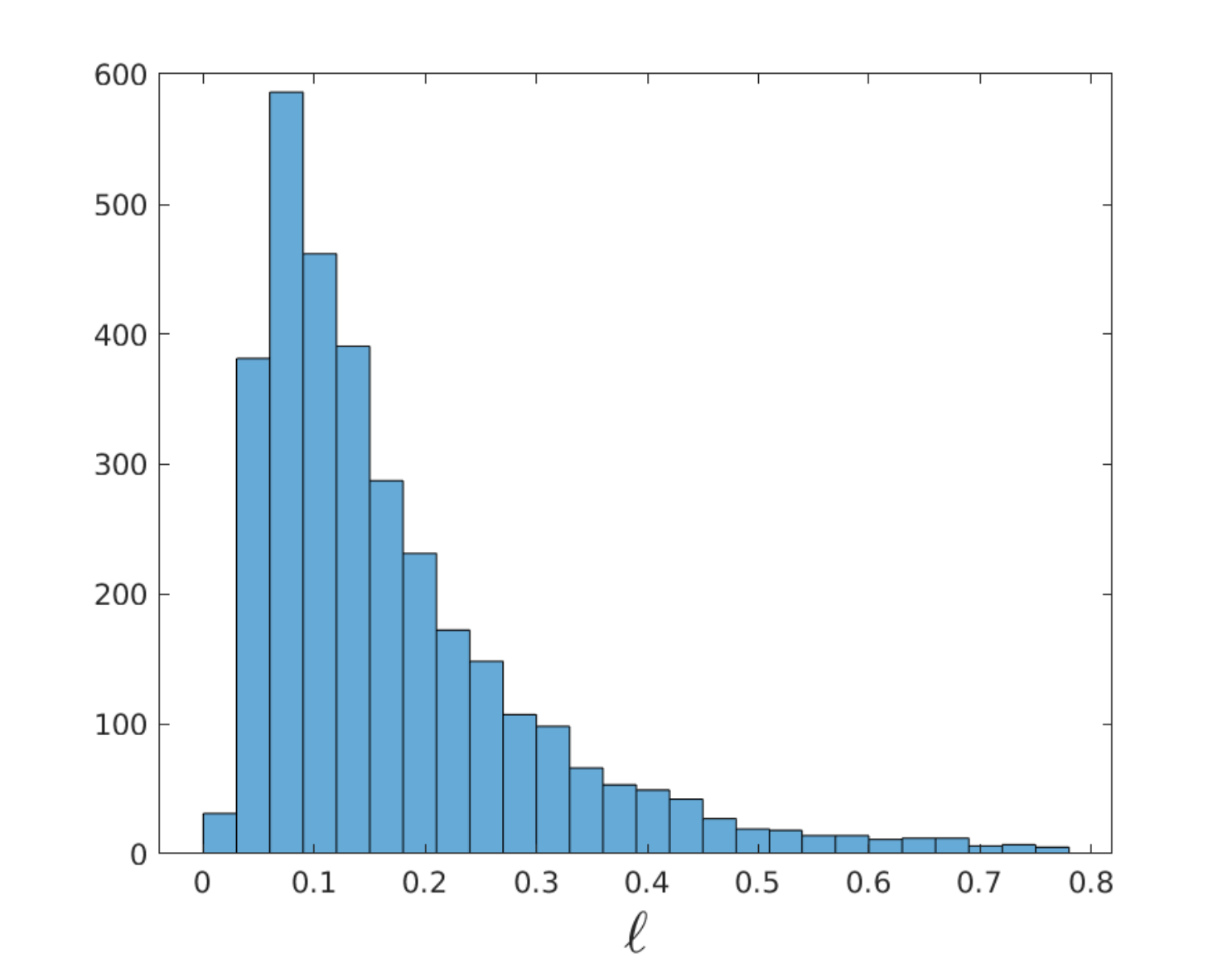}
		\caption{\texttt{Wine Q.}: RLS distribution}
	\end{subfigure}
	\begin{subfigure}[b]{0.45\textwidth}
		\includegraphics[width=1.02\textwidth, height= 0.8\textwidth]{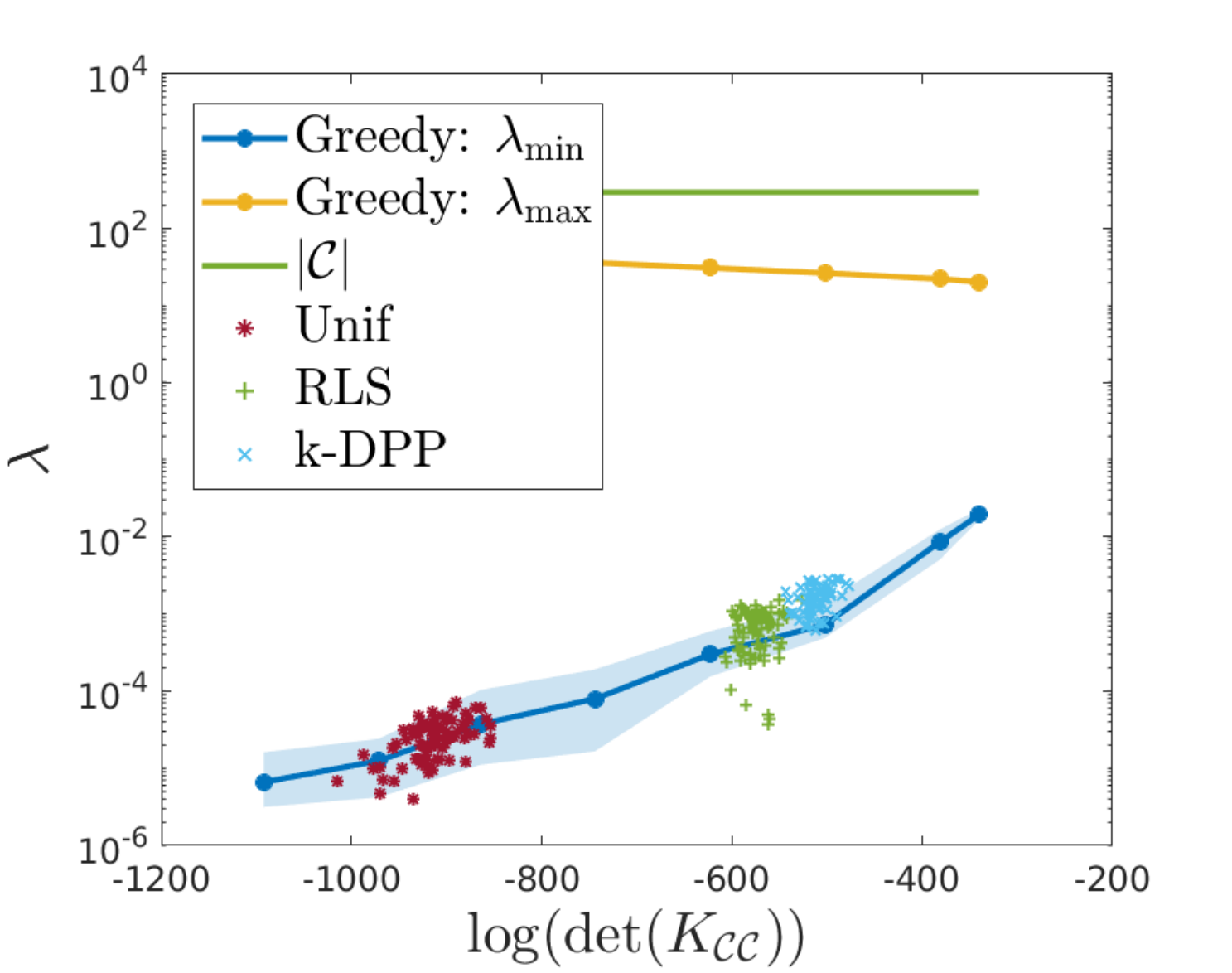}
		\caption{\texttt{Abalone}: eigenvalues}
	\end{subfigure}
	\begin{subfigure}[b]{0.45\textwidth}
		\includegraphics[width=\textwidth, height= 0.8\textwidth]{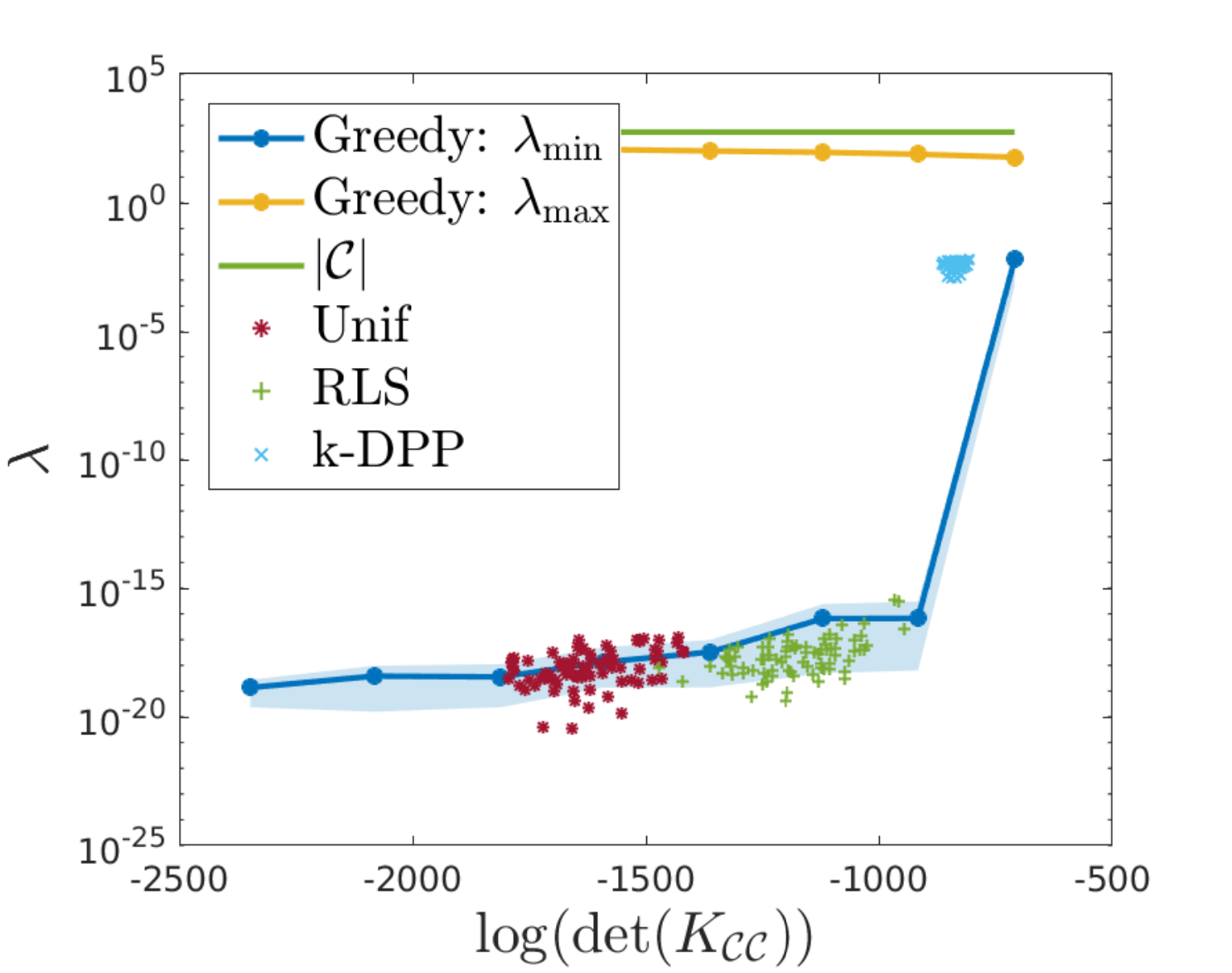}
		\caption{\texttt{Wine Q.}: eigenvalues}
	\end{subfigure}
	\caption{Regression results. The histogram of the RLS of the test set and minimum and maximum eigenvalues of $K_{\mathcal{C}\mathcal{C}}$ versus $\mathrm{det}(K_{\mathcal{C}\mathcal{C}})$, accompanying Figure \ref{fig:Regression}.}\label{fig:EIG_Regression}
\end{figure}
The regularization parameter $\gamma$ is determined by using cross-validation. The results in Figure~\ref{fig:Regression} show that the 3 sampling algorithms follow the general trend of the greedy swapping algorithm. Again here, sampling a more diverse subset, results in a better conditioning of the kernel sub-matrix. 
Diverse sampling has comparable performance for the bulk data, while performing much better in the tail of the data. This confirms the expectations from the regression toy example.  The results for the large-scale experiments are visualized on Figure~\ref{fig:RegressionLS}, where the Symmetric MAPE (SMAPE) is shown in the bulk and tail of the data.

\begin{figure}[h]
		\centering
		\begin{subfigure}[b]{0.45\textwidth}
			\includegraphics[width=\textwidth, height= 0.8\textwidth]{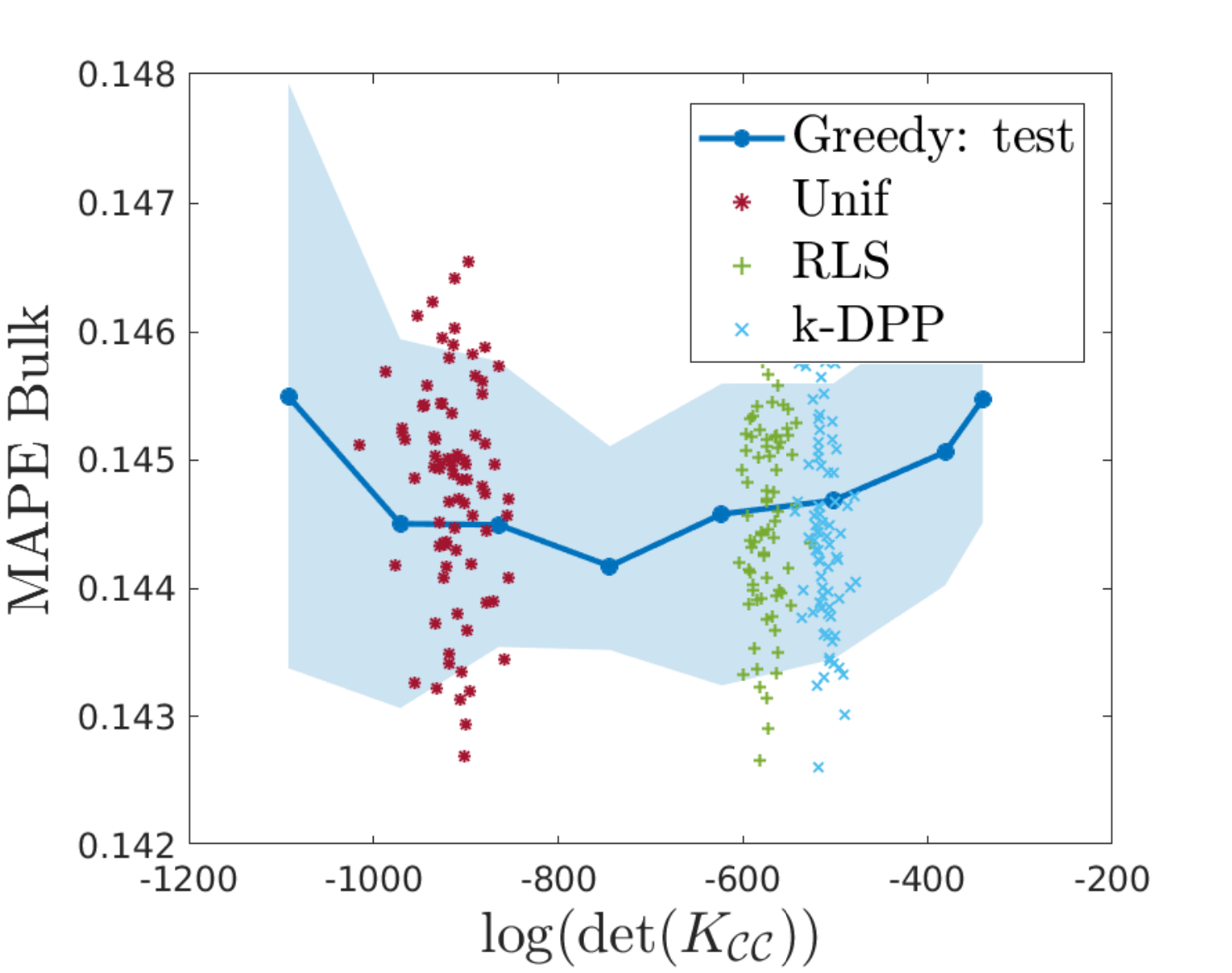}
			\caption{\texttt{Abalone}: MAPE Bulk}
		\end{subfigure}
		\begin{subfigure}[b]{0.45\textwidth}
			\includegraphics[width=\textwidth, height= 0.8\textwidth]{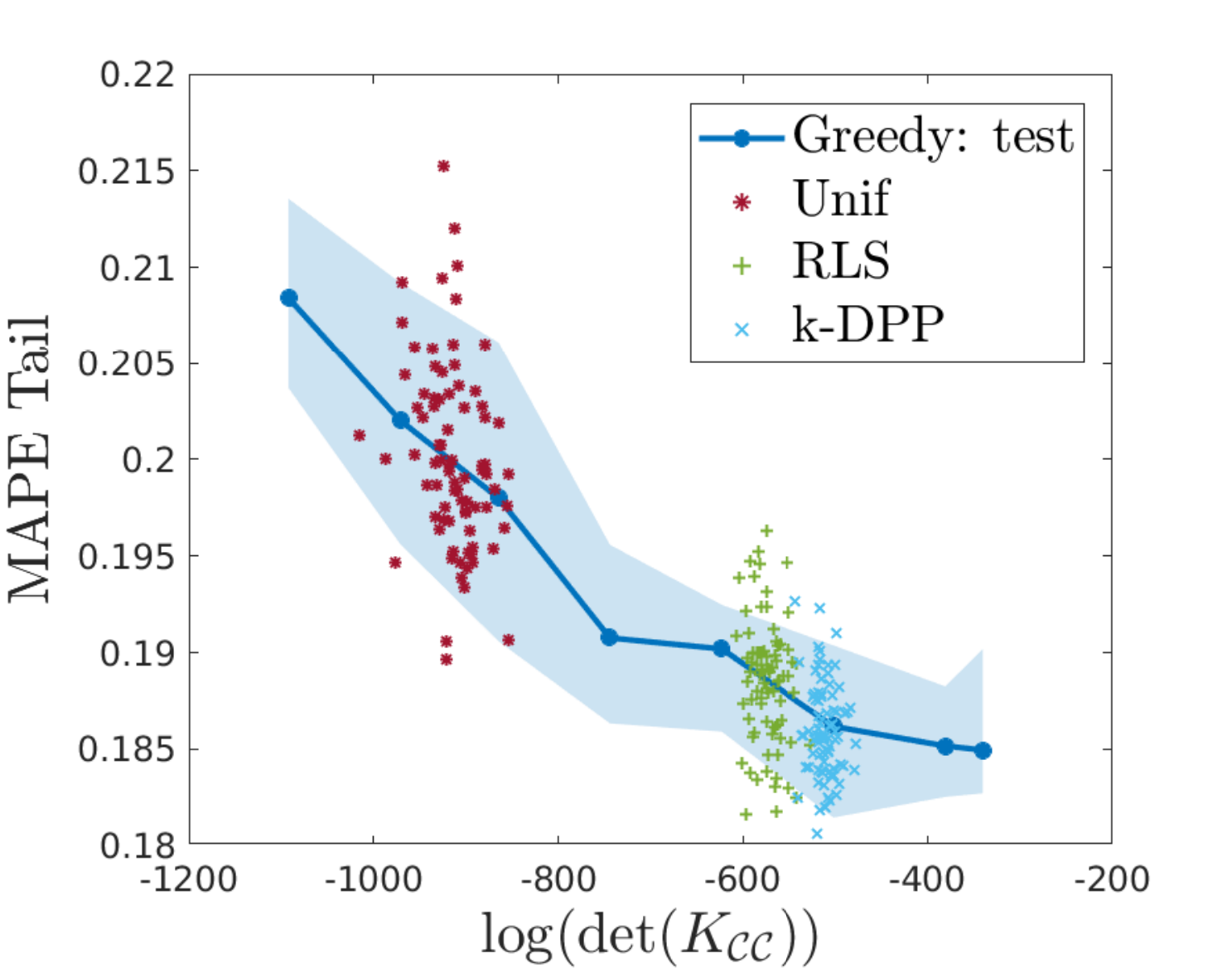}
			\caption{\texttt{Abalone}: MAPE Tail}
		\end{subfigure}
		\begin{subfigure}[b]{0.45\textwidth}
			\includegraphics[width=\textwidth, height= 0.8\textwidth]{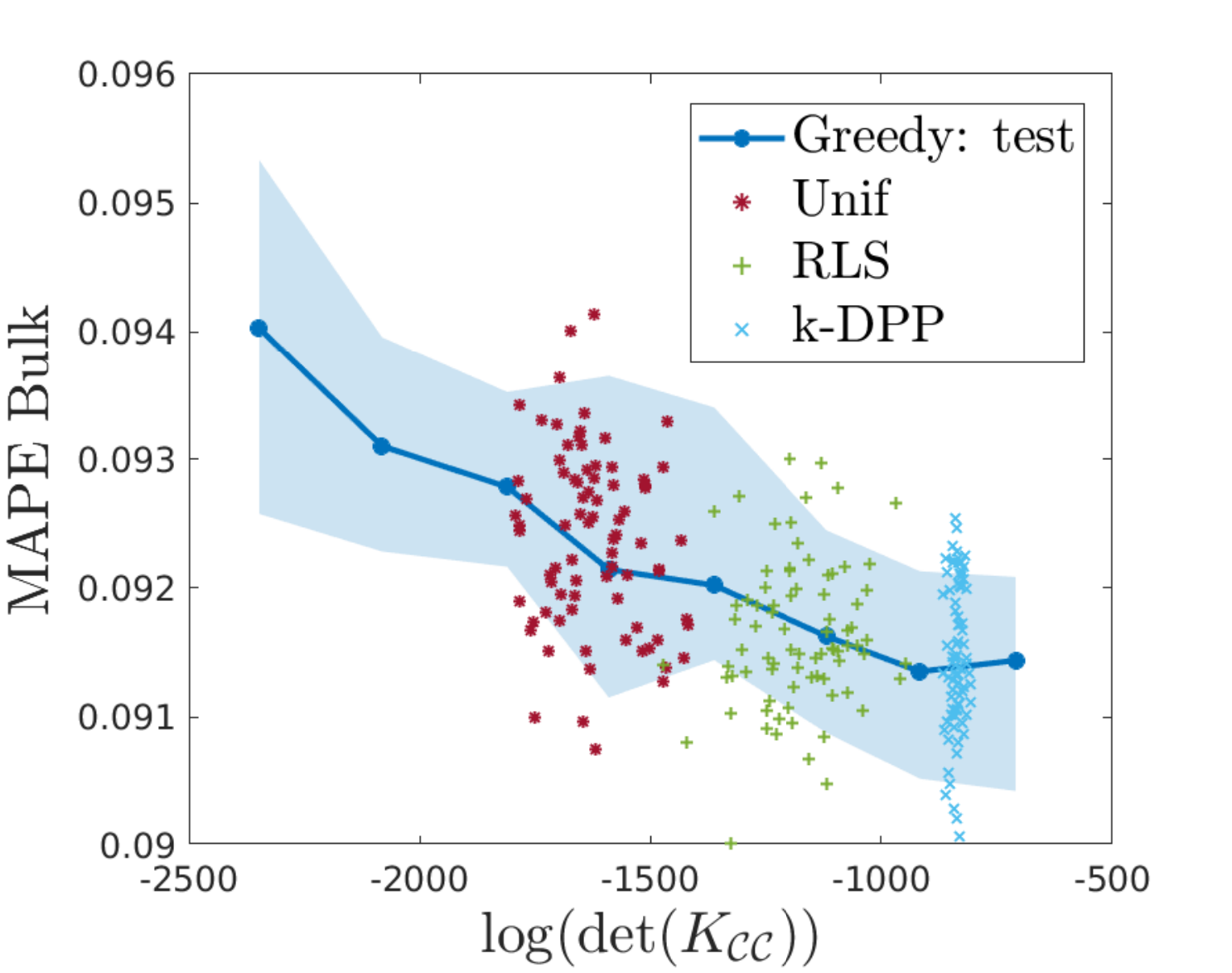}
			\caption{\texttt{Wine Q.}: MAPE Bulk}
		\end{subfigure}
		\begin{subfigure}[b]{0.45\textwidth}
			\includegraphics[width=\textwidth, height= 0.8\textwidth]{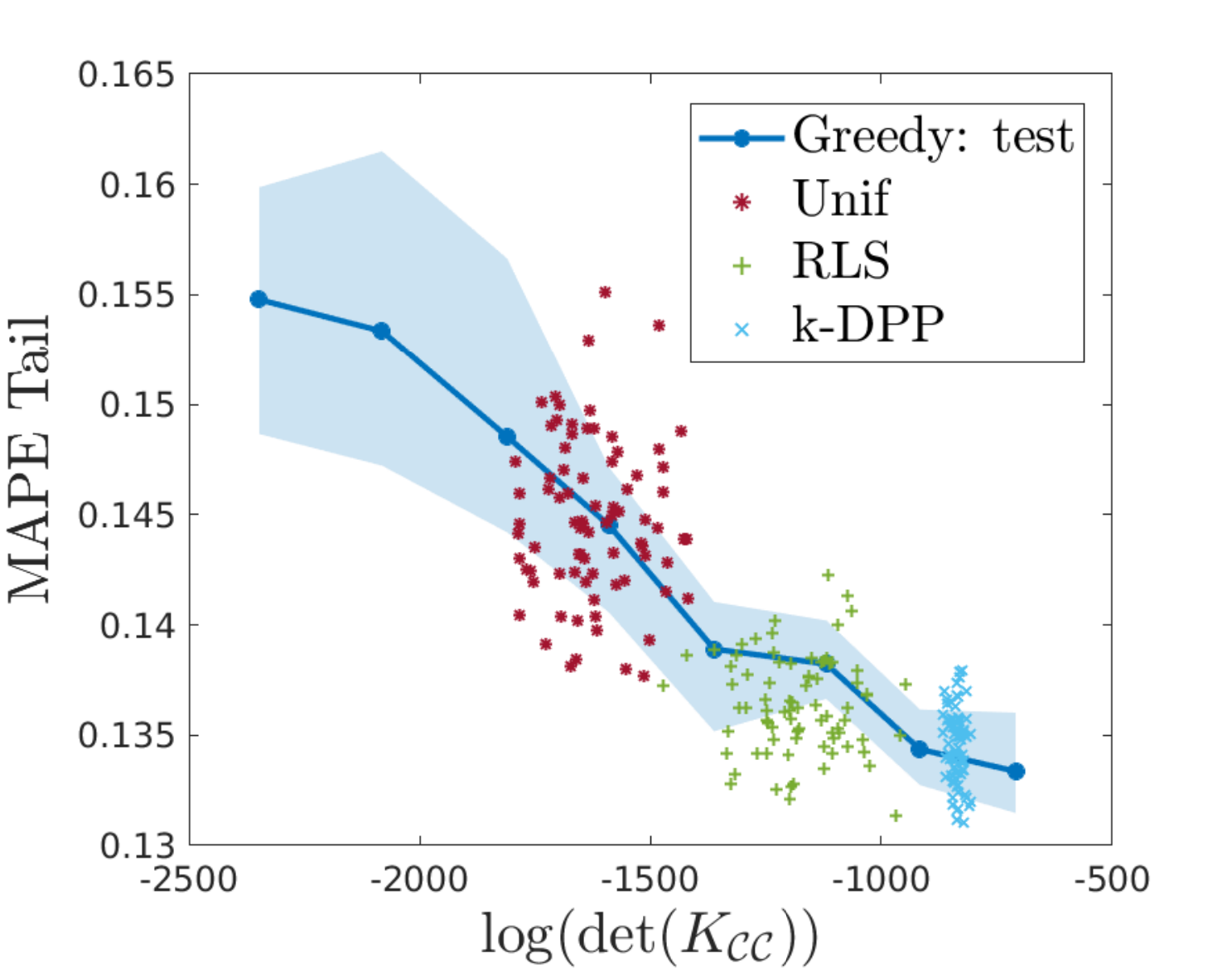}
			\caption{\texttt{Wine Q.}: MAPE Tail}
		\end{subfigure}				
		\caption{Regression results. The condition number and MAPE on the test set are plotted as a function of  $\mathrm{det}(K_{\mathcal{C}\mathcal{C}})$. A small MAPE corresponds to a large accuracy.}\label{fig:Regression}
	\end{figure}

	\begin{figure}[h]
		\centering
		\begin{subfigure}[b]{0.24\textwidth}
			\includegraphics[width=\textwidth, height= 0.95\textwidth]{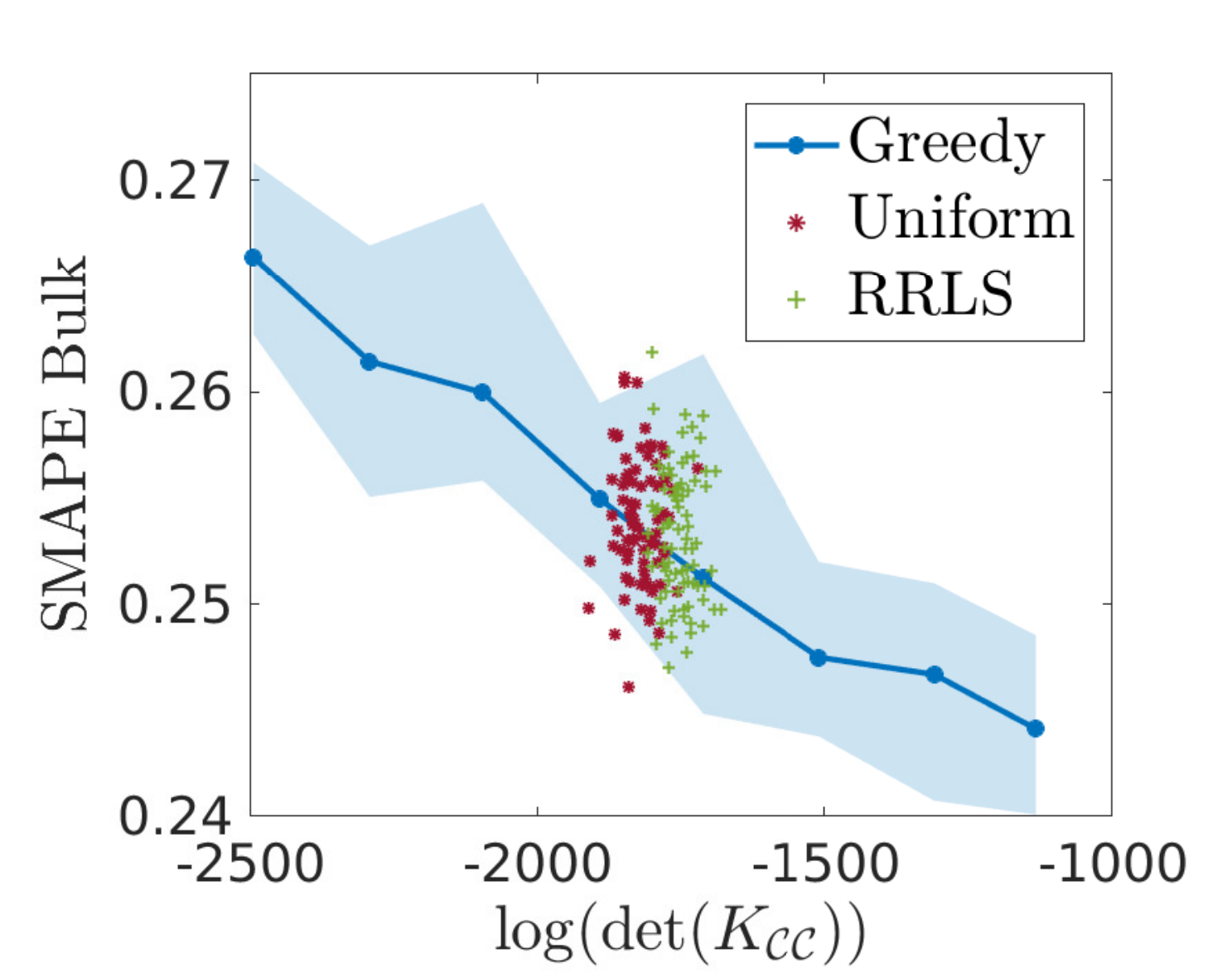}
			\caption{\texttt{Bike S.}: Bulk}
		\end{subfigure}
		\begin{subfigure}[b]{0.24\textwidth}
			\includegraphics[width=1\textwidth, height= 0.95\textwidth]{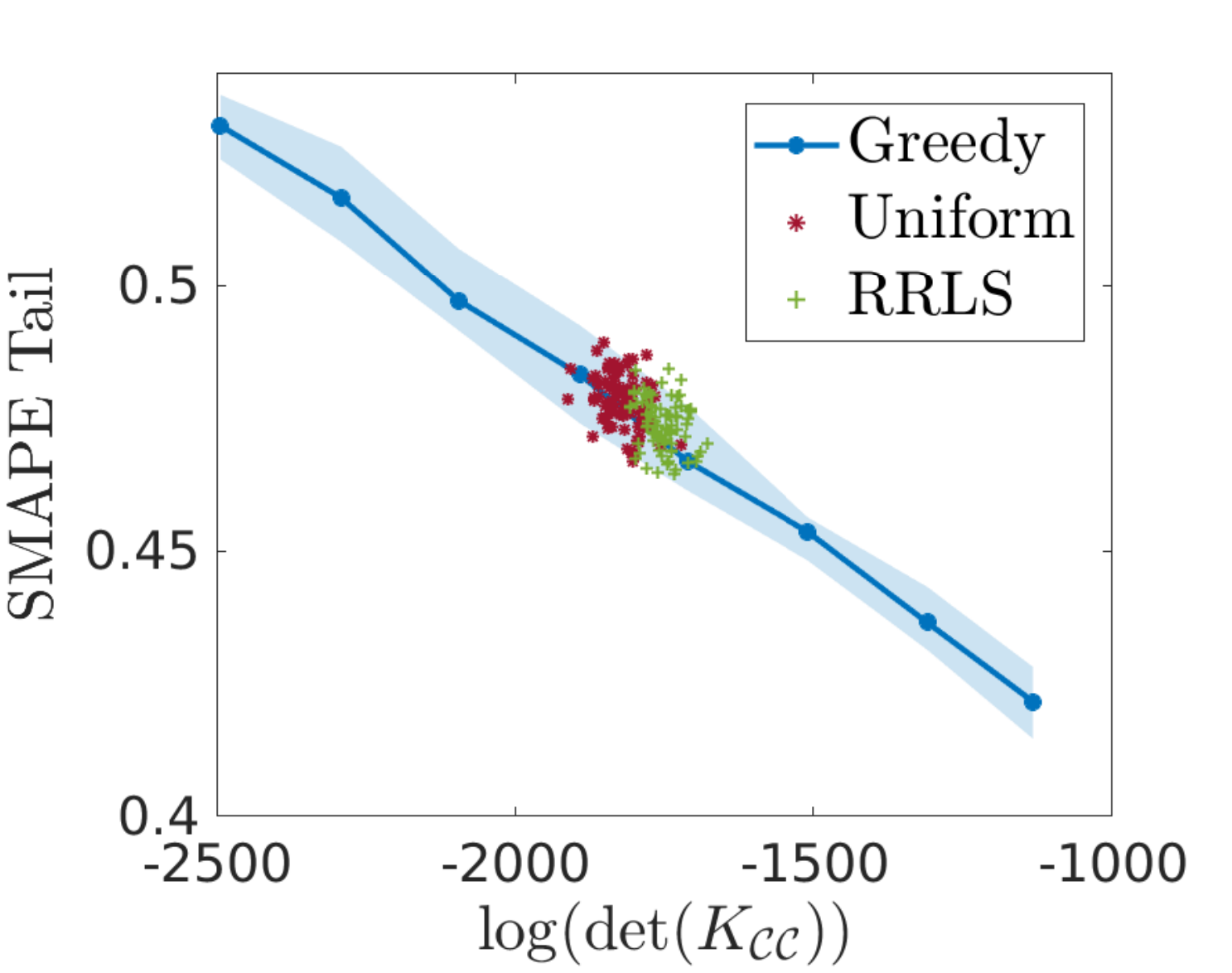}
			\caption{\texttt{Bike S.}: Tail}
		\end{subfigure}
		\begin{subfigure}[b]{0.24\textwidth}
			\includegraphics[width=\textwidth, height= 0.95\textwidth]{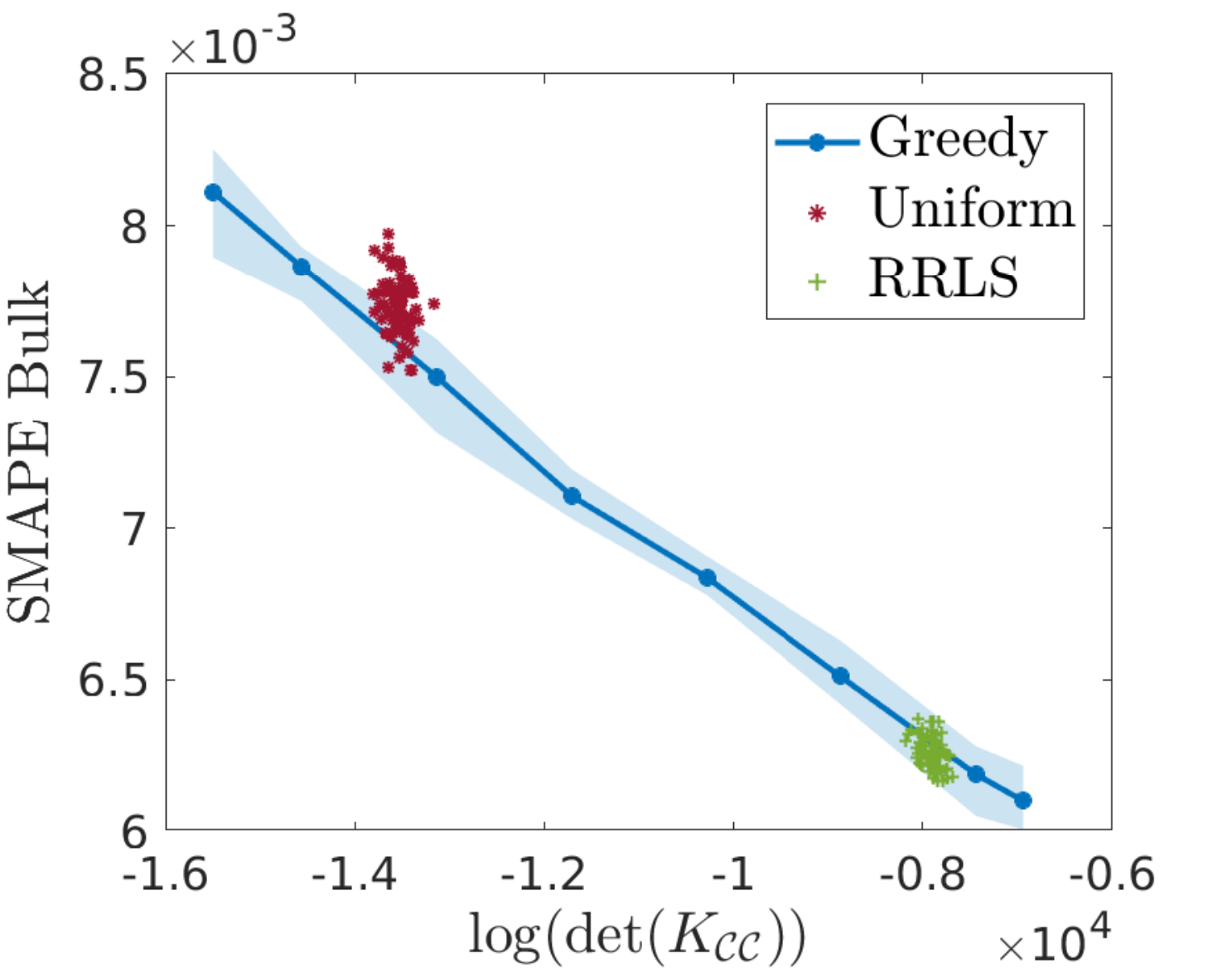}
			\caption{\texttt{Year Pred.}: Bulk}
		\end{subfigure}
		\begin{subfigure}[b]{0.24\textwidth}
			\includegraphics[width=1\textwidth, height= 0.95\textwidth]{Figures/YearPredictionMSD_Perf_Bulk_Reg_Ridge_Div-eps-converted-to.pdf}
			\caption{\texttt{Year Pred.}: Tail}
		\end{subfigure}		
		\caption{Large-scale Regression results. The SMAPE on the bulk and tail of the test set versus the sample diversity. The larger $\mathrm{det}(K_{\mathcal{C}\mathcal{C}})$, the more diverse the subset.}\label{fig:RegressionLS}
	\end{figure}

\paragraph{Preconditioner}
A natural idea is to consider iterative methods to solve the system in \eqref{eq:SmallLS} because of their simplicity and low iteration cost. The speed and accuracy of convergence of the conjugate gradient method depends on the condition number of the linear system~\cite{kershaw1978incomplete}, which makes the use of diverse samplings in combination with iterative methods particularly interesting. This is illustrated on the  \texttt{Parkinson}, and  \texttt{Pumadyn8FM} datasets\footnotemark[1].
The condition number of the preconditioned system is measured, where the preconditionner given in \eqref{eq:precondionerUniform} is used in combination with uniform sampling, RLS and k-DPP in combination with the preconditioner defined in \eqref{eq:precondionerUniform}. The ridge regularization parameter is equal to $\lambda = 10^{-10}$ to illustrate the impact of diversity on badly conditioned systems.
From the results in Figure~\ref{fig:Preconditioner}, we see empirically that sampling a more diverse subset, results in a better conditioning of the linear system. 
\begin{figure}[h]
		\centering
		\begin{subfigure}[t]{0.24\textwidth}
			\includegraphics[width=\textwidth, height= 0.85\textwidth]{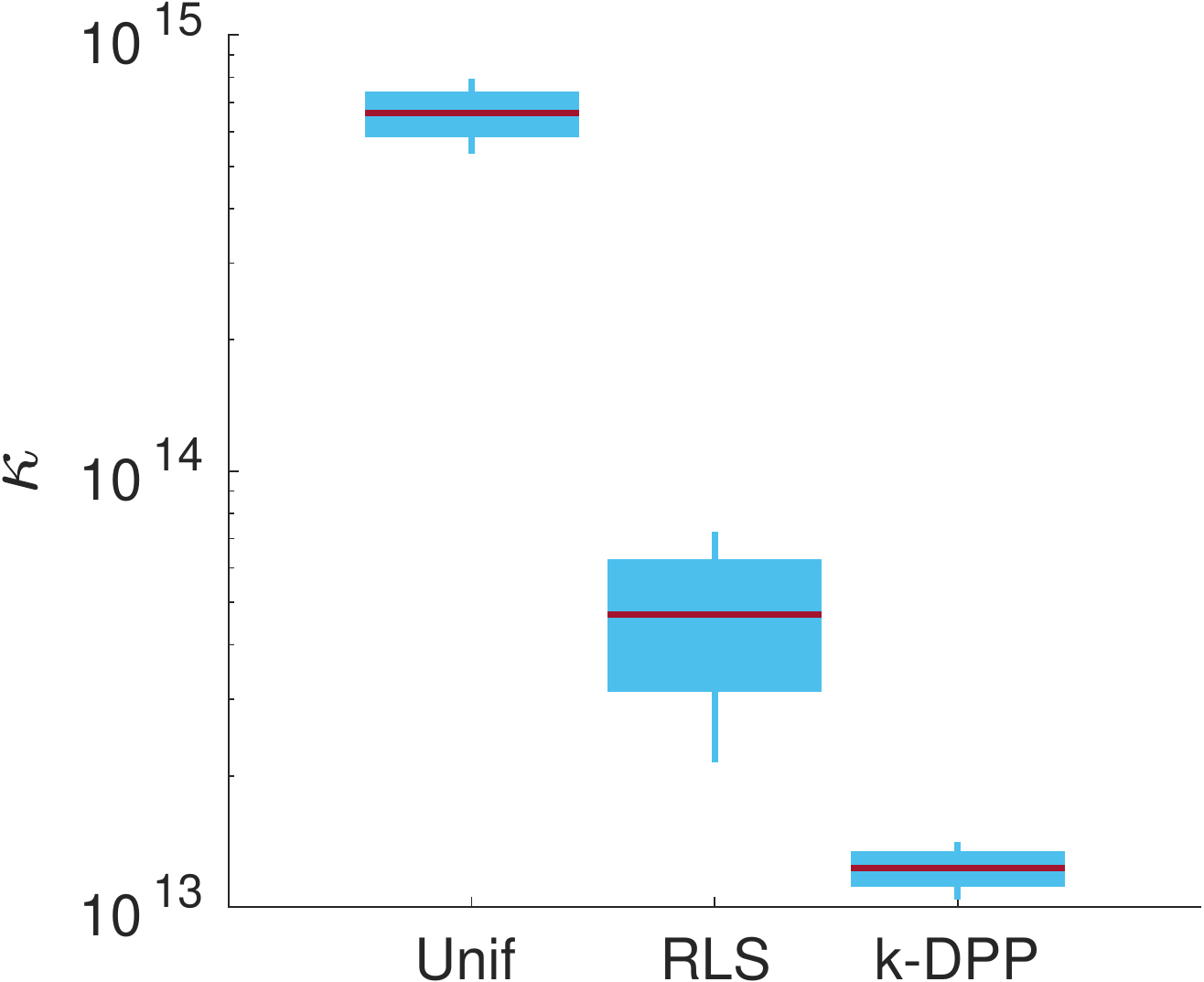}
			\caption{\texttt{Parkinson}}
		\end{subfigure}
		\begin{subfigure}[t]{0.24\textwidth}
			\includegraphics[width=\textwidth, height= 0.91\textwidth]{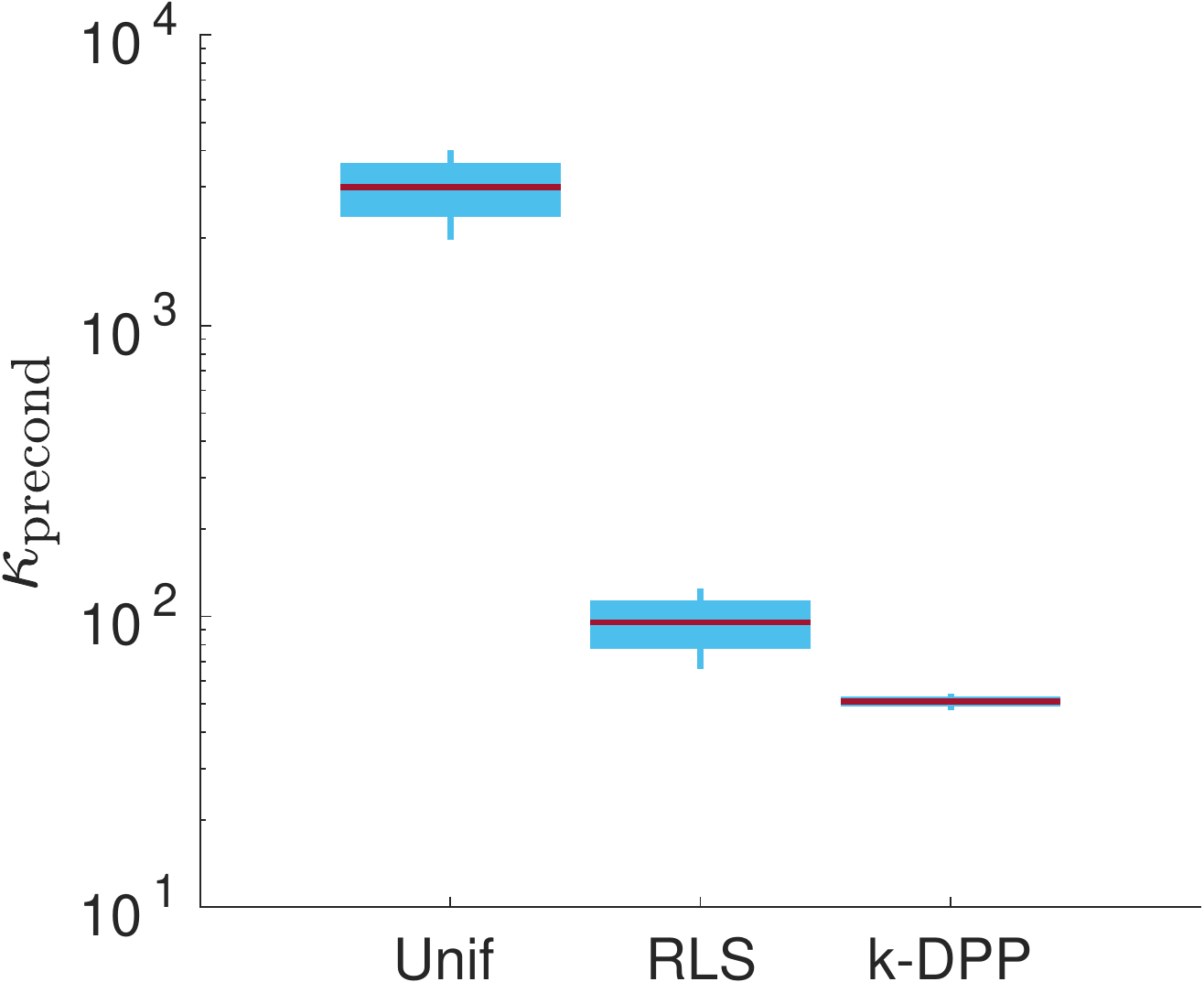}
			\caption{\texttt{Parkinson}}
		\end{subfigure}
		\begin{subfigure}[t]{0.24\textwidth}
			\includegraphics[width=\textwidth, height= 0.86\textwidth]{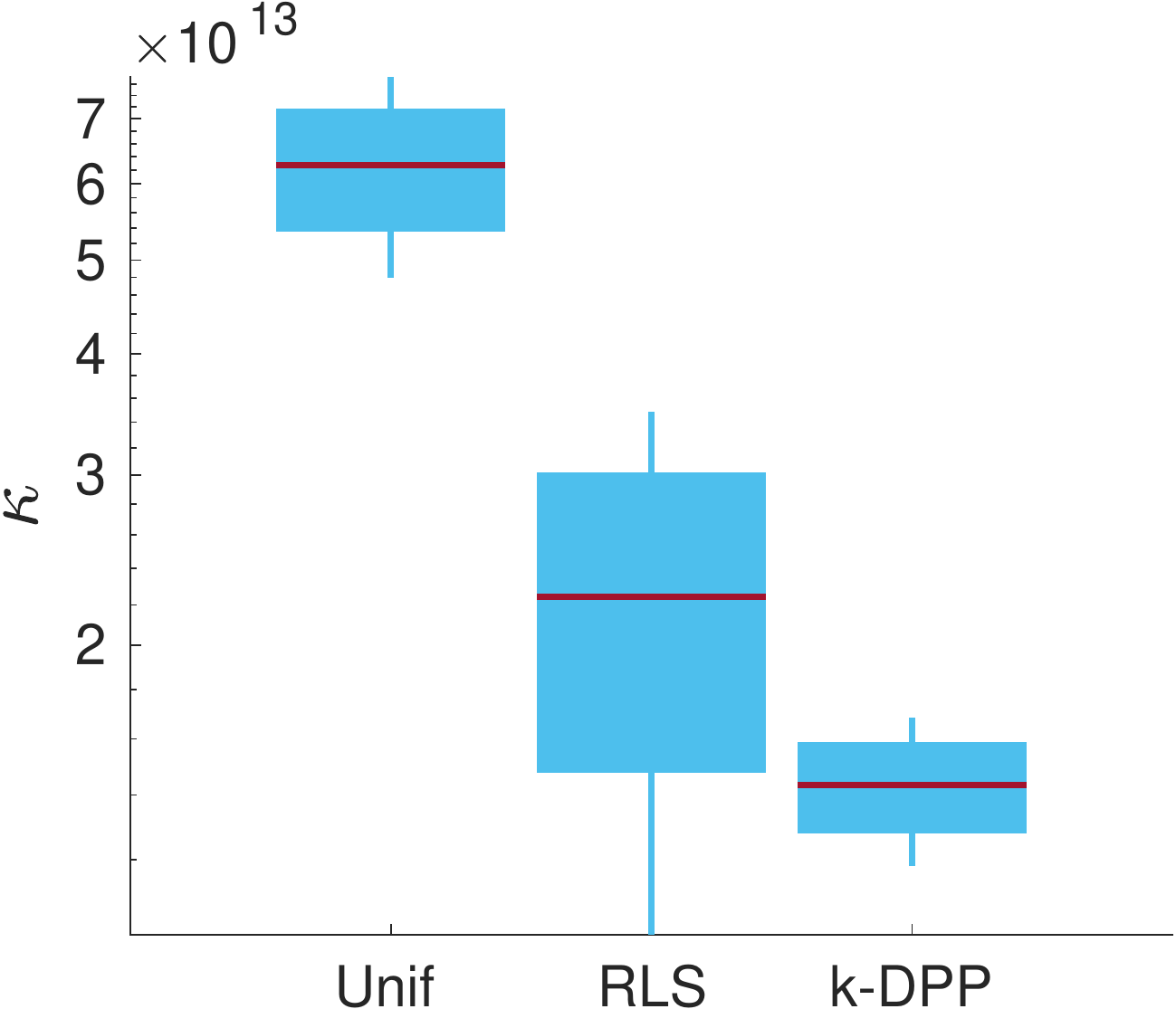}
			\caption{\texttt{Pumadyn8FM}}
		\end{subfigure}	
		\begin{subfigure}[t]{0.24\textwidth}
			\includegraphics[width=\textwidth, height= 0.91\textwidth]{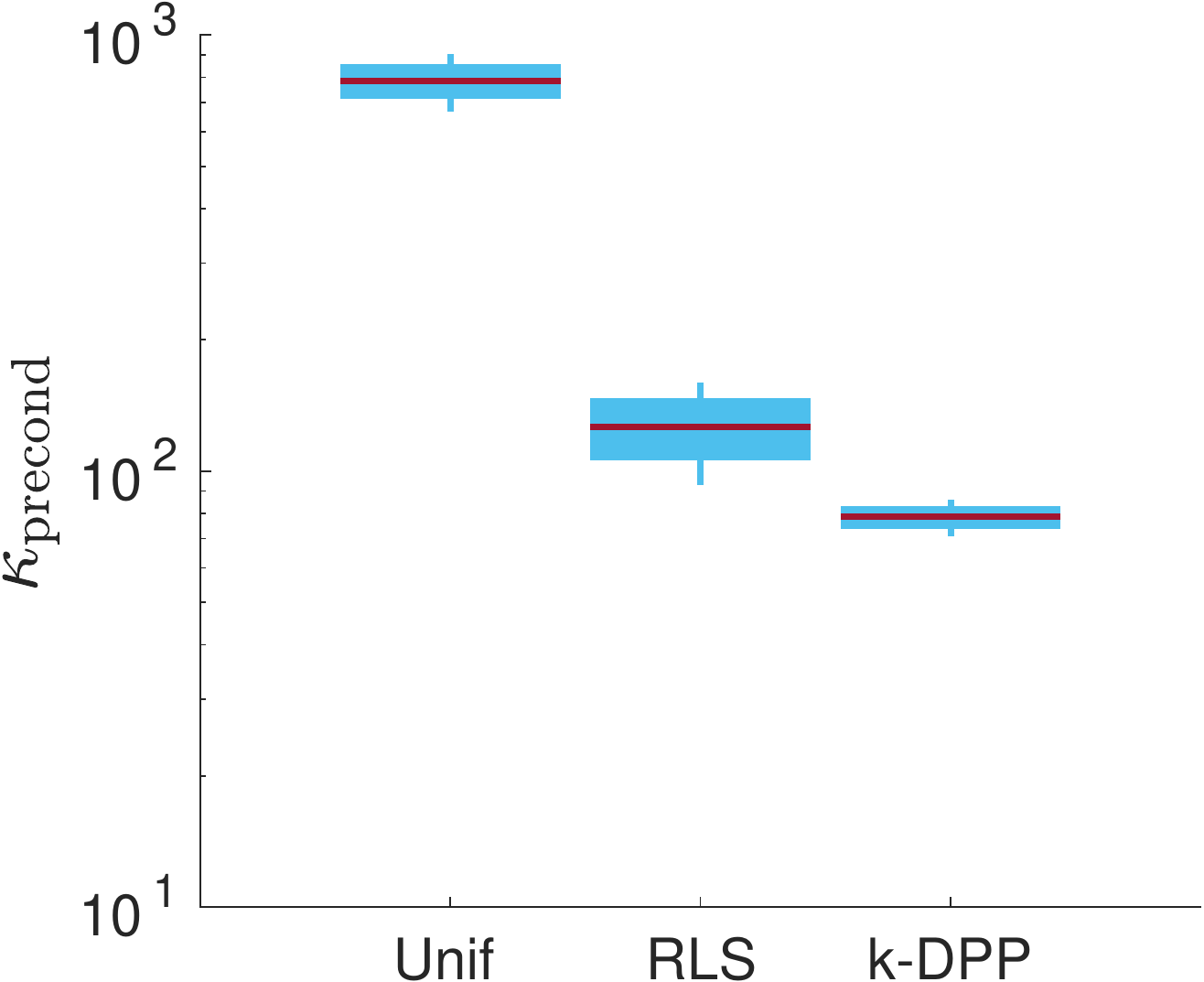}
			\caption{\texttt{Pumadyn8FM}}
		\end{subfigure}			
		\caption{Preconditioning results. The condition number of the linear system before and after the preconditioning is plotted for Uniform, RLS and k-DPP sampling. From left to right, the condition number before and after preconditioning, for \texttt{Parkinson} and \texttt{Pumadyn8FM} datasets, respectively.}\label{fig:Preconditioner}
\end{figure}

\section{Conclusions}
\label{sec:conclusions}

In this paper, the interest of sampling diverse landmarks in the context of Nystr\"om approximation was illustrated. Our empirical findings relating regularization and diversity are partly supported by theoretical results. In the context of Kernel Ridge  Regression, an extra contribution of the paper consists in proposing the use of the Ridge Leverage Score distribution in order to assess the uniformity of the performance of the regressor.
\section*{Acknowledgments}
{\footnotesize EU: The research leading to these results has received funding from the European Research Council under the European Union's Horizon 2020 research and innovation program / ERC Advanced Grant E-DUALITY (787960). This paper reflects only the authors' views and the Union is not liable for any use that may be made of the contained information.Research Council KUL: Optimization frameworks for deep kernel machines C14/18/068. Flemish Government:FWO: projects: GOA4917N (Deep Restricted Kernel Machines: Methods and Foundations), PhD/Postdoc grant.Impulsfonds AI: VR 2019 2203 DOC.0318/1QUATER Kenniscentrum Data en Maatschappij. Ford KU Leuven Research Alliance Project KUL0076 (Stability analysis and performance improvement of deep reinforcement learning algorithms). The computational resources and services used in this work were provided by the VSC (Flemish Supercomputer Center), funded by the Research Foundation - Flanders(FWO) and the Flemish Government – department EWI.}
\appendix
\section{Proofs}

\begin{proof}[Proof of Theorem \ref{Thm1}]Without loss of generality, we put $\alpha =1$.\\
(i)  Then, we first prove $\mathbb{E}_{\mathcal{C}}[C K_{\mathcal{C}\mathcal{C}}^{-1}  C^\top] = (K+\alpha \mathbb{I})^{-1}$.
The matrix inversion lemma yields
\begin{align*}
\det\begin{pmatrix}
K_{\mathcal{C}\mathcal{C}} & C^\top \bm{w}\\
\bm{u}^{\top}C & 1
\end{pmatrix} = \det(K_{\mathcal{C}\mathcal{C}})\left[1 - \bm{u}^{\top}C K_{\mathcal{C}\mathcal{C}}^{-1} C^\top\bm{w}\right] =  \det\left(K_{\mathcal{C}\mathcal{C}}- C^\top\bm{w}\bm{u}^{\top}C\right).
\end{align*}
This simplifies to
\[
\bm{u}^{\top}_{\mathcal{C}} K_{\mathcal{C}\mathcal{C}}^{-1}  \bm{w}_{\mathcal{C}} = 1-\frac{\det(K_{\mathcal{C}\mathcal{C}}-C^\top \bm{w}\bm{u}^{\top}C)}{\det(K_{\mathcal{C}\mathcal{C}})}.
\]
By taking the expectation on both sides, we find
\[
\mathbb{E}_{\mathcal{C}}\left[\bm{u}^{\top}_{\mathcal{C}} K_{\mathcal{C}\mathcal{C}}^{-1}  \bm{w}_{\mathcal{C}}\right] = 1-\frac{\det(\mathbb{I}+K- \bm{w}\bm{u}^{\top})}{\det(\mathbb{I}+K)} = \bm{u}^{\top}(\mathbb{I}+K)^{-1}\bm{w},
\]
where we used $\sum_{\mathcal{C}\subseteq [n]}A_{\mathcal{C}\mathcal{C}} = \mathbb{I} + A$ with $A$ a square matrix.\\
(ii) Secondly, we prove that  $\mathbb{E}_{\mathcal{C}}[\bm{u}^\top C C^\top K  CC^\top \bm{v}]= P_{(2)}\circ K$. We first calculate  by using the marginal kernel
\[
\mathbb{E}_{\mathcal{C}}[1_{i\in \mathcal{C}}1_{j\in \mathcal{C}}] = 1_{i=j}\times P_{ii} + 1_{i\neq j}\times\det P_{\{i,j\}\{i,j\}} = 1_{i=j}\times P_{ii} + (P_{ii}P_{jj}-P_{ij}^2).
\]
The result follows from the following formula
\[
\mathbb{E}_{\mathcal{C}}[\bm{u}^\top C C^\top K  CC^\top \bm{v}] = \sum_{i=1}^n\sum_{j=1}^n \mathbb{E}_{\mathcal{C}}[u_i 1_{i\in \mathcal{C}} K_{ij}1_{j\in \mathcal{C}} v_j],
\]
and by using the linearity of the expectation.
\end{proof}

\begin{proof}[Proof of Corollary \ref{corol2}] We follow the proof strategy of~\cite{{Fastdpp}}.

Let $f(\mathcal{C}) = \bm{w}^{\top}_{\mathcal{C}} K_{\mathcal{C}\mathcal{C}}^{-1}  \bm{w}_{\mathcal{C}}$ a function viewed as $f:\{0,1\}^n \to \mathbb{R}$.
We quote now a simplified result from~\cite{pemantle2014concentration} related to strong Rayleigh measures which generalize in particular DPPs.
\begin{theorem}[Thm 3.2 in cite]
Let $\mathbb{P}$ be strong Rayleigh. Let $f:\{0,1\}^n \to \mathbb{R}$ be $1-$Lipschitz with respect to the Hamming distance. Then, 
\[
\mathbb{P}(|f-\mathbb{E}f|>  a)\leq 5 \exp (-a^2/(48n)).
\]
\end{theorem}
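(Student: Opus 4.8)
We sketch a proof of the quoted concentration theorem. The plan is to read the inequality as a \emph{bounded-differences} (Doob martingale) statement whose only nonstandard ingredient is a control on the martingale increments supplied by the negative-dependence structure of strong Rayleigh measures. Recall that $\mathbb{P}$ is strong Rayleigh exactly when its multiaffine generating polynomial $g(z) = \sum_{S \subseteq [n]} \mathbb{P}(X = S) \prod_{i \in S} z_i$ is real stable, i.e. nonvanishing whenever $\mathrm{Im}\, z_i > 0$ for all $i$. The property driving the argument is the robust closure of this class under conditioning on a single coordinate: passing from $\mathbb{P}$ to $\mathbb{P}(\cdot \mid i \in X)$ or $\mathbb{P}(\cdot \mid i \notin X)$ corresponds to the operations $z_i \to \infty$ and $z_i \to 0$ on $g$, both of which preserve real stability. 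Hence we may reveal the coordinates $X_1, \dots, X_n$ one at a time and remain within the strong Rayleigh class at every step.

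The technical heart is the \emph{stochastic covering property} (SCP). Fixing the filtration $\mathcal{F}_j = \sigma(X_1, \dots, X_j)$ and a realization of $\mathcal{F}_j$, I would compare the two conditional laws on the remaining coordinates obtained by further setting $X_{j+1} = 0$ versus $X_{j+1} = 1$, and show they admit a \emph{covering coupling}: a joint law of samples $(Y^0, Y^1)$ with the prescribed marginals, with $Y^0 \subseteq Y^1$ and $|Y^1 \setminus Y^0| \le 1$. This is where the strong Rayleigh hypothesis is used in full strength, through the negative-association inequalities and the exchange structure of the support of a stable measure. Proving this covering coupling is, I expect, the main obstacle: it is the genuine probabilistic content of \cite{pemantle2014concentration} and cannot be obtained by elementary means.

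Granting the SCP, the concentration bound follows by controlling the log-Laplace transform of $f - \mathbb{E} f$ along the coordinate-revealing martingale $M_j = \mathbb{E}[f \mid \mathcal{F}_j]$, with $M_0 = \mathbb{E} f$ and $M_n = f$. The covering coupling forces any two coupled completions of the $(j+1)$-st step to differ in at most the revealed coordinate plus one further coordinate, so that the $1$-Lipschitz hypothesis on $f$ bounds each conditional exponential moment $\mathbb{E}[e^{\lambda (M_{j+1} - M_j)} \mid \mathcal{F}_j]$ by a factor of the form $\exp(c \lambda^2)$. Multiplying the $n$ one-step estimates gives $\mathbb{E}[e^{\lambda(f - \mathbb{E} f)}] \le \exp(c \lambda^2 n)$; a Chernoff bound optimized over $\lambda$, applied to each of the two tails $\{f - \mathbb{E} f > a\}$ and $\{f - \mathbb{E} f < -a\}$, then produces a subgaussian inequality $C \exp(-a^2/(c' n))$. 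This step is slightly more delicate than vanilla Azuma--Hoeffding because the increment control is one-sided and coupling-based; tracking the absolute constants through the exponential-moment computation yields precisely the prefactor $5$ and the exponent $-a^2/(48 n)$ as stated.

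One clean reduction simplifies the covering step. A general, non-homogeneous strong Rayleigh measure on $\{0,1\}^n$ can be homogenized to a \emph{fixed-size} strong Rayleigh measure on $\{0,1\}^{2n}$ by adjoining to every coordinate $i$ a complementary ``shadow'' coordinate that is occupied exactly when $i$ is not; this preserves real stability and changes the Hamming Lipschitz constant of $f$ only by a fixed factor. On the homogenized measure the support is the set of bases of a matroid, the exchange property is immediate, and the covering coupling becomes transparent, after which the tail bound transfers back to the original measure.
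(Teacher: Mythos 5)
First, a point of comparison: the paper does not prove this statement at all --- it is quoted, in a simplified form, from Pemantle and Peres \cite{pemantle2014concentration} (``We quote now a simplified result\dots''), so the only question is whether your sketch would genuinely establish it. Your outline does reproduce the actual architecture of that paper: closure of the strong Rayleigh class under conditioning on a coordinate, the stochastic covering property (SCP), a coupling-controlled Doob martingale, and the symmetric homogenization of Borcea--Br\"and\'en--Liggett. But you explicitly decline to prove the SCP, and you are right that it is the genuine probabilistic content of the theorem. As it stands, your proposal is therefore not a proof but an annotated citation: every step you carry out is routine, and the one step you defer is the whole theorem --- which puts you in exactly the same position as the paper itself, minus the honesty of a bare citation.

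Second, your quantitative claim is concretely wrong. Grant your own increment bound: if at each of the $n$ coordinate revelations the two conditional laws admit a covering coupling at Hamming distance at most $2$, then the Doob martingale has increments bounded by $2$, and Azuma--Hoeffding gives $\mathbb{P}(|f-\mathbb{E}f|\geq a)\leq 2\exp\left(-a^2/(8n)\right)$ --- prefactor $2$, denominator $8n$. So ``tracking the absolute constants'' through your computation does not ``yield precisely the prefactor $5$ and the exponent $-a^2/(48n)$''; it yields a different (in fact stronger) bound, which shows that the stated constants cannot arise from a plain bounded-difference argument. In Pemantle--Peres they come from elsewhere: the general theorem reads $5\exp\bigl(-a^2/(16(a+2\mu))\bigr)$ with $\mu=\mathbb{E}|X|$, and is proved by conditioning on the sample size $|X|$ --- whose law under a strong Rayleigh measure is that of a sum of independent Bernoulli variables, hence has Bernstein-type tails --- applying the homogeneous concentration bound of the form $\exp(-a^2/(8k))$ on each level $\{|X|=k\}$ (truncation to a level preserves the strong Rayleigh property), and union-bounding; that is where the $5$ and the $a+2\mu$ originate. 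The $48n$ in the quoted statement is then just the weakening $16(a+2\mu)\leq 48n$, valid because one may assume $a\leq n$ (the $1$-Lipschitz $f$ has oscillation at most $n$) and $\mu\leq n$. Your homogenization paragraph gestures at the right toolbox --- it is indeed a BBL theorem that symmetric homogenization preserves strong Rayleigh --- but that route, too, produces constants depending on $n$ rather than on $\mu$, and not the ones you claim.
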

Let $\delta\geq 5 \exp (-a^2/(48n))$.  This means $ a\geq (48 n\log(5/\delta))^{1/2}$. Then, with probability less than $\delta$, 
\[
|f-\mathbb{E}f|> \ell a\geq \ell \sqrt{48 n\log(5/\delta)}.
\]
We now need to calculate an upper bound on the Lipschitz constant of $f$ with respect to the Hamming distance. To do so, it is sufficient to consider two binary vectors  $C$ and $C'\in\{0,1\}^n$  differing of one digit. Say that $f(C)\geq f(C')\geq 0$.
Then, we have
\[
|f(C)-f(C')|\leq f(C)- f(C')\leq \bm{w}^{\top}_{\mathcal{C}} K_{\mathcal{C}\mathcal{C}}^{-1}  \bm{w}_{\mathcal{C}}\leq 1/\lambda_{\min}(K) = d_H(C,C')/\lambda_{\min}(K),
\]
since $\|w\|_2 = 1$ and where $d_H(C,C')$ is the Hamming distance between the binary vectors $C$ and $C'$. The final result follows by taking $\ell =1/\lambda_{\min}(K)$.
\end{proof}

\begin{proof}[Proof of Theorem \ref{thm:projection-cost}]
Since $X$ is a orthogonal projector and $L_{\mathcal{C}}\preceq K$, we have $\Tr(XL_{\mathcal{C}}X)\leq \Tr(XKX)$, which yields
\[
\Tr(K-XKX) \leq \Tr(L_{\mathcal{C}}-XL_{\mathcal{C}}X) + \Tr(K-L_{\mathcal{C}}) = \Tr(K)-\Tr(XL_{\mathcal{C}}X).
\]
By taking the expectation over $\mathcal{C}\sim DPP(K/\alpha)$ on both sides of the above inequality, and by using $\mathbb{E}_{\mathcal{C}}(L_{\mathcal{C}}) = K-\alpha P$, we obtain
\[
\Tr(K-XKX) \leq \mathbb{E}_{\mathcal{C}}[\Tr(L_{\mathcal{C}}-XL_{\mathcal{C}}X)] + \alpha d_{\rm eff}(K/\alpha) = \Tr(K-XKX)+ \alpha \Tr(XPX).
\]
Finally, by using that $X$ is a projector, it holds that $\Tr(XPX)\leq \Tr(X) = k$ since $P\preceq \mathbb{I}$. Also, we have $\Tr(XPX)\leq \Tr(P) = d_{\rm eff}(K/\alpha)$, so that the final bound is obtained.
\end{proof}
\begin{proof}[Proof of Theorem \ref{thm:RiskKRR}]
The risk  $\mathcal{R}(\hat{z}) = \frac{1}{n} \mathbb{E}_\epsilon \|\hat{z} - z\|_2^2$ is decomposed in terms of bias and variance $\mathcal{R}(\hat{z}) =  \text{bias}^2(K) + \text{var}(K)$, where the bias reads
\[
\text{bias}(K) = \sqrt{n\gamma^2 z^\top (K+n\gamma\mathbb{I})^{-2}z}
\]
and the variance is
\[
\text{var}(K) = \frac{\sigma^2}{n}\Tr[K^2(K+n\gamma\mathbb{I})^{-2}].
\]
Firstly, since $L_{\mathcal{C}}\preceq K$, it holds that $\lambda_\ell (L_{\mathcal{C}})\leq \lambda_\ell (K)$ for all $\ell$ and $\text{var}(L_{\mathcal{C}})\preceq \text{var}(K)$.
Then, we can give an upper bound for $\text{bias}(L_{\mathcal{C}})$. The $2$-norm submultiplicativity gives
\begin{align*}
\| (L_{\mathcal{C}} + n\gamma\mathbb{I})^{-1}z - (K + n\gamma\mathbb{I})^{-1}z \|_2 &\leq \| (L_{\mathcal{C}} + n\gamma\mathbb{I})^{-1}(K-L_{\mathcal{C}})\|_2  \|(K + n\gamma\mathbb{I})^{-1}z \|_2\\ &\leq \frac{\Tr(K-L_{\mathcal{C}}) }{n\gamma} \|(K + n\gamma\mathbb{I})^{-1}z \|_2.
\end{align*}
Hence, by using the triangle inequality and the bound hereabove, it holds that 
\begin{align*}
\sqrt{z^\top (L_{\mathcal{C}}+n\gamma\mathbb{I})^{-2}z}&\leq \|(K + n\gamma\mathbb{I})^{-1}z \|_2 + \| (L_{\mathcal{C}} + n\gamma\mathbb{I})^{-1}z - (K + n\gamma\mathbb{I})^{-1}z \|_2\\
&\leq (1+\frac{\Tr(K-L_{\mathcal{C}}) }{n\gamma})\sqrt{z^\top (K+n\gamma\mathbb{I})^{-2}z}.
\end{align*}
The result follows by taking the expectation over $\mathcal{C}\sim DPP_{L}(K/\alpha)$.
\end{proof}

\begin{proof}[Proof of Lemma~\ref{Lem:Estimator}]
This results is a direct consequence of the two following identities:
$
\mathbb{E}_{\mathcal{C}}[1_{i\in \mathcal{C}}] =  P_{ii},
$
and 
\[
\mathbb{E}_{\mathcal{C}}[1_{i\in \mathcal{C}}1_{j\in \mathcal{C}}] = 1_{i=j}\times P_{ii} + (P_{ii}P_{jj}-P_{ij}^2).
\]
\end{proof}
\subsection{Parameters and dataset descriptions}
The parameters and datasets used in the simulations can be found in Table~\ref{Table:data}. When a subset is sampled from $k$-DPP, the number of landmarks is fixed to $k = d_{\mathrm{eff}}(n\lambda)$, where the effective dimension corresponds to the expected subset size for $DPP_L(K/(n\lambda))$.
\begin{table}[h]
	\caption{Datasets and parameters used for the experiments on the Nystr\"om approximation.}
	\label{Table:data}
        \begin{center}
                \begin{tabular}{rccccccc}
                        \toprule
                        Dataset & Task &$n$ & $d$  & $\sigma$ & $\lambda$ & $d_{\mathrm{eff}}(n\lambda)$ & $n_{RRLS}$ \\ \midrule
                        \texttt{Housing}& Kernel approx. & $506$ & $13$  & $5$ & $10^{-6}$& $186$ & / \\ 
 \texttt{MiniBooNE}& Kernel approx. & $130065$ & $50$  & $8$ & $10^{-6}$& $462$ & $8000$ \\
 \texttt{codRNA}& Kernel approx. & $331152$ & $8$  & $8$ & $10^{-6}$& $1204$ & $8000$ \\ \midrule
			\texttt{B. Cancer}& KPCA & $569$ & $30$  & $ 10$ & $10^{-6}$& $158$ & /\\
			\texttt{A. Credit}& KPCA & $690$ & $14 $ & $5$ &$10^{-6}$& $371$  & /   \\
\texttt{Adult}& KPCA & $48842$ & $110 $ & $8$ &$10^{-6}$& $1202$  & $8000$   \\
\texttt{Covertype}& KPCA & $581012$ & $54 $ & $20$ &$10^{-6}$& $3665$  & $10000$   \\			
			 \midrule
 			\texttt{Abalone}& KRR & $4177$ &$ 8 $ & $1$ & $10^{-4}$& $294$ & / \\
 			\texttt{Wine Quality}& KRR & $6497$ & $11 $ &$2$ & $10^{-4}$& $555$ & /  \\ 
	\texttt{Bike S.}& KRR & $17389$ & $16 $ &$1$ & $10^{-4}$& $2732$ & $8000$ \\  			
 	\texttt{Year Pred.}& KRR & $515345$ & $90 $ &$10$ & $10^{-4}$& $4217$ & $10000$ \\ 			
 			\midrule
 			\texttt{Parkinson}& Prec. & $5875$ & $20 $ & $5$ & $10^{-6}$& $738$ & / \\
 			\texttt{Pumadyn8FM}& Prec. & $8192$ & $25 $ &$5$ & $10^{-6}$& $296$ & / \\ \bottomrule
 		\end{tabular}
 	\end{center}
 \end{table}

\section{Supplementary Material}
\paragraph{Setting}
In the performance plots displayed in the sequel, i.e., Figures~\ref{fig:ClusteringSupp},  \ref{fig:NystromSupp}, \ref{fig:LSSupp}, \ref{fig:KPCASupp50}, \ref{fig:RegressionSupp}, \ref{fig:RegressionLSSupp} and \ref{fig:NystromLSSupp}, the results are plotted on a logarithmic scale, averaged over 10 trials and the errobars show the $0.05$ and $0.95$ quantile. Recall that the larger the $\mathrm{log}(\mathrm{det}(K_{\mathcal{C}\mathcal{C}}))$, the more diverse the subset. The computer used for the small-scale simulations has 8 processors 3.40GHz and 15.5 GB of RAM. Large scale experiments on \texttt{Covertype} and \texttt{Year PredictionMSD} were done on the Vlaams Super Computer (VSC). The implementation of the algorithms is done with matlabR2018b.

\subsection{Additional case study: Clustering}
The performance of diverse kernel approximation methods is evaluated for a clustering task using kernel k-means with Nystr\"om approximation~\cite{wang2019scalable}. Samples with different diversities are sampled, afterwards \emph{Algorithm 2} of Wang et al.~\cite{wang2019scalable} is used to cluster the dataset, where the target dimension $s = k$ is always equal to the number of desired clusters.
The clustering performance is evaluated by the normalized mutual information (NMI) \cite{strehl2002cluster}, the NMI gives a value between 0 and 1, where 1 represents perfect correlation between the ground truth and the clustering outcome. We first illustrate the effect of using a diverse sampling on a toy example. In Figure \ref{fig:ToyExampleClusteringSupp}, we show a very imbalanced dataset consisting of 5 Gaussian bumps with a different number of points and different variances. Uniform sampling often only selects landmarks from 3 out of the 5 clusters, whereas a $k$-DPP samples landmarks out of every cluster. Consequently, kernel k-means algorithm with Nystr\"om approximation can be improved by using diverse sampling. It is important to note that the superior performance is due to the histogram of ridge-leverage scores with $\gamma = 10^{-4}$ (measure of outlyingness) having a long tail (cfr. Figure~\ref{fig:EIG_Regression}) together with the different clusters being heavily imbalanced. 
Next, we demonstrate the effect of diverse sampling on the \texttt{Glass}, \texttt{Breast Cancer} and \texttt{Australian Credit} datasets\footnotemark[1] of Table~\ref{Table:ClusteringSupp}.
\begin{table}[h]				
\centering
\caption{Information on the datasets and parameters used in the clustering experiments.}
				\label{Table:ClusteringSupp}
				\begin{tabular}{rcccccc}
					\toprule
					Dataset & $n$ & $d$  & $\sigma$ & $\lambda$ & $d_{\mathrm{eff}}(n\lambda)$ & \# Clusters \\ \midrule
					\texttt{Glass} & $214$ & $9$ & $9$ & $10^{-4}$& $24$ & $7$ \\				
					\texttt{Cancer} & $569$ & $30$  &$ 3$ & $10^{-4}$& $363$  & $2$ \\
					\texttt{Credit} & $690$ & $14 $ & $2$ &$10^{-4}$& $456$  & $2$  \\ \bottomrule
				\end{tabular}
%				\begin{tabular}{rccc}
%					\toprule
%					Dataset & \texttt{Glass} & \texttt{B. Cancer} & \texttt{A Credit}  \\ \midrule
%					$n$ & $214$ & $569$ & $690$ \\				
%					$d$ & $214$ & $569$ & $690$ \\	
%					$\sigma$& $214$ & $569$ & $690$ \\	
%					$\gamma$ & $214$ & $569$ & $690$ \\	
%					$d_{\mathrm{eff}}(n\gamma)$ & $214$ & $569$ & $690$ \\	
%					\# Clusters & $214$ & $569$ & $690$ \\	 \bottomrule
%				\end{tabular}
\end{table}		
The condition number, smallest/largest eigenvalues of $K_{\mathcal{C}\mathcal{C}}$ and NMI is plotted as a function of the determinant. The averaged results are visualized in Figure~\ref{fig:ClusteringSupp}. Information on the datasets and hyperparameters used for the experiments is given in Table \ref{Table:ClusteringSupp}. Sampling a diverse subset gives a more accurate clustering. 
Similar as for the kernel approximation experiments, in the presence of outliers, taking samples with an extremely large $\mathrm{det}(K_{\mathcal{C}\mathcal{C}})$ thanks to the Greedy Swapping Algorithm might decrease the accuracy as for the \texttt{Glass} dataset.
\begin{figure}[b]
	\centering
	\begin{subfigure}[t]{0.45\textwidth}
		\includegraphics[width=\textwidth, height= 0.8\textwidth]{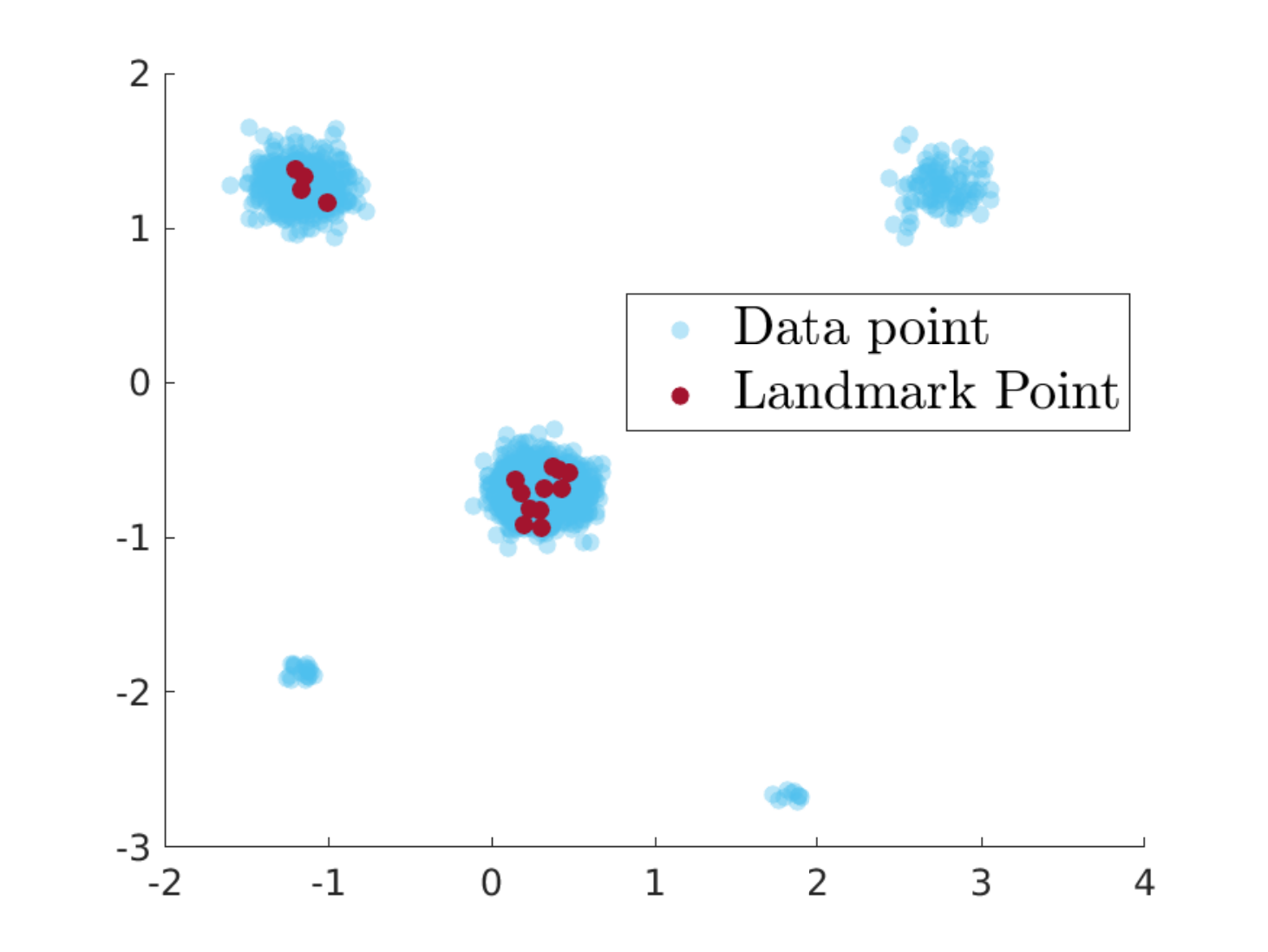}
		\caption{Uniform sampling}
	\end{subfigure}
	\begin{subfigure}[t]{0.45\textwidth}
		\includegraphics[width=\textwidth, height= 0.8\textwidth]{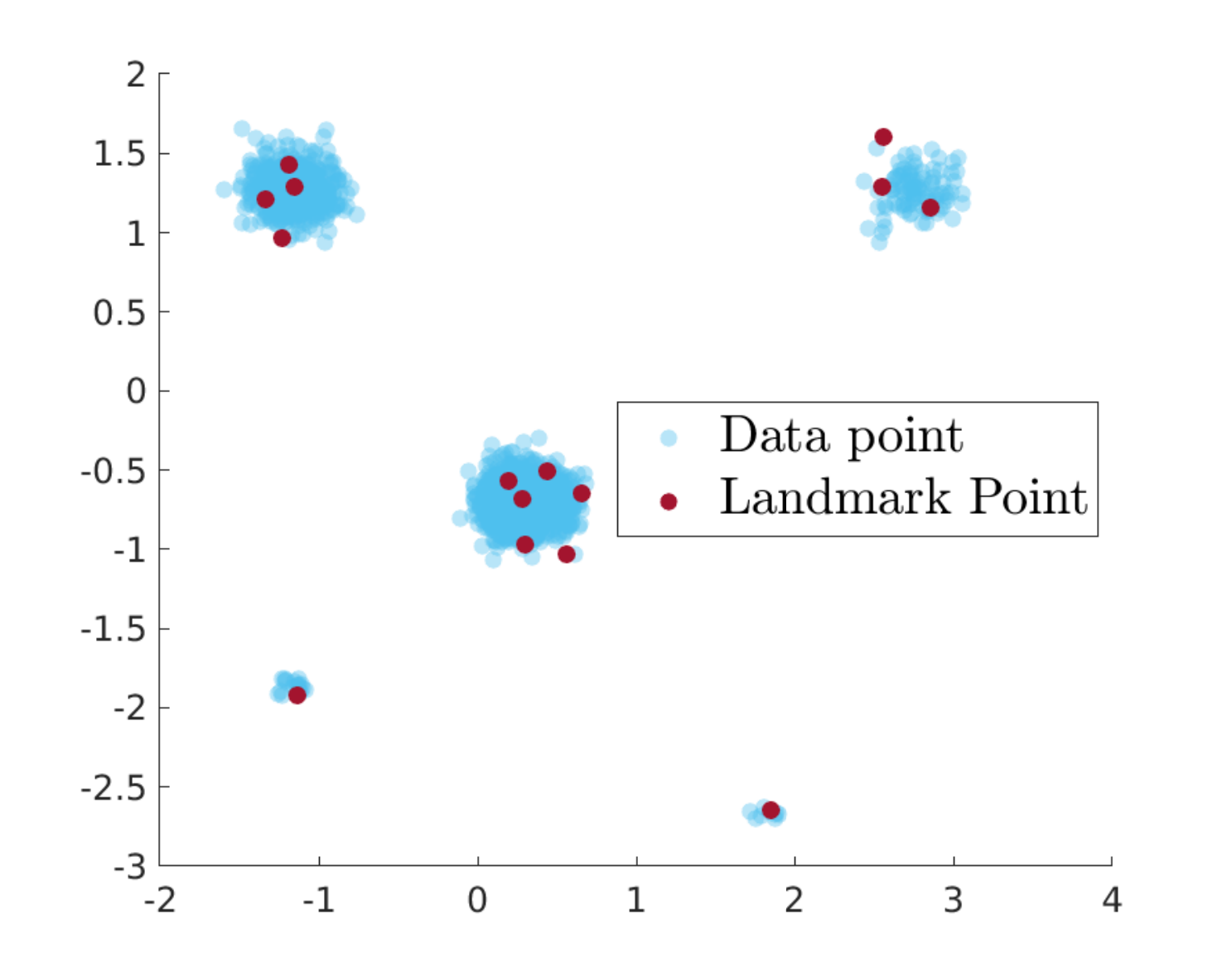}
		\caption{k-DPP sampling}
	\end{subfigure}
	\begin{subfigure}[t]{0.45\textwidth}
		\includegraphics[width=\textwidth, height= 0.8\textwidth]{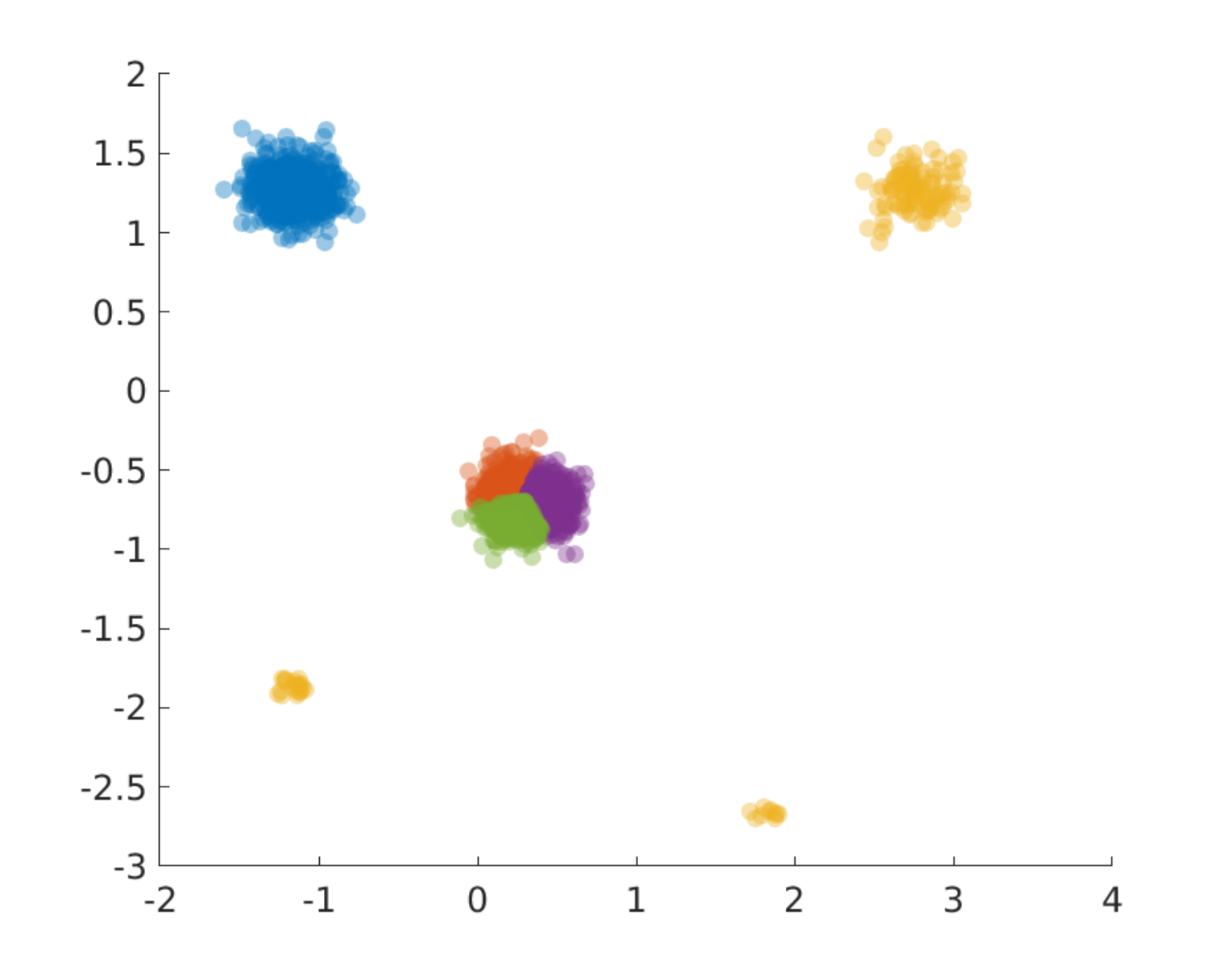}
		\caption{Clustering: Uniform}
	\end{subfigure}
	\begin{subfigure}[t]{0.45\textwidth}
		\includegraphics[width=\textwidth, height= 0.8\textwidth]{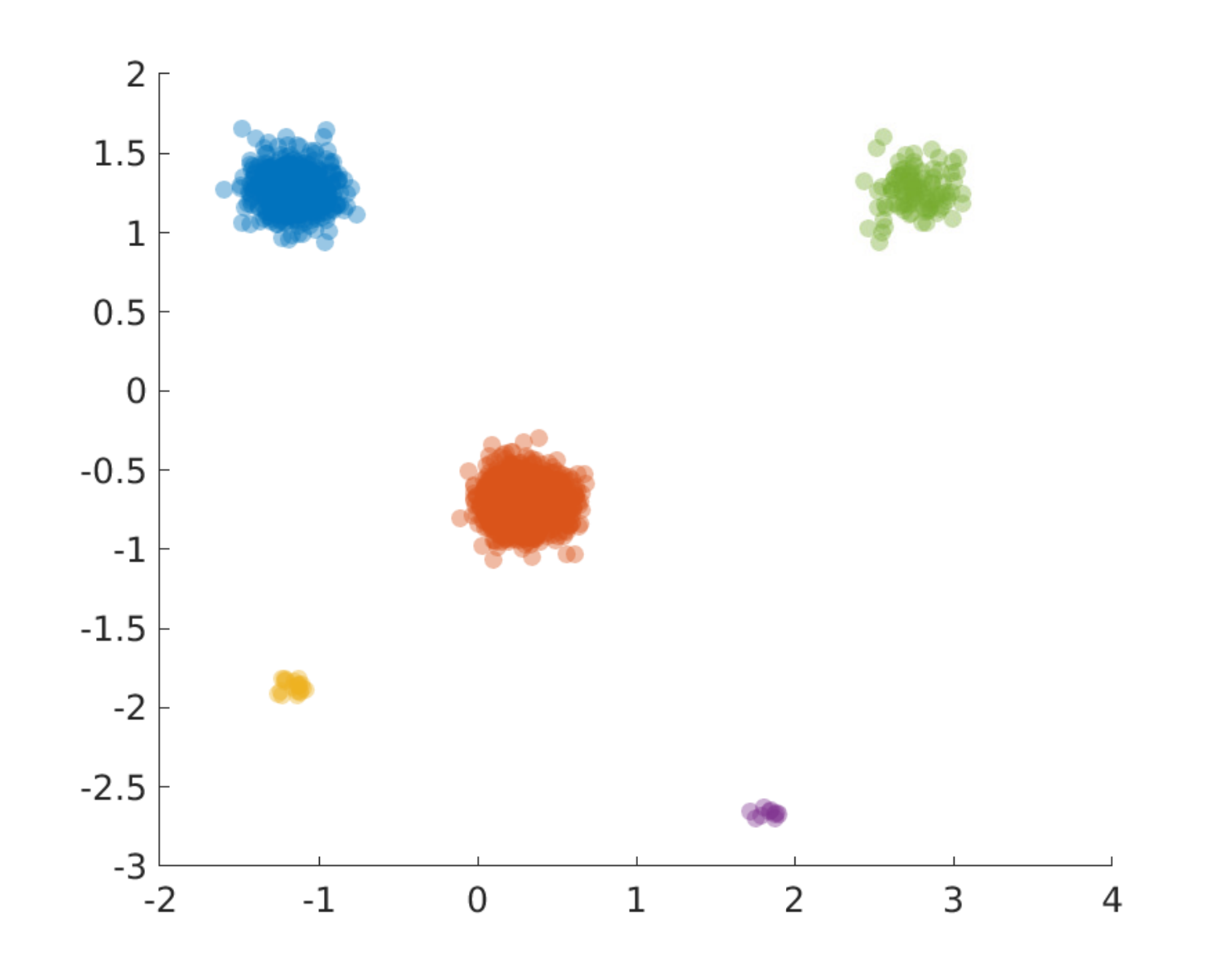}
		\caption{Clustering: k-DPP}
	\end{subfigure}		
	\caption{Illustration of sampling methods on an artificial clustering problem. Uniform sampling oversamples dense parts, and does not select landmark points in every cluster. $k$-DPP sampling overcomes this limitation, resulting in better clustering performance.}\label{fig:ToyExampleClusteringSupp}
\end{figure}
%\begin{figure}[H]
%		\centering
%		\includegraphics[width=0.5\textwidth]{}
%		\caption{The ridge leverage scores of the clustering toy example.}
%		\label{Figure:Clust_ToyRLS_Supp}
%\end{figure}
\begin{figure}[h]
	\centering
	\begin{subfigure}[t]{0.31\textwidth}
		\includegraphics[width=\textwidth, height= 0.8\textwidth]{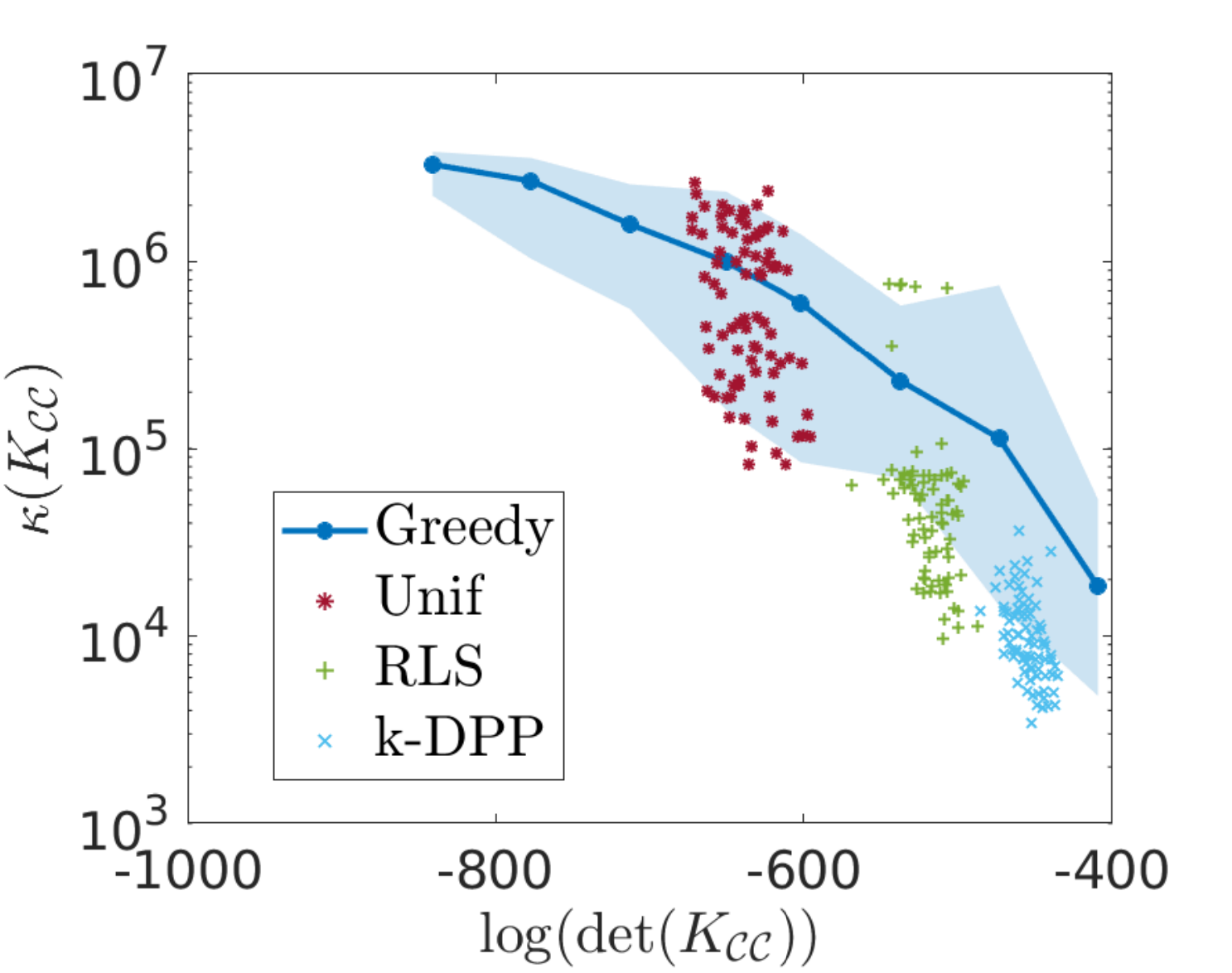}
		\caption{\texttt{A. Credit}: $\kappa(K_{\mathcal{C}\mathcal{C}})$}
	\end{subfigure}
	\begin{subfigure}[t]{0.31\textwidth}
		\includegraphics[width=\textwidth, height= 0.8\textwidth]{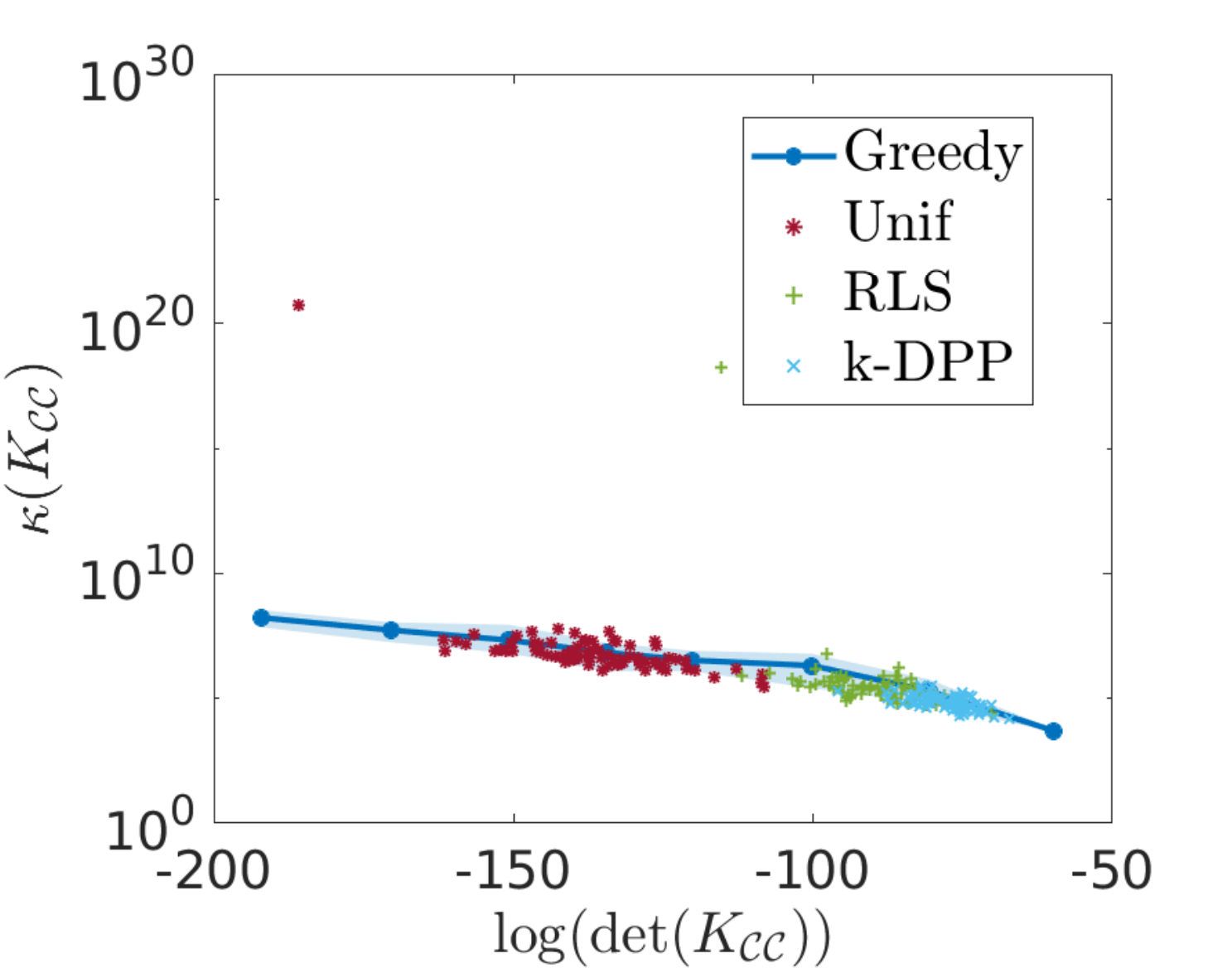}
		\caption{\texttt{Glass}: $\kappa(K_{\mathcal{C}\mathcal{C}})$}
	\end{subfigure}
	\begin{subfigure}[t]{0.31\textwidth}
		\includegraphics[width=\textwidth, height= 0.8\textwidth]{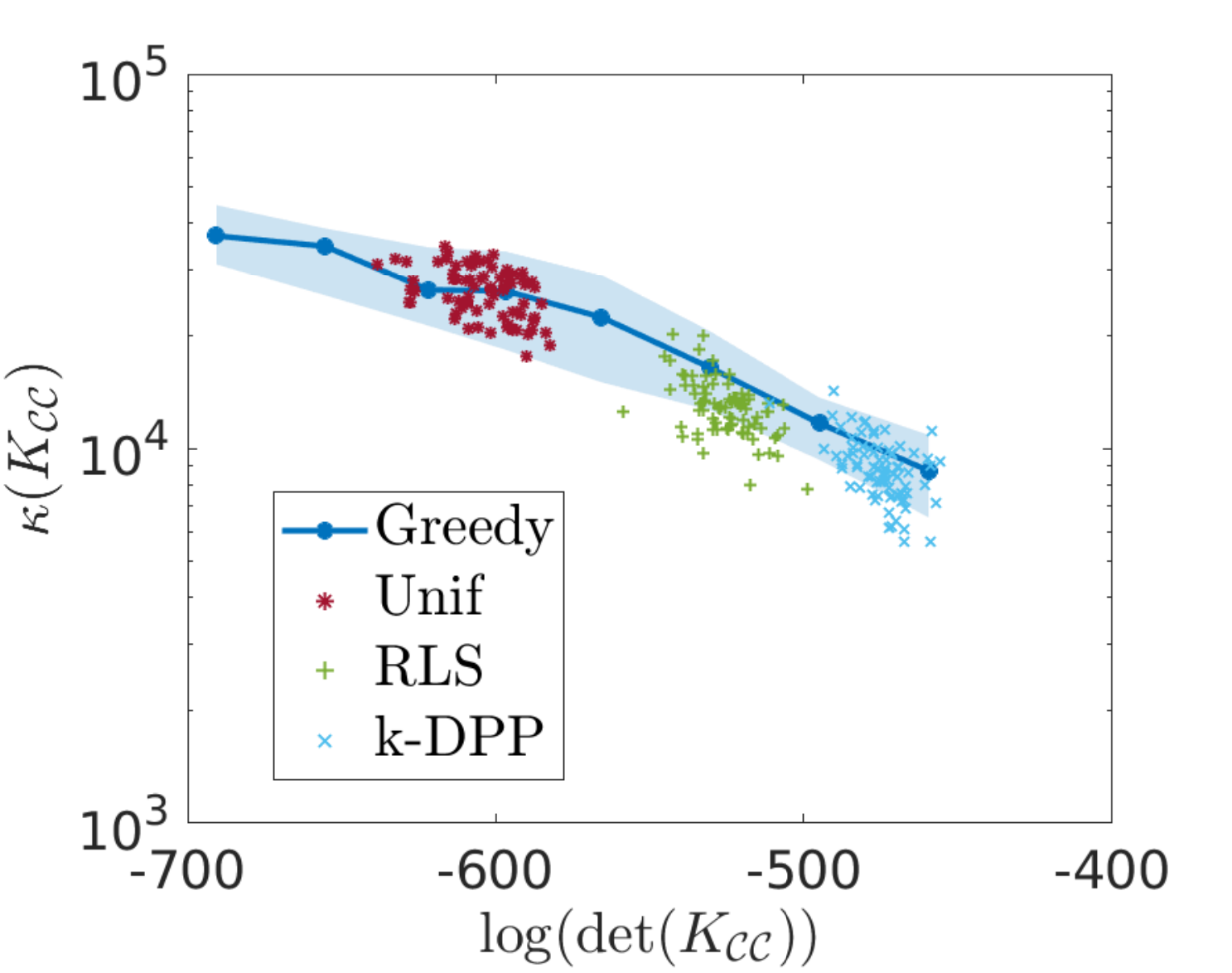}
		\caption{\texttt{B. Cancer}: $\kappa(K_{\mathcal{C}\mathcal{C}})$}
	\end{subfigure}
		\begin{subfigure}[t]{0.31\textwidth}
			\includegraphics[width=\textwidth, height= 0.8\textwidth]{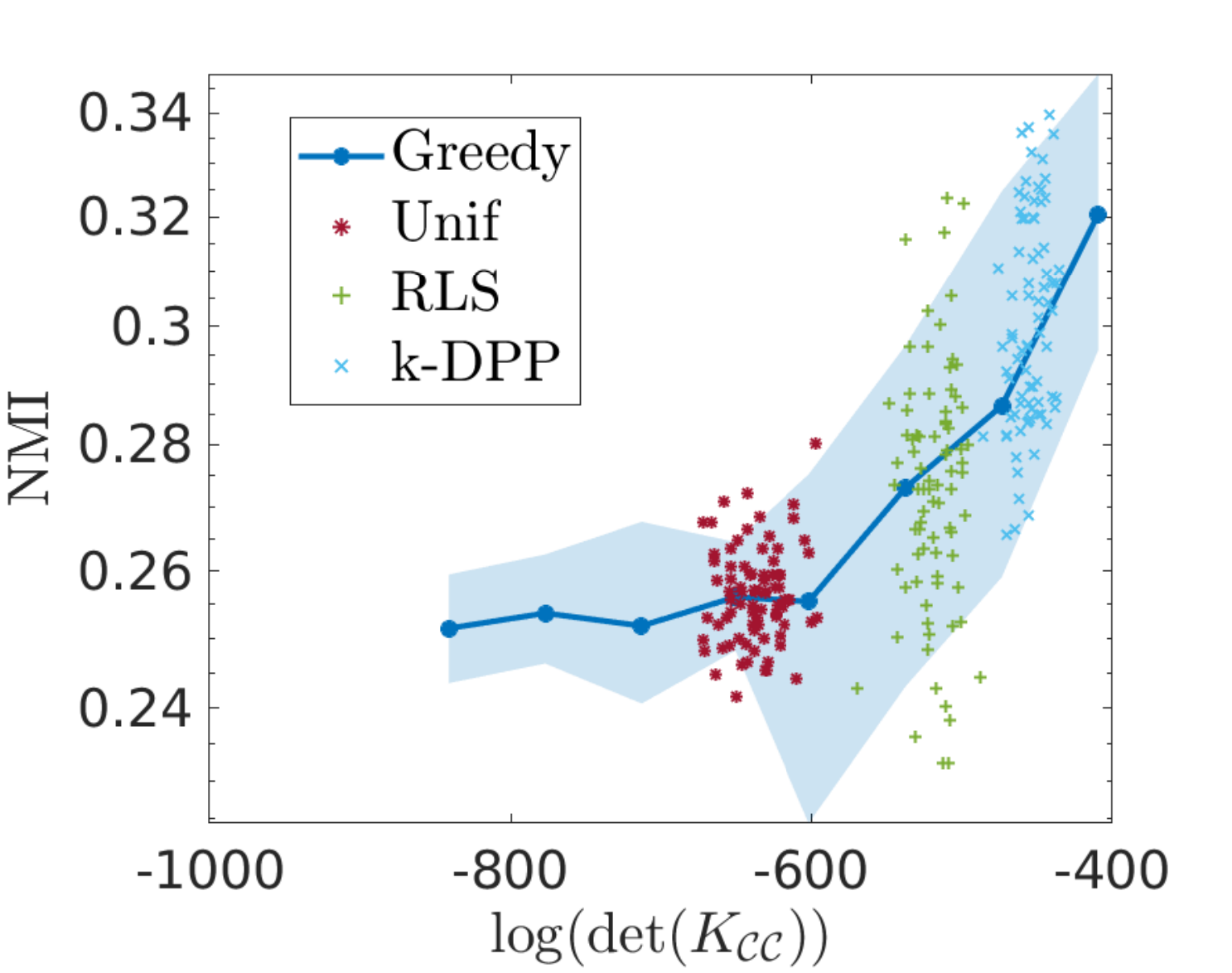}
			\caption{\texttt{A. Credit}: NMI}
		\end{subfigure}
			\begin{subfigure}[t]{0.31\textwidth}
				\includegraphics[width=\textwidth, height= 0.8\textwidth]{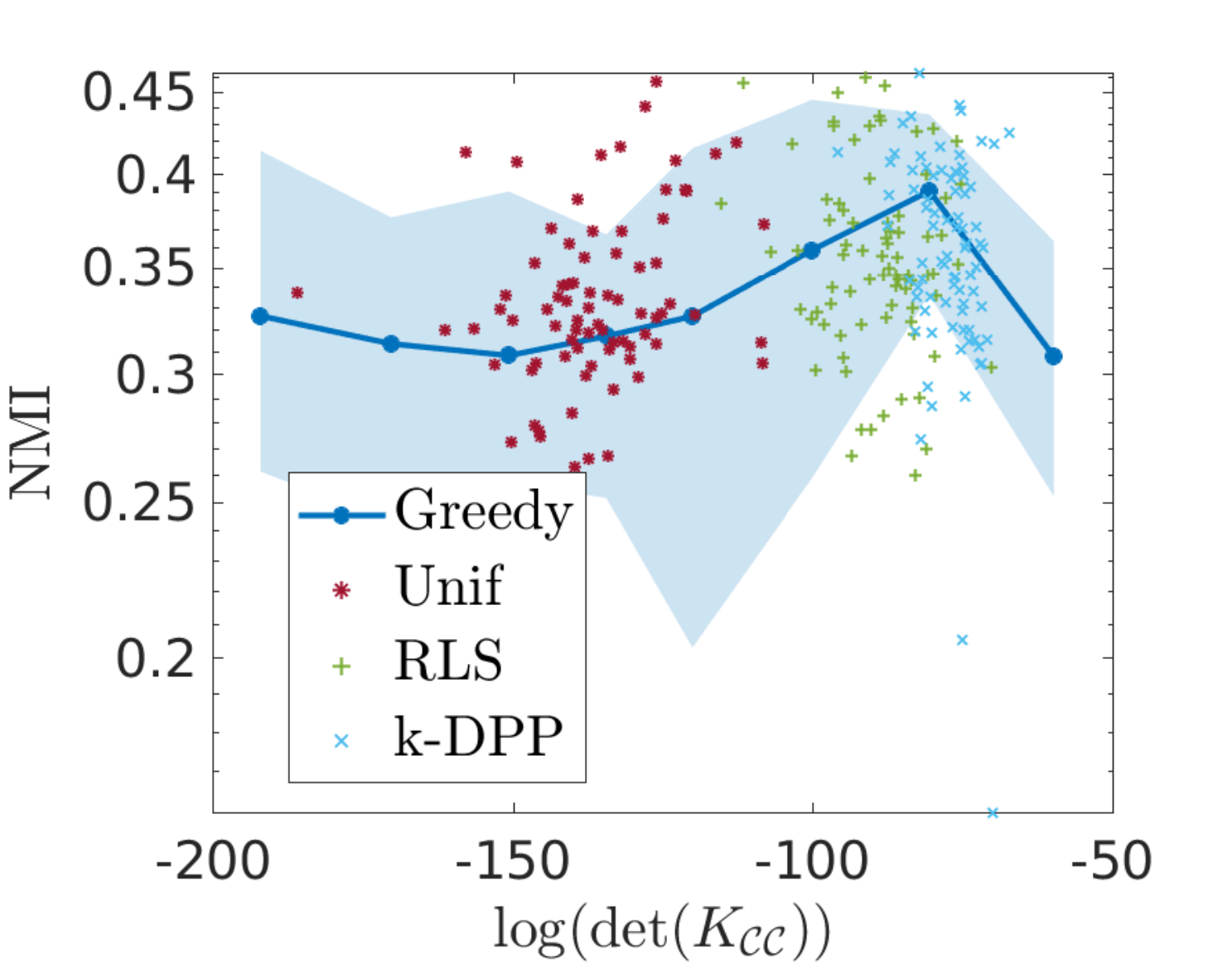}
				\caption{\texttt{Glass}: NMI}
			\end{subfigure}	
	\begin{subfigure}[t]{0.31\textwidth}
		\includegraphics[width=\textwidth, height= 0.8\textwidth]{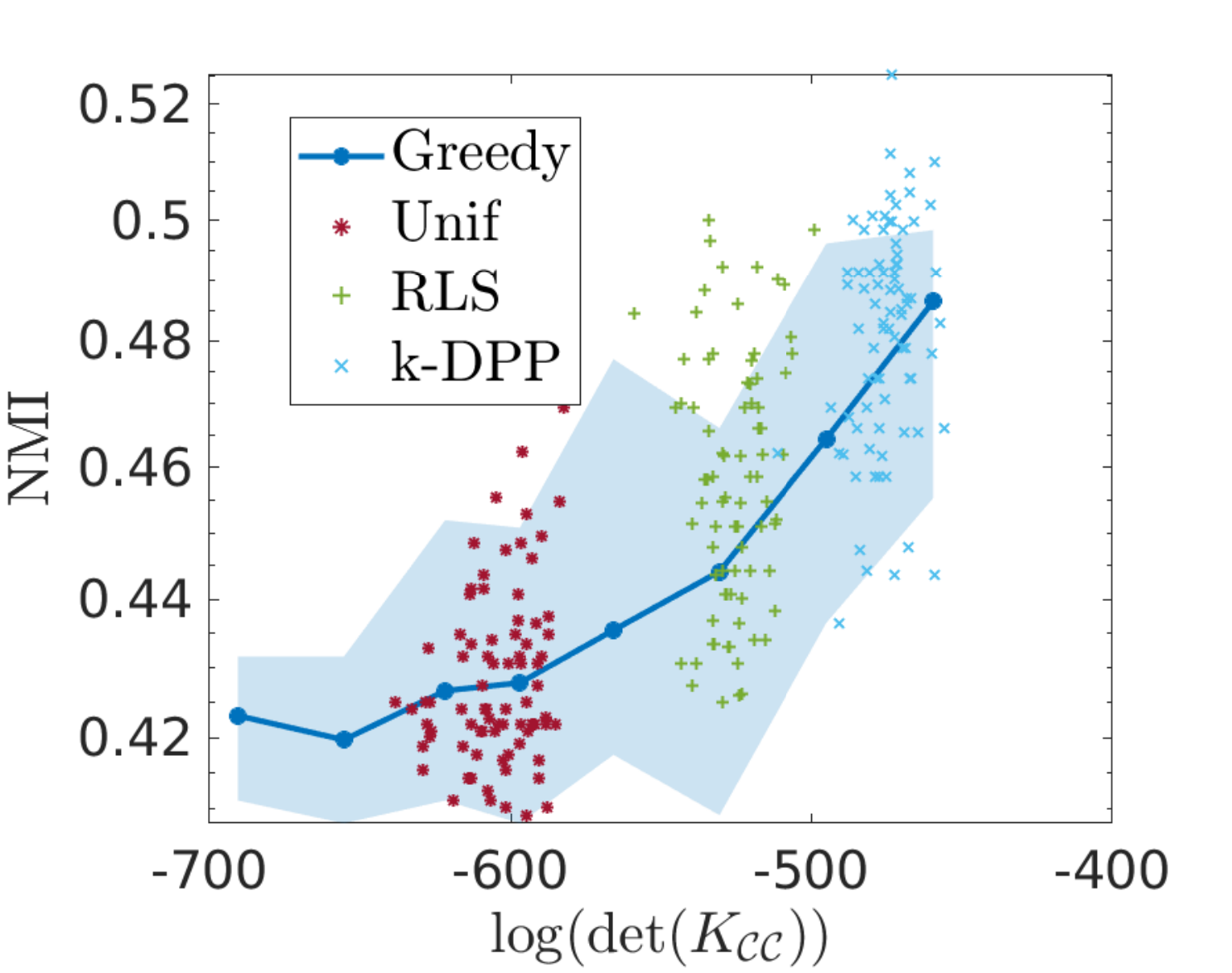}
		\caption{\texttt{B. Cancer}: NMI}
	\end{subfigure}	
	\caption{Clustering results. The condition number and NMI are plotted as a function of $\mathrm{log}(\mathrm{det}(K_{\mathcal{C}\mathcal{C}}))$.  A large NMI corresponds to a good accuracy.}\label{fig:ClusteringSupp}
\end{figure}

\subsection{Supplementary numerical experiments}
Several additional illustrations, obtained with the main methodology as in the manuscript, on the datasets given in Table~\ref{Table:dataSupp} are given in the sequel. Nystr\"om approximation error in Figures~\ref{fig:NystromSupp} and \ref{fig:NystromLSSupp}, Kernel PCA in Figure~\ref{fig:KPCASupp50}, and Kernel Ridge Regression in Figures~\ref{fig:RegressionSupp} and \ref{fig:RegressionLSSupp}. In most of the tasks illustrated in those figures, a larger diversity yields an improved performance. 
Let us discuss some particular cases.
\paragraph{Kernel approximation} As it was mentioned already hereabove, in the presence of outliers, a very diverse subsample can produced a poor kernel approximation. This can be viewed in Figure~\ref{fig:NystromBankAccSupp}, where the greedy algorithm is able to select subsets with a very large diversity. The randomized sampling methods that we studied empirically here did not suffer from this issue.
\paragraph{KRR} By using the same methodology as in the manuscript, the Mean Absolute Percentage Error is calculated both in the bulk and in the tail of the leverage score distribution of the test set. Notice that uniform sampling can often reduce the MAPE in the bulk of the data, while diverse sampling yields a larger improvement in the tail of the distribution (cfr. Figure~\ref{fig:RegressionParkinsonSupp}).
\begin{table}[h]
	\caption{Datasets and parameters used for the experiments on the Nystr\"om approximation.}
	\label{Table:dataSupp}
        \begin{center}
                \begin{tabular}{rcccccc}
                        \toprule
                        Dataset & Task & $n$ & $d$  & $\sigma$ & $\lambda$ & $d_{\mathrm{eff}}(n\lambda)$  \\ \midrule                    
                       \texttt{Stock}& Kernel approx. & $950$ & $10$ & $5$ & $10^{-6}$& $119$  \\                      
                        \texttt{Abalone}& Kernel approx. & $4177$ &$ 8 $ & $10$ & $10^{-6}$& $37$ \\
                        \texttt{Bank 8FM}& Kernel approx. & $8192$ &$ 8$  & $10 $& $10^{-6}$& $95$ \\ 
\midrule
			\texttt{Parkinson}& KPCA & $5875$ & $20 $ & $10$ & $10^{-6}$& $219$  \\
			\texttt{Wine Quality}& KPCA & $6497$ & $11 $ &$ 10$ & $10^{-6}$& $133$  \\  \midrule
 			\texttt{Housing}& KRR & $506$ & $13$  & $3$ & $10^{-4}$& $110$ \\
 			\texttt{Parkinson}& KRR & $5875$ & $20 $ & $3$ & $10^{-4}$& $343$  \\ \bottomrule
 		\end{tabular}
 	\end{center}
 \end{table}
	\begin{figure}[t]
		\centering
		\begin{subfigure}[b]{0.31\textwidth}
			\includegraphics[width=\textwidth, height= 0.8\textwidth]{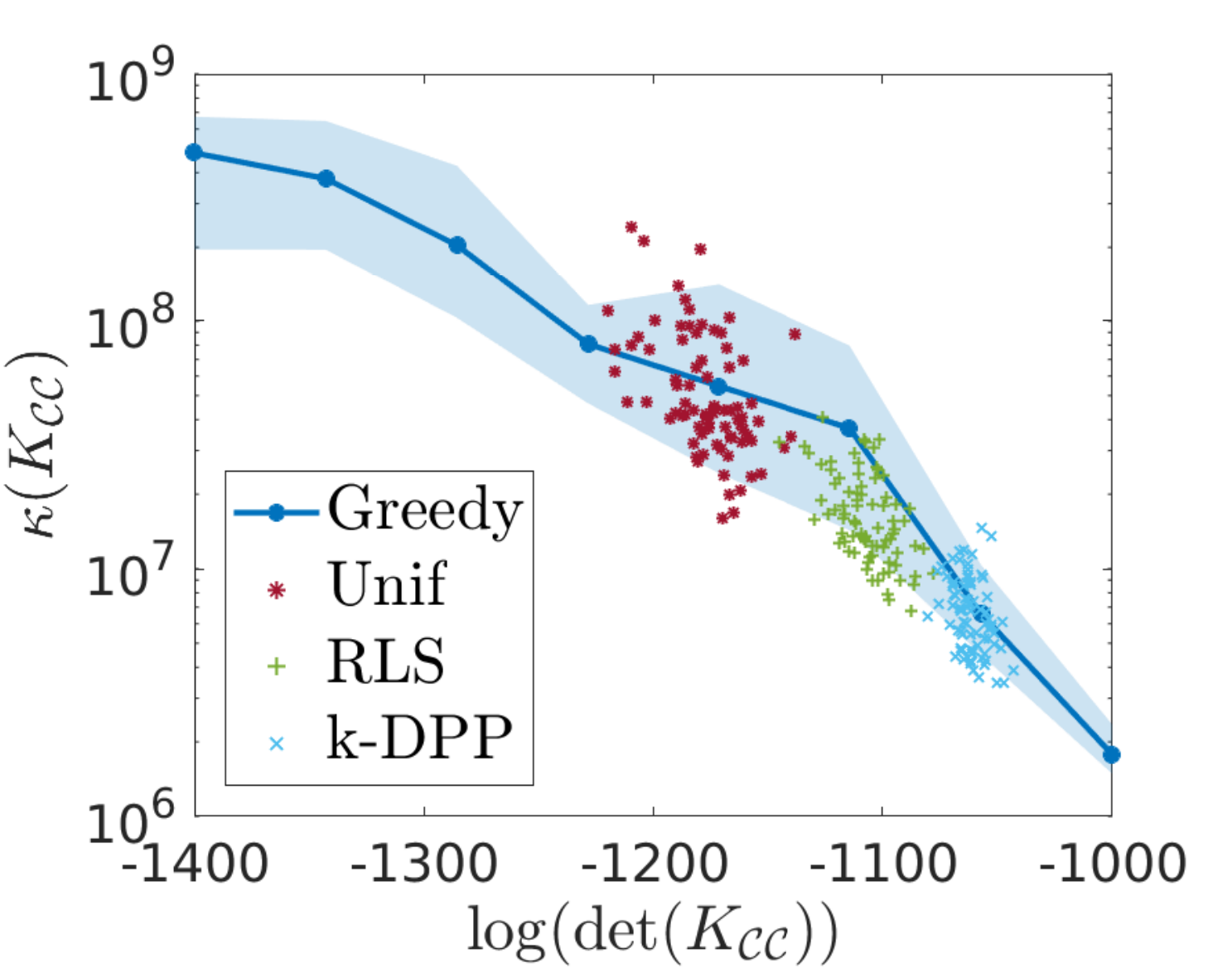}
			\caption{\texttt{Stock}: $\kappa(K_{\mathcal{C}\mathcal{C}})$}
		\end{subfigure}
		\begin{subfigure}[b]{0.31\textwidth}
			\includegraphics[width=\textwidth, height= 0.8\textwidth]{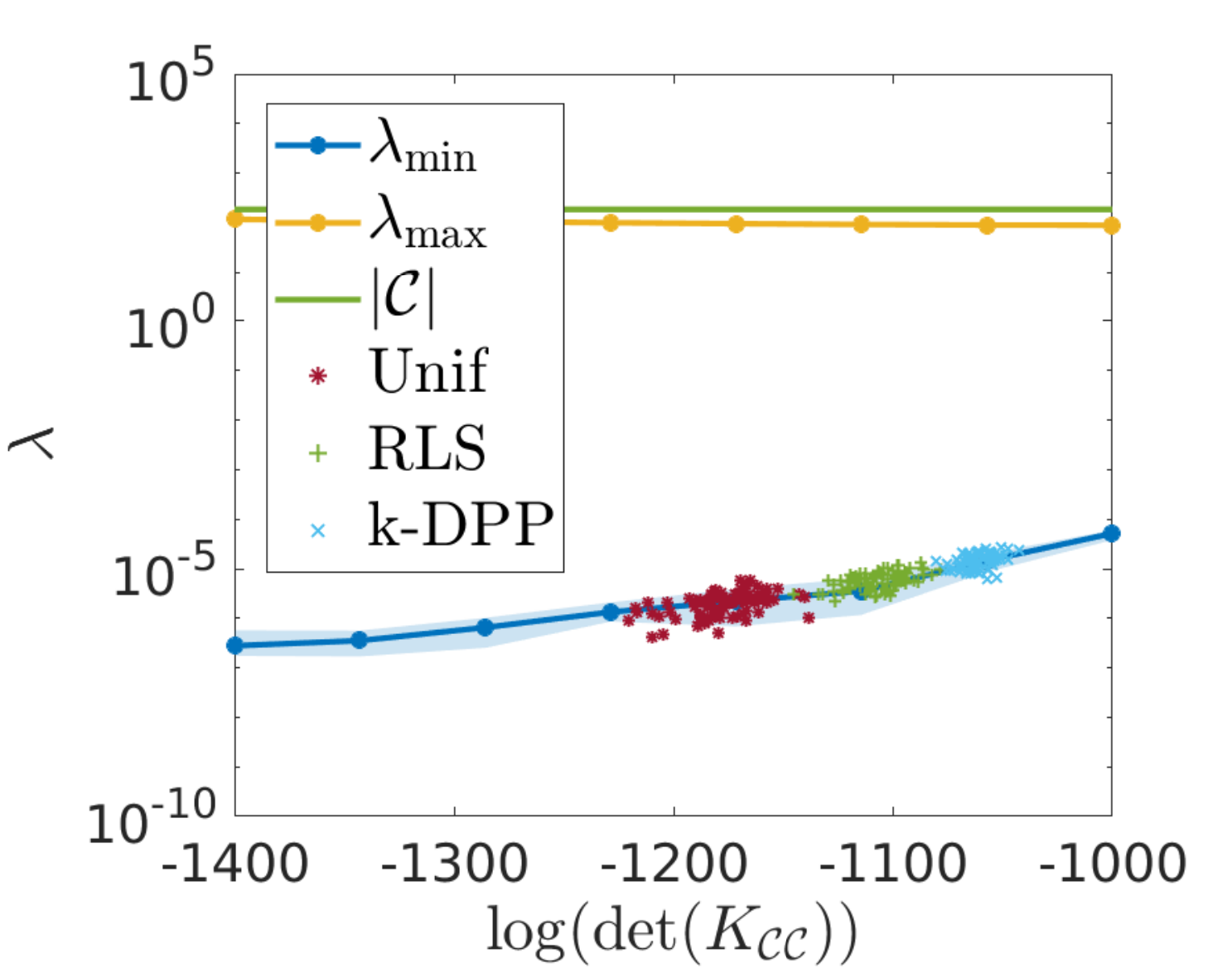}
			\caption{\texttt{Stock}: $\lambda_{\min}(K_{\mathcal{C}\mathcal{C}})$}
		\end{subfigure}
		\begin{subfigure}[b]{0.31\textwidth}
			\includegraphics[width=1\textwidth, height= 0.8\textwidth]{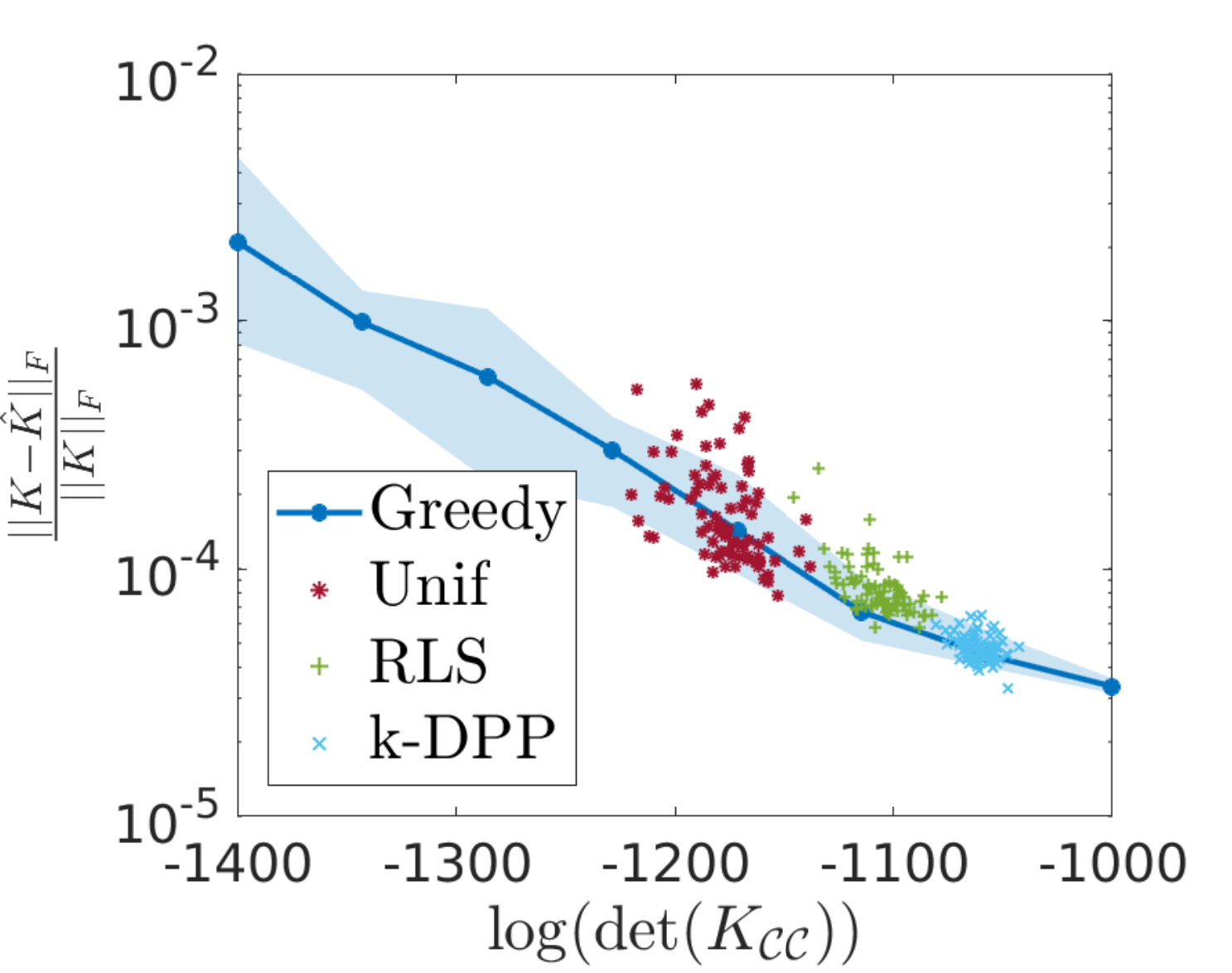}
			\caption{\texttt{Stock}: accuracy}
		\end{subfigure}
		\begin{subfigure}[b]{0.31\textwidth}
			\includegraphics[width=\textwidth, height= 0.8\textwidth]{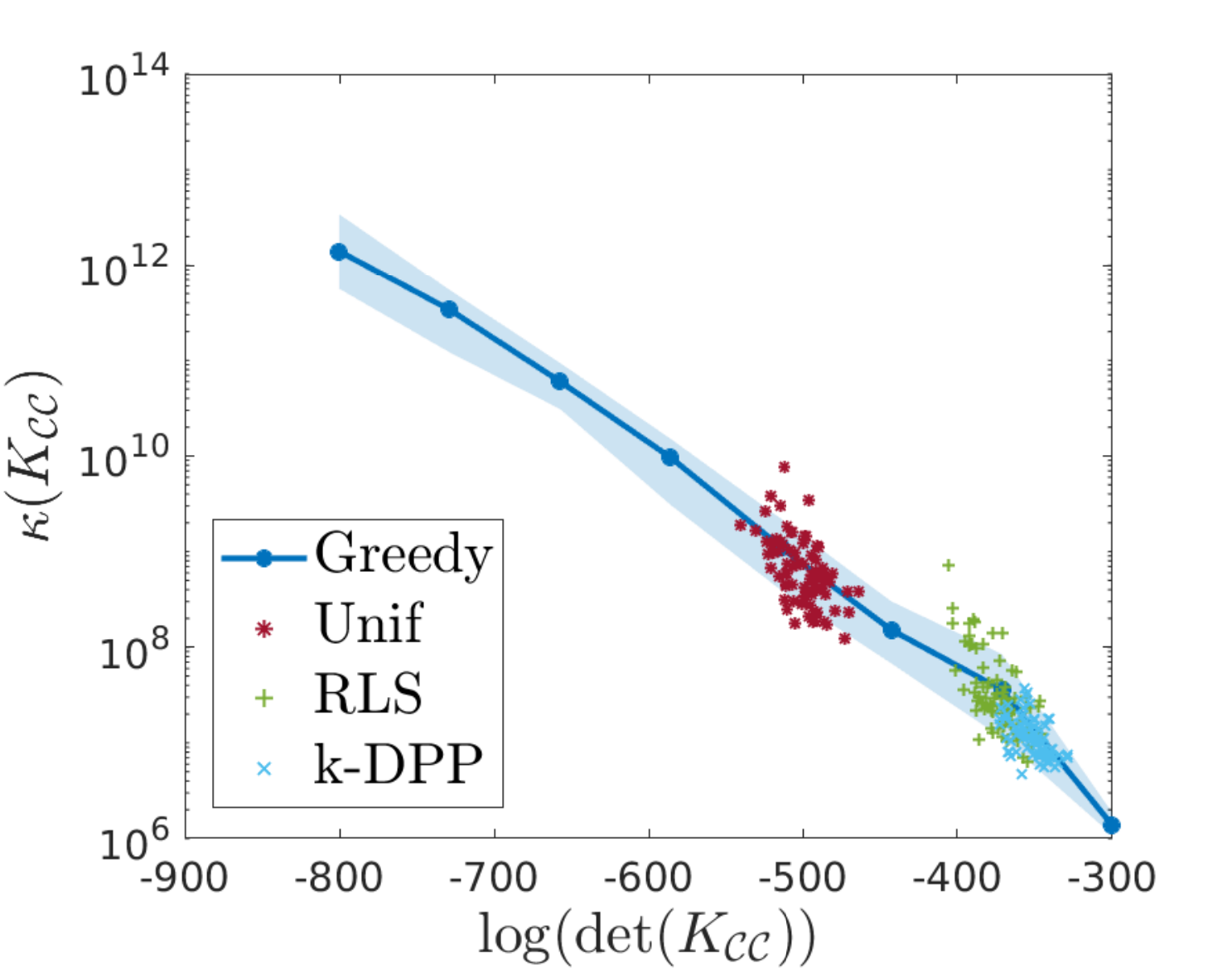}
			\caption{\texttt{Abalone}: $\kappa(K_{\mathcal{C}\mathcal{C}})$}
		\end{subfigure}
		\begin{subfigure}[b]{0.31\textwidth}
			\includegraphics[width=\textwidth, height= 0.8\textwidth]{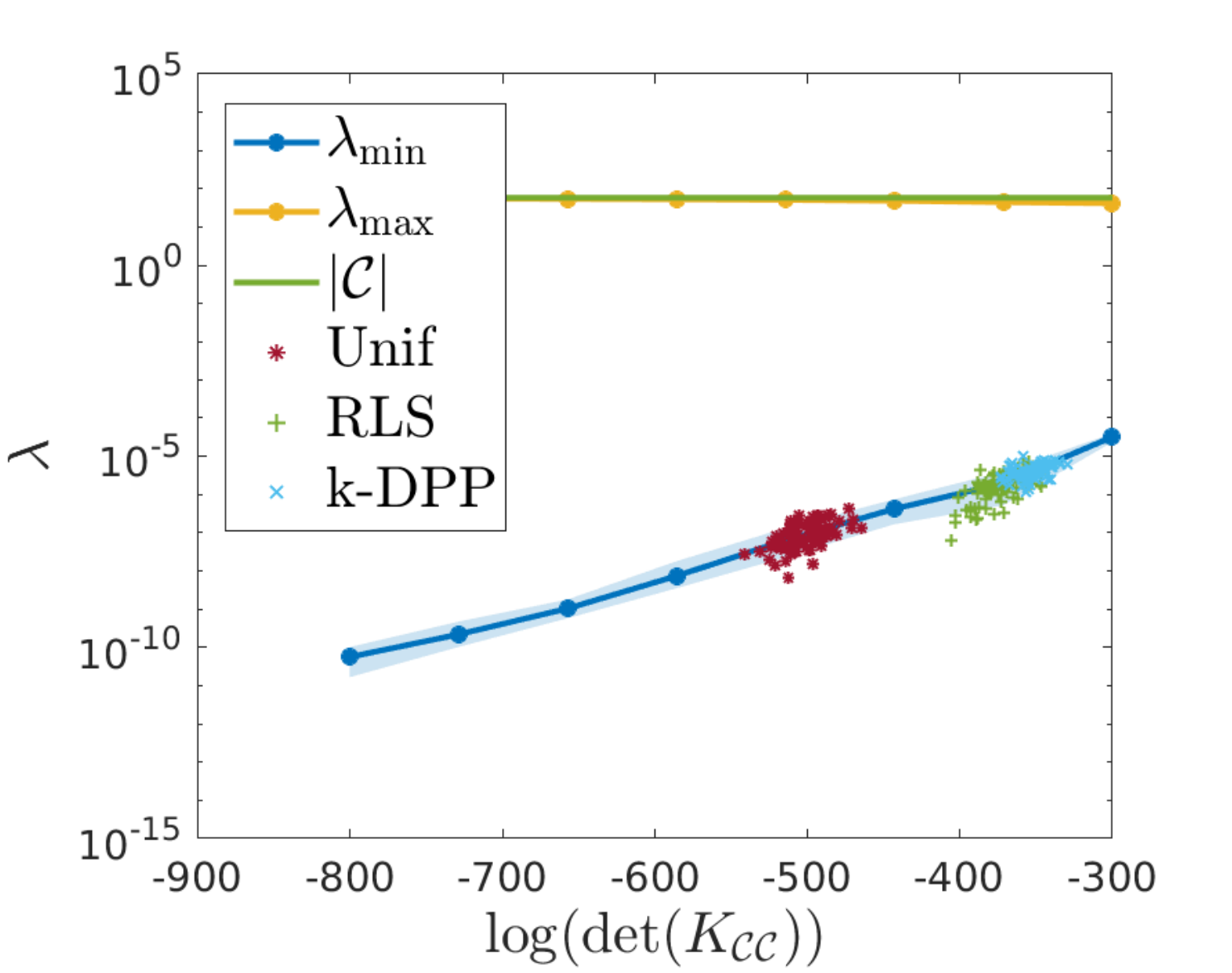}
			\caption{\texttt{Abalone}: $\lambda_{\min}(K_{\mathcal{C}\mathcal{C}})$}
		\end{subfigure}
		\begin{subfigure}[b]{0.31\textwidth}
			\includegraphics[width=\textwidth, height= 0.8\textwidth]{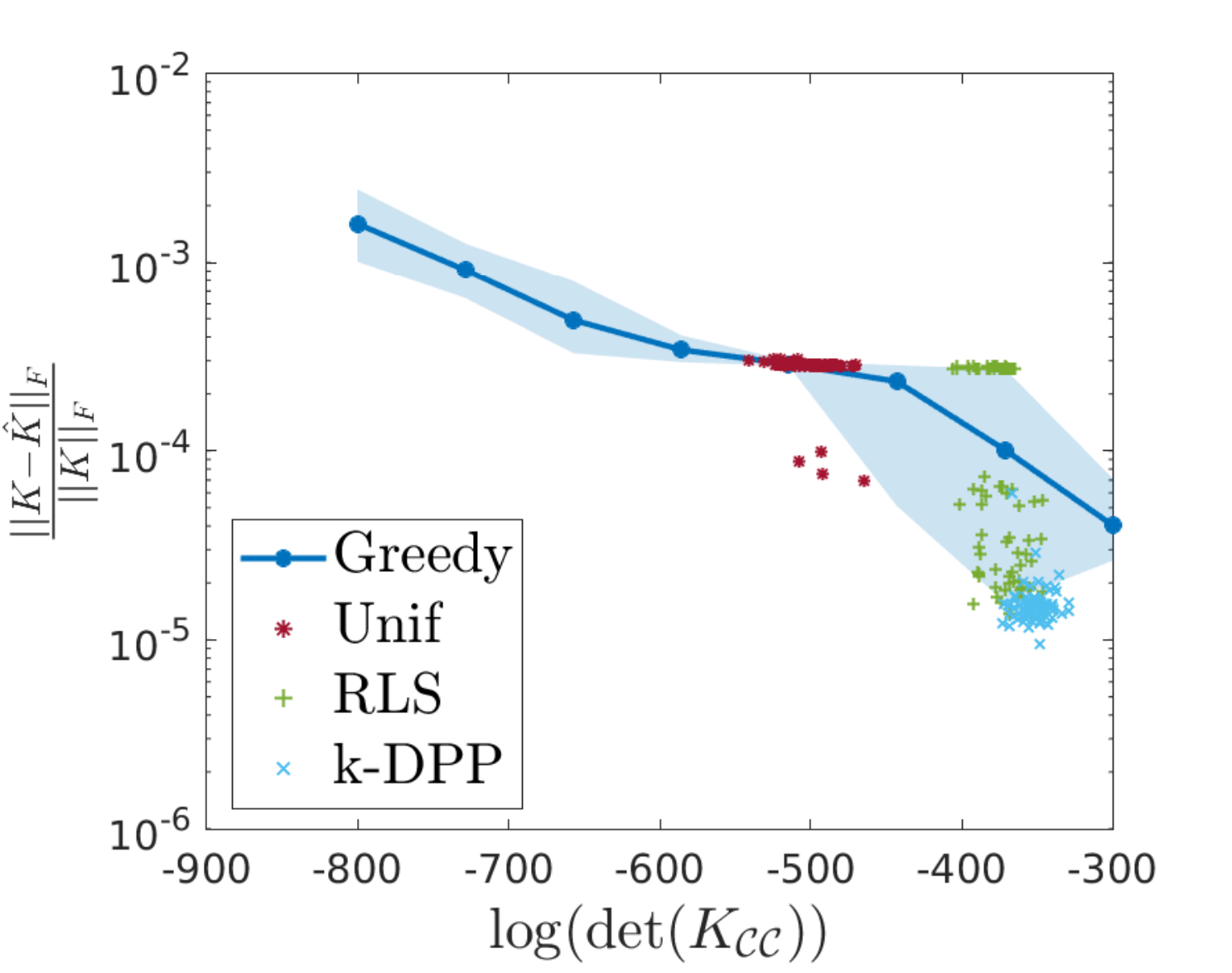}
			\caption{\texttt{Abalone}: accuracy}
		\end{subfigure}
		\begin{subfigure}[b]{0.31\textwidth}
			\includegraphics[width=\textwidth, height= 0.8\textwidth]{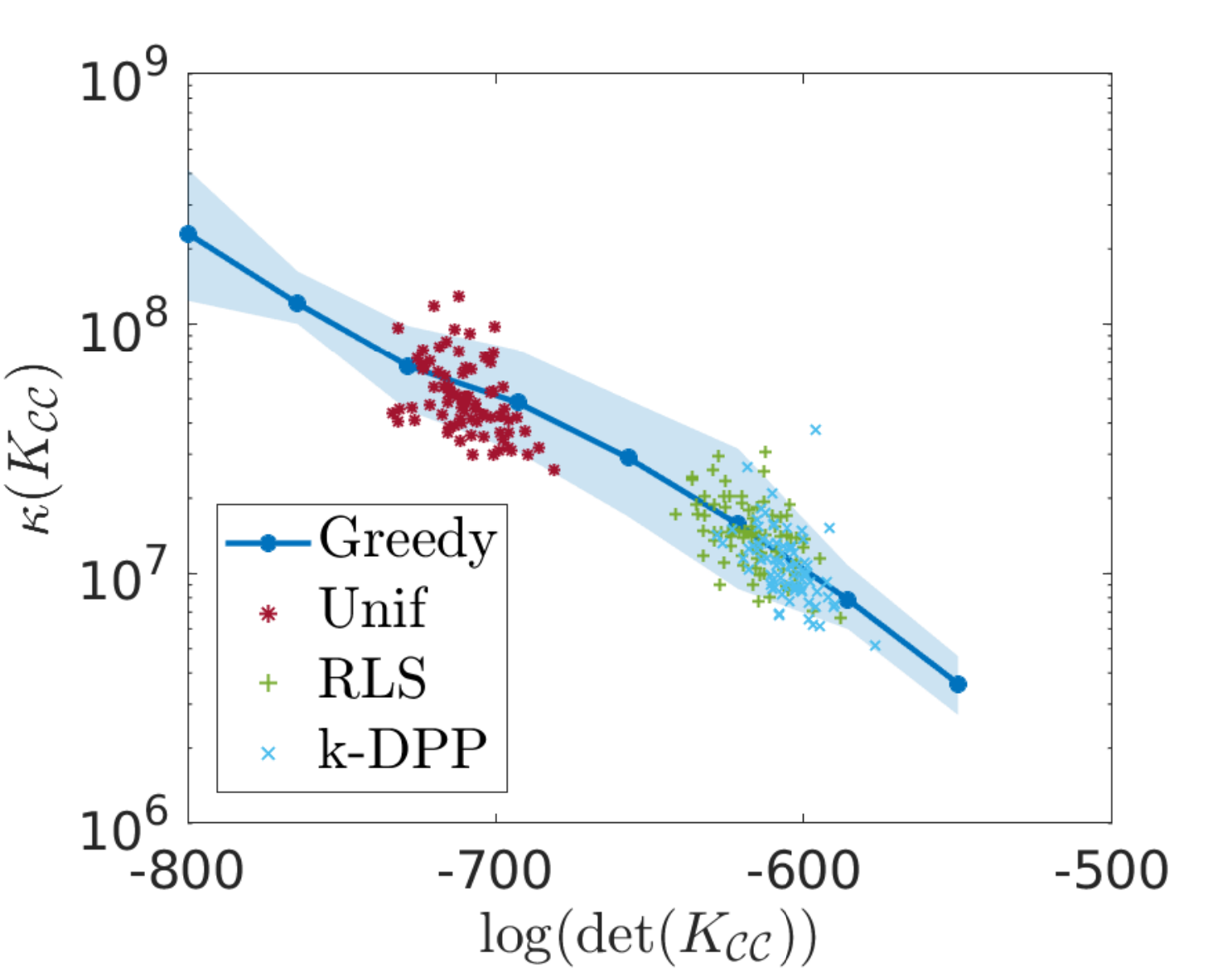}
			\caption{\texttt{Bank 8FM}: $\kappa(K_{\mathcal{C}\mathcal{C}})$}
		\end{subfigure}
		\begin{subfigure}[b]{0.31\textwidth}
			\includegraphics[width=\textwidth, height= 0.8\textwidth]{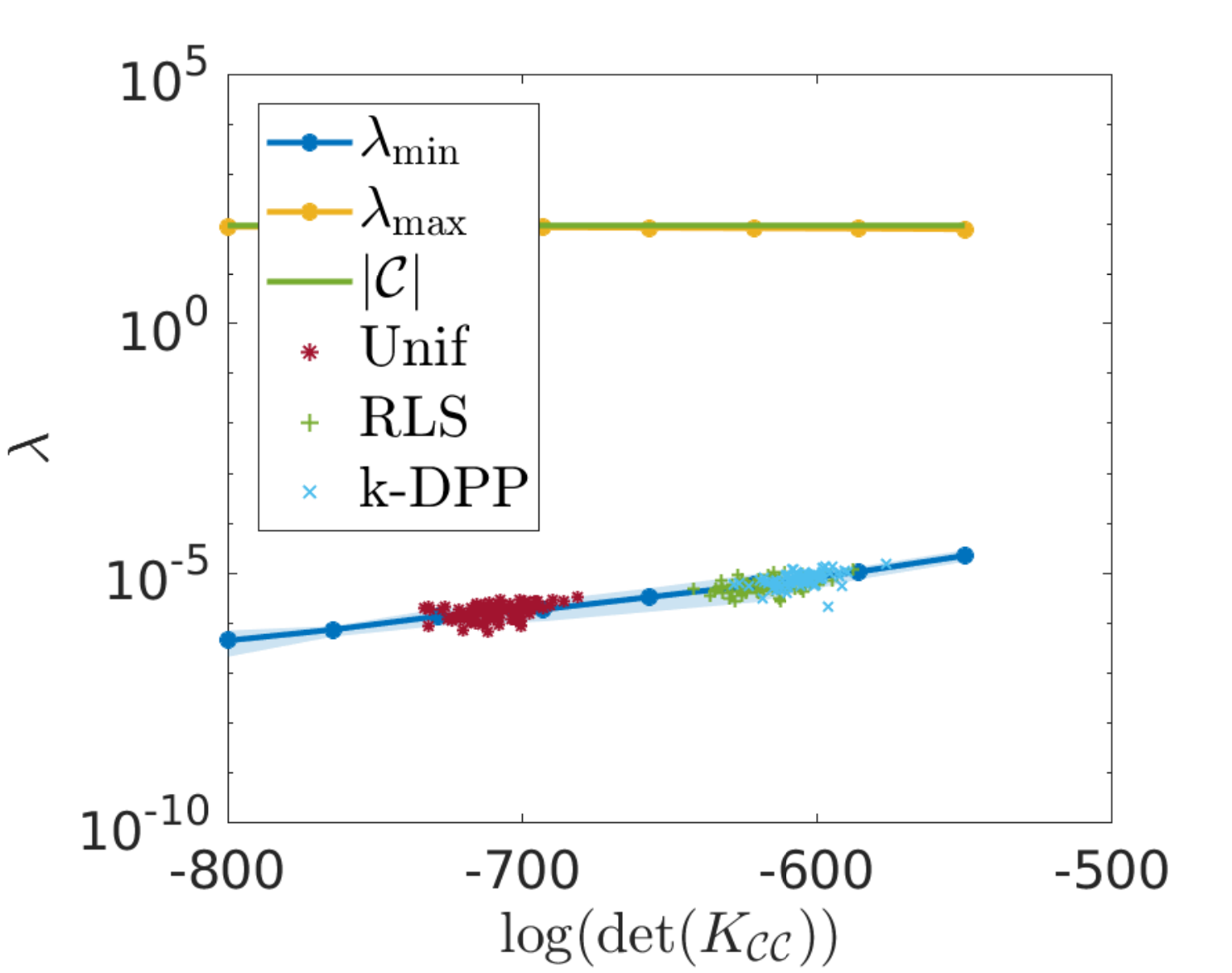}
			\caption{\texttt{Bank 8FM}: $\lambda_{\min}(K_{\mathcal{C}\mathcal{C}})$}
		\end{subfigure}
		\begin{subfigure}[b]{0.31\textwidth}
			\includegraphics[width=1\textwidth, height= 0.817\textwidth]{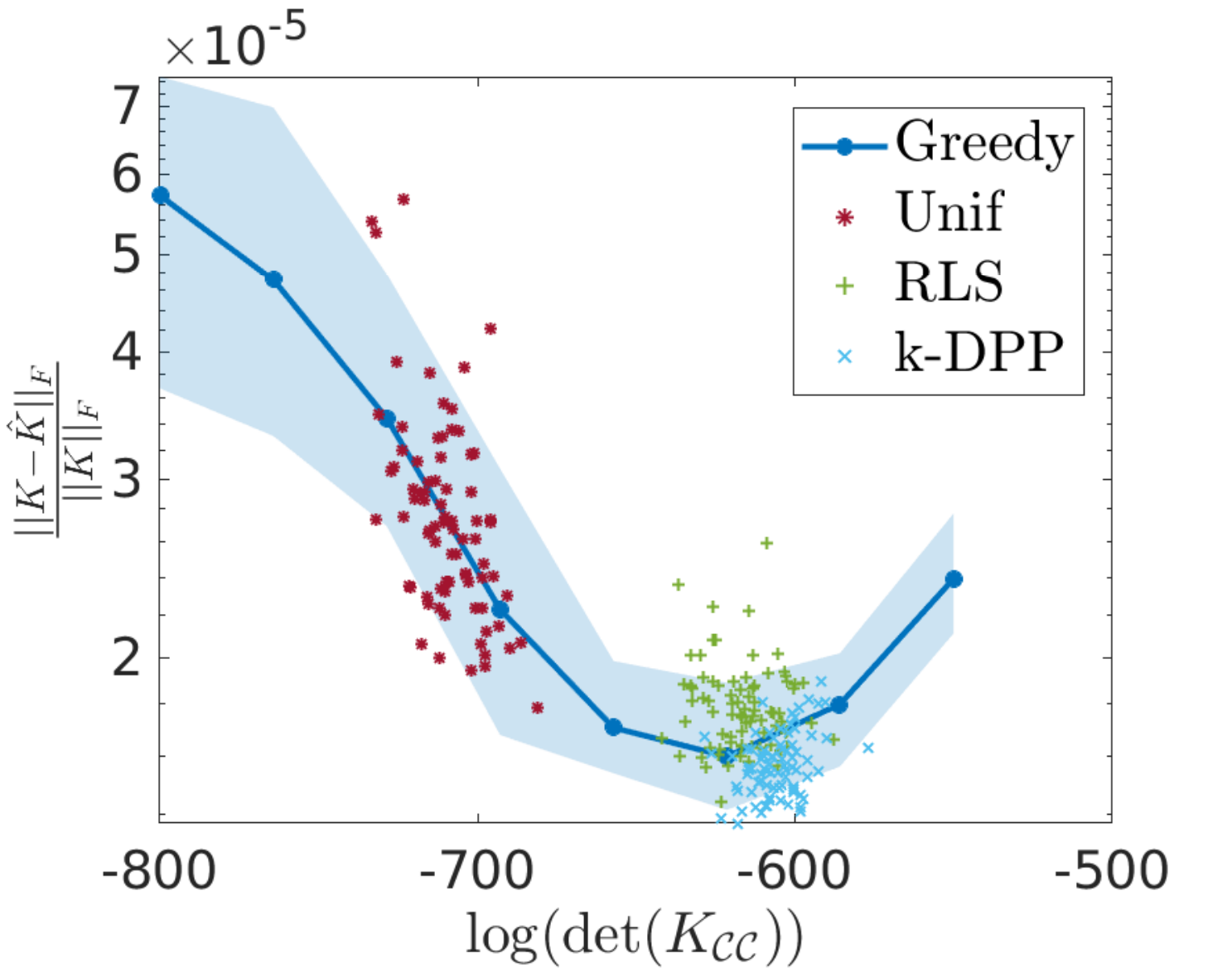}
			\caption{\texttt{Bank 8FM}: accuracy \label{fig:NystromBankAccSupp}}
		\end{subfigure}

	\caption{Kernel approximation results. The condition number, smallest and largest eigenvalues of $K_{\mathcal{C}\mathcal{C}}$, relative Frobenius norm of the Nystr\"om approximation error versus $\mathrm{log}(\mathrm{det}(K_{\mathcal{C}\mathcal{C}}))$.}\label{fig:NystromSupp}
\end{figure}

\begin{figure}[b]
	\centering
	\begin{subfigure}[t]{0.24\textwidth}
		\includegraphics[width=\textwidth, height= 0.9\textwidth]{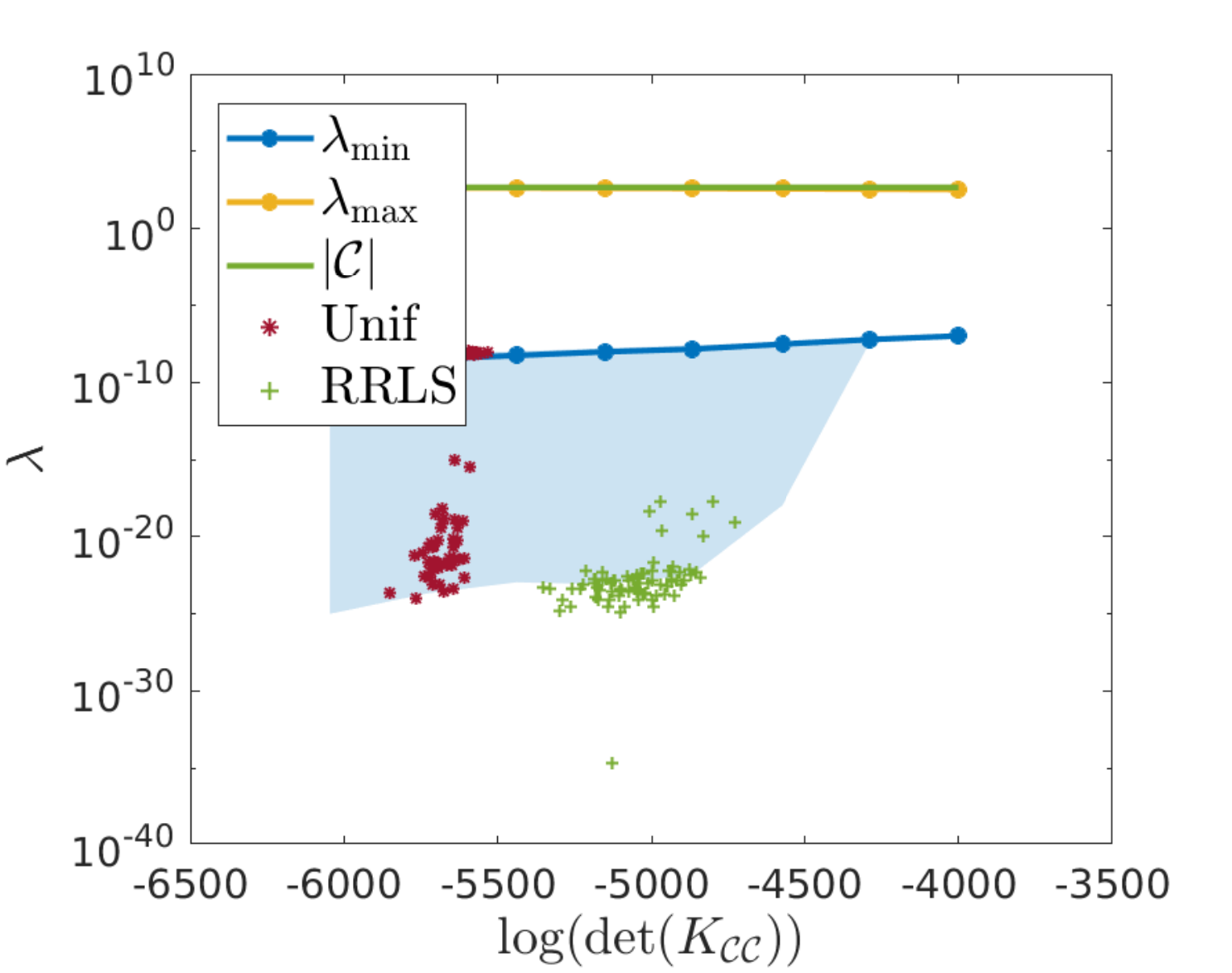}
		\caption{\texttt{MiniBooNE}}
	\end{subfigure}
	\begin{subfigure}[t]{0.24\textwidth}
		\includegraphics[width=\textwidth, height= 0.9\textwidth]{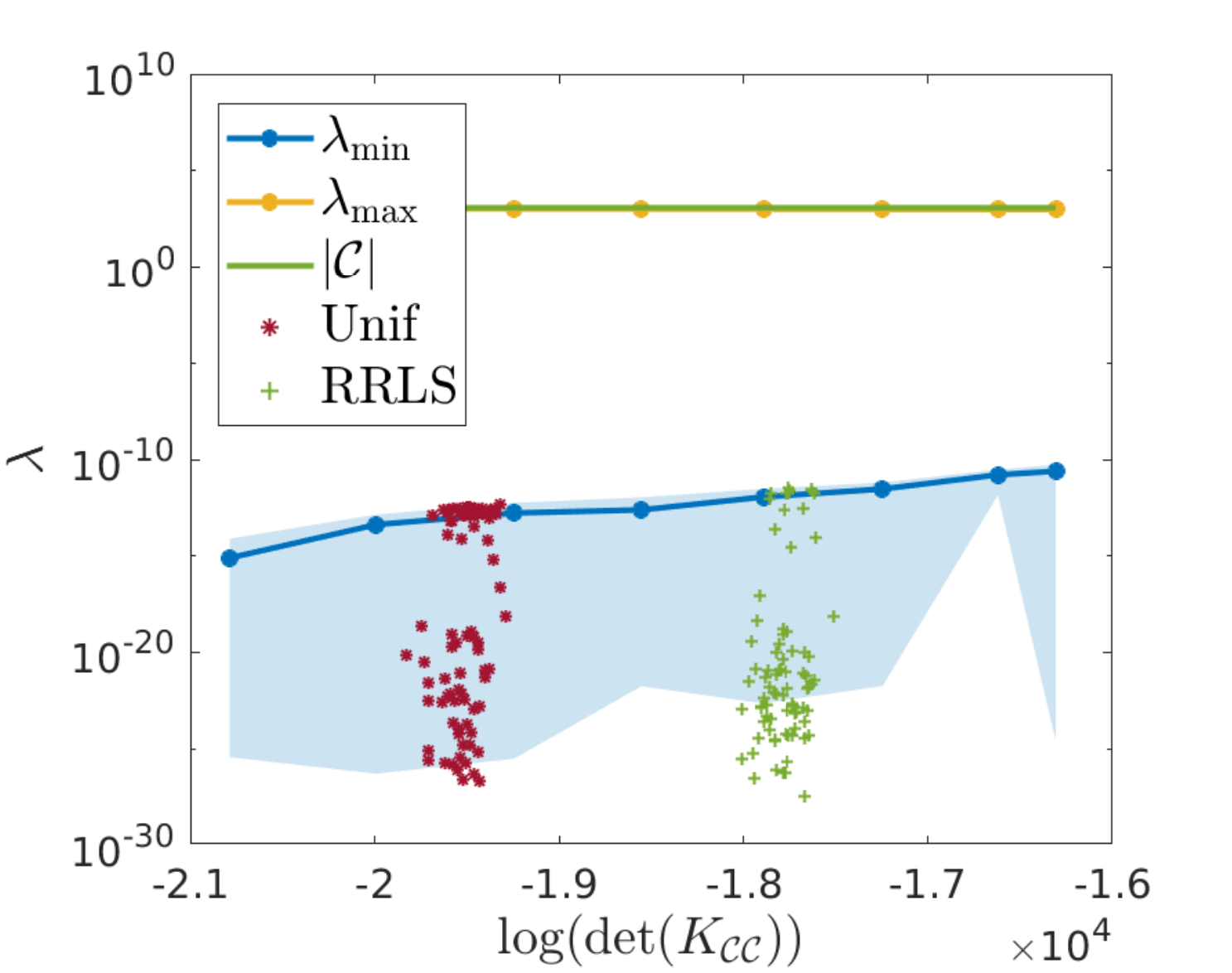}
		\caption{\texttt{codRNA}}
	\end{subfigure}		
	\begin{subfigure}[t]{0.24\textwidth}
		\includegraphics[width=\textwidth, height= 0.9\textwidth]{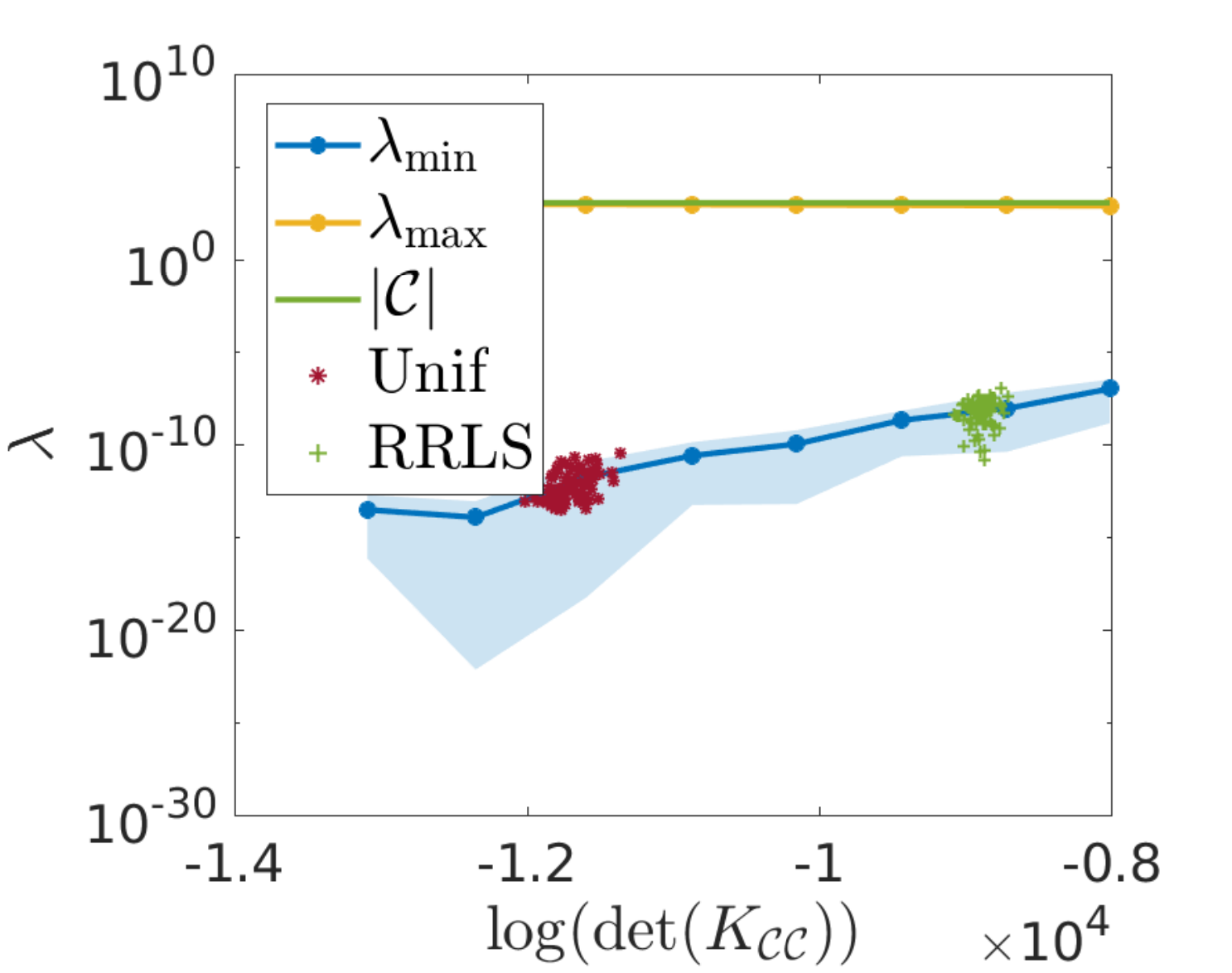}
		\caption{\texttt{Adult}}
	\end{subfigure}
	\begin{subfigure}[t]{0.24\textwidth}
		\includegraphics[width=\textwidth, height= 0.9\textwidth]{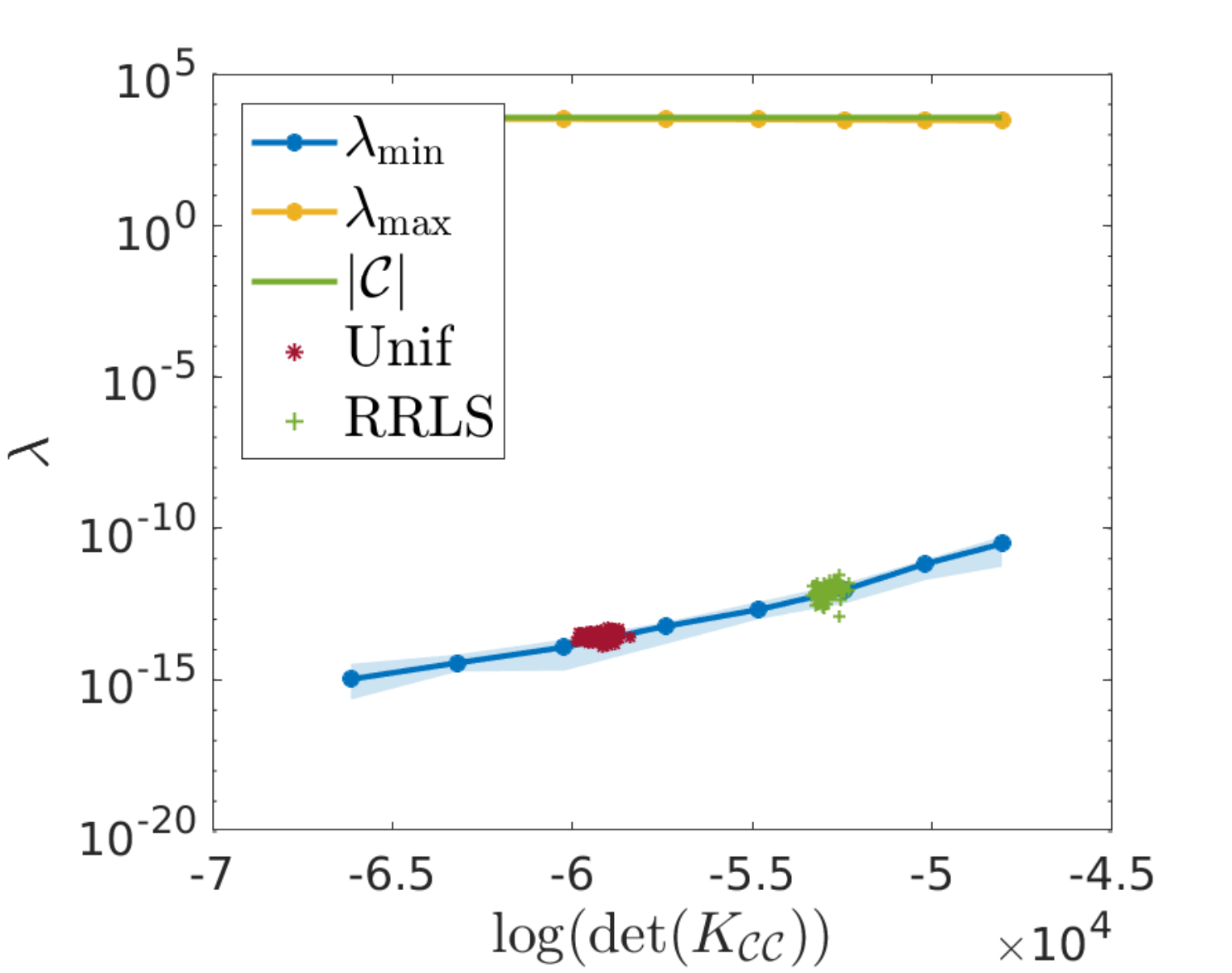}
		\caption{\texttt{Covertype}}
	\end{subfigure}		
	\caption{Results accompanying Figures \ref{fig:NystromLS} and \ref{fig:LS_KPCA}. The smallest and largest eigenvalues of $K_{\mathcal{C}\mathcal{C}}$ are plotted as a function of $\mathrm{log}(\mathrm{det}(K_{\mathcal{C}\mathcal{C}}))$.}\label{fig:LSSupp}
\end{figure}

\begin{figure}[b]
		\centering
		\begin{subfigure}[t]{0.24\textwidth}
			\includegraphics[width=\textwidth, height= 0.85\textwidth]{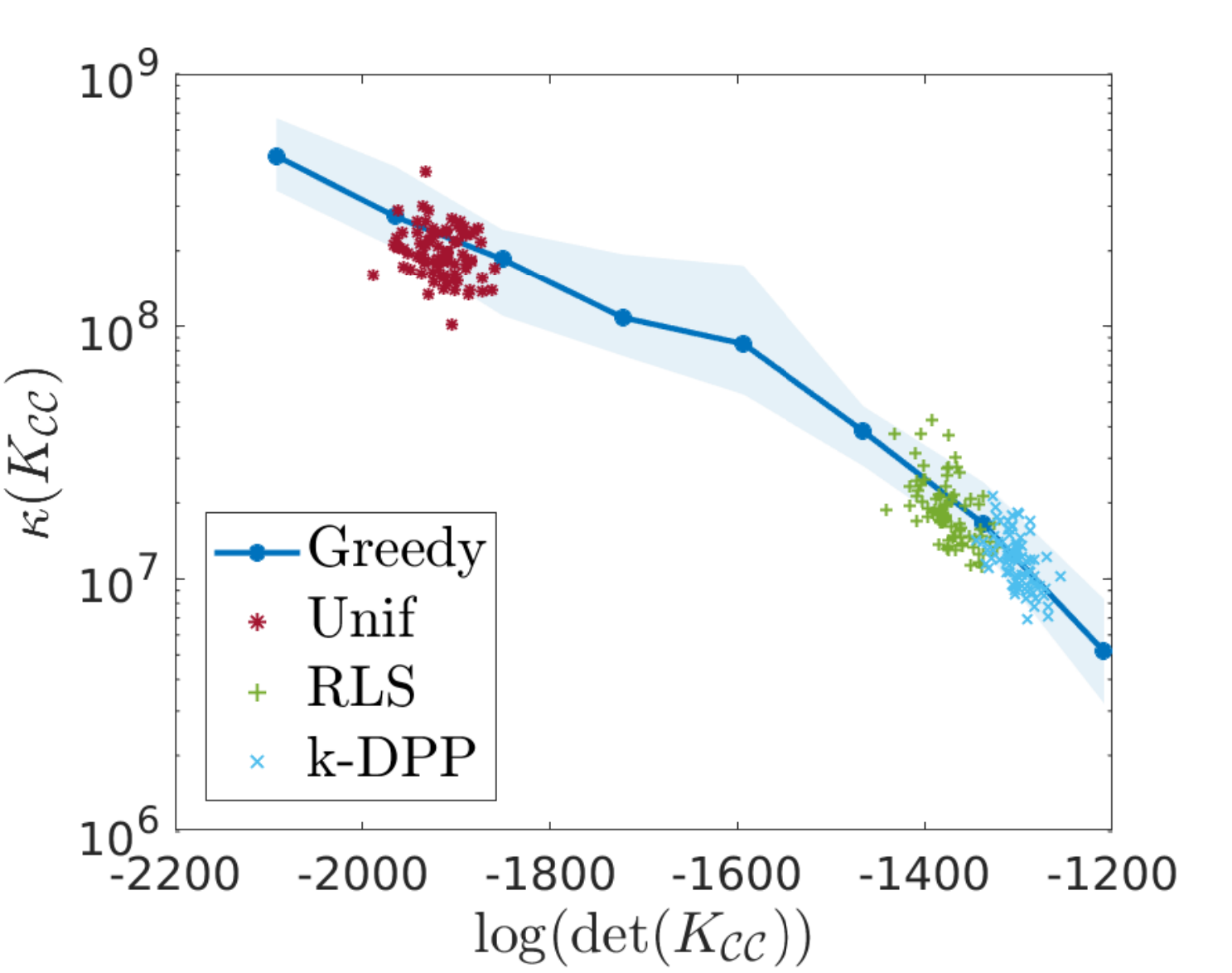}
			\caption{\texttt{Park.}: $\kappa(K_{\mathcal{C}\mathcal{C}})$}
		\end{subfigure}
		\begin{subfigure}[t]{0.24\textwidth}
			\includegraphics[width=\textwidth, height= 0.85\textwidth]{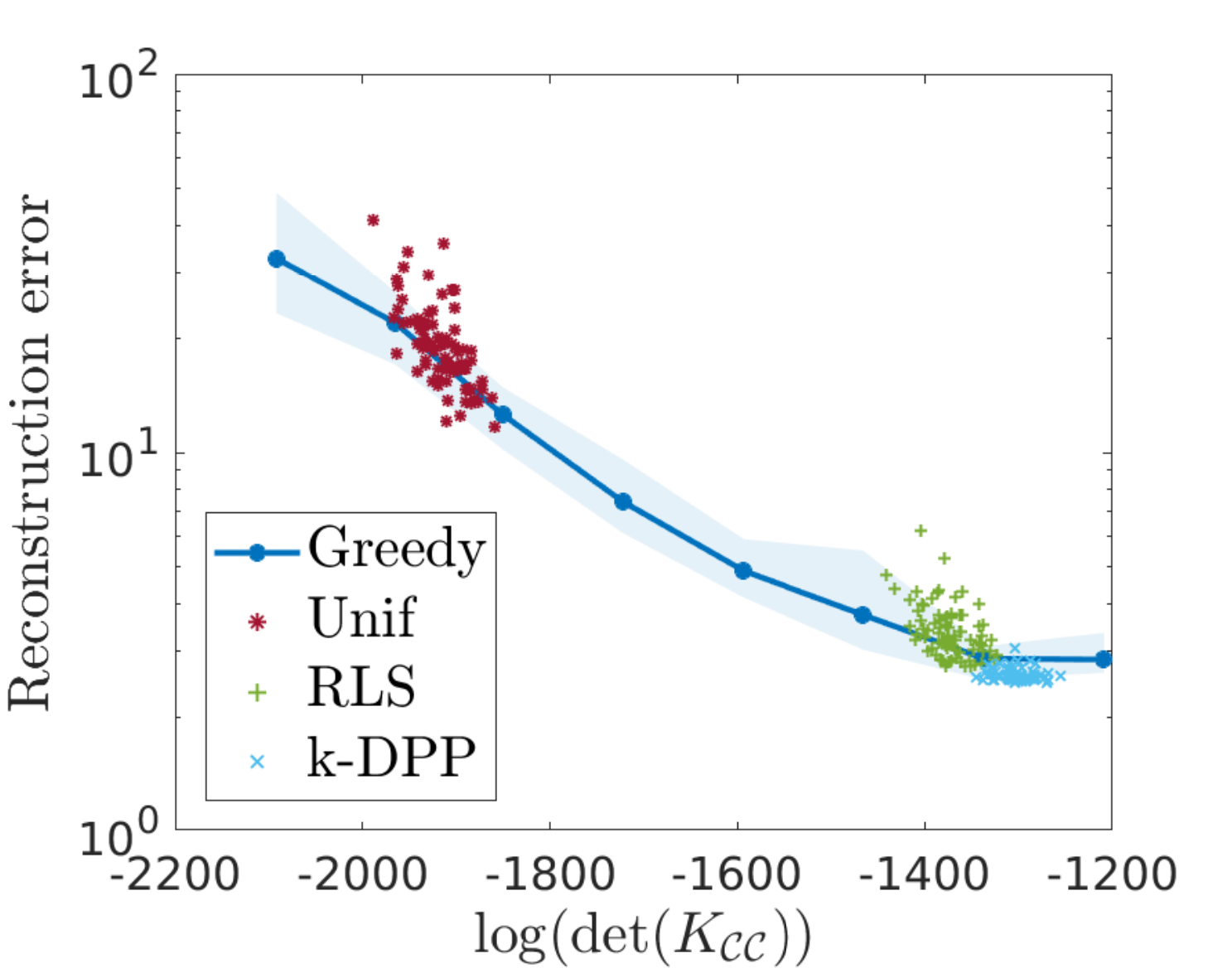}
			\caption{\texttt{Park.}: error}
		\end{subfigure}
		\begin{subfigure}[t]{0.24\textwidth}
			\includegraphics[width=\textwidth, height= 0.85\textwidth]{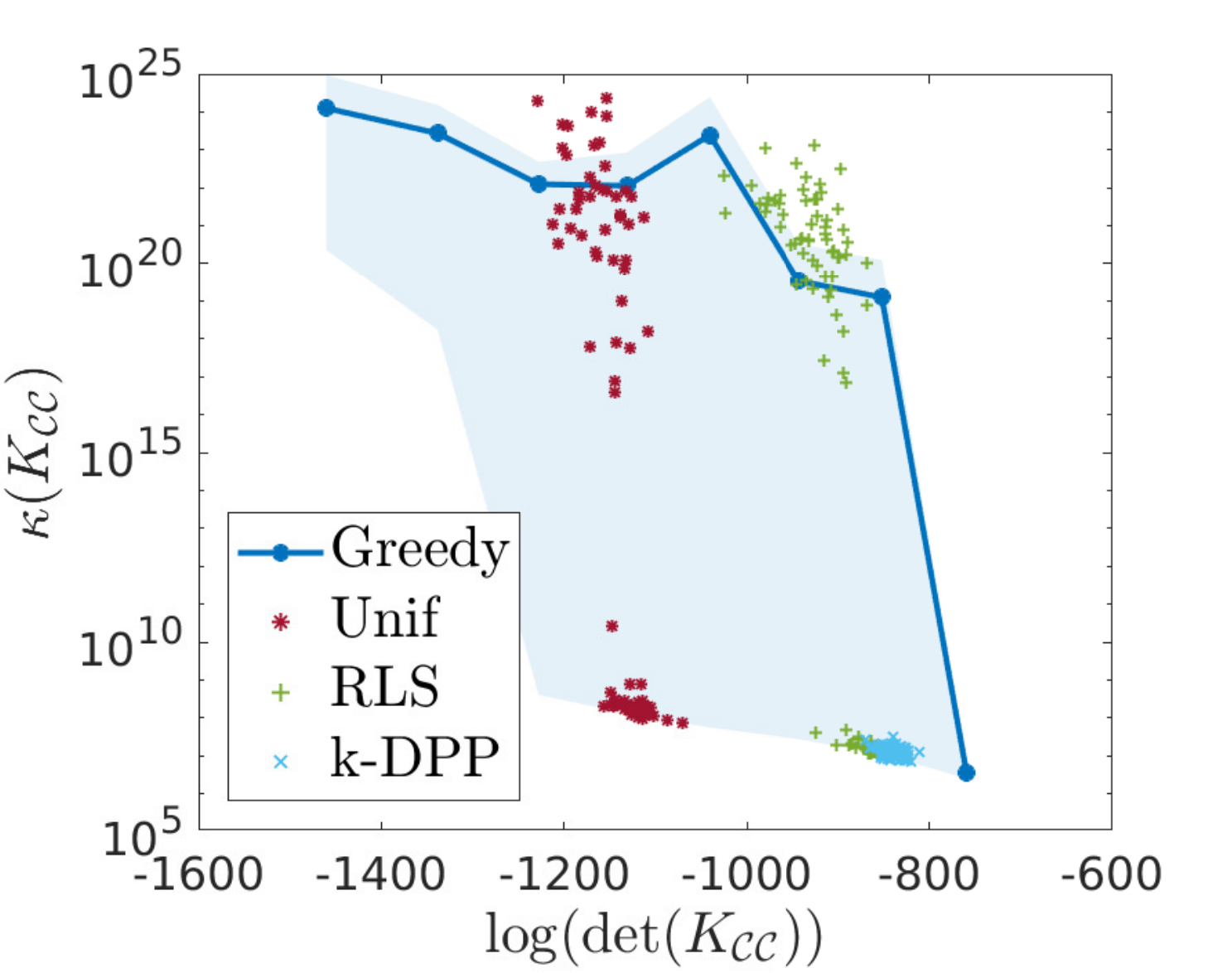}
			\caption{\texttt{Wine Q.}: $\kappa(K_{\mathcal{C}\mathcal{C}})$}
		\end{subfigure}
		\begin{subfigure}[t]{0.24\textwidth}
			\includegraphics[width=\textwidth, height= 0.85\textwidth]{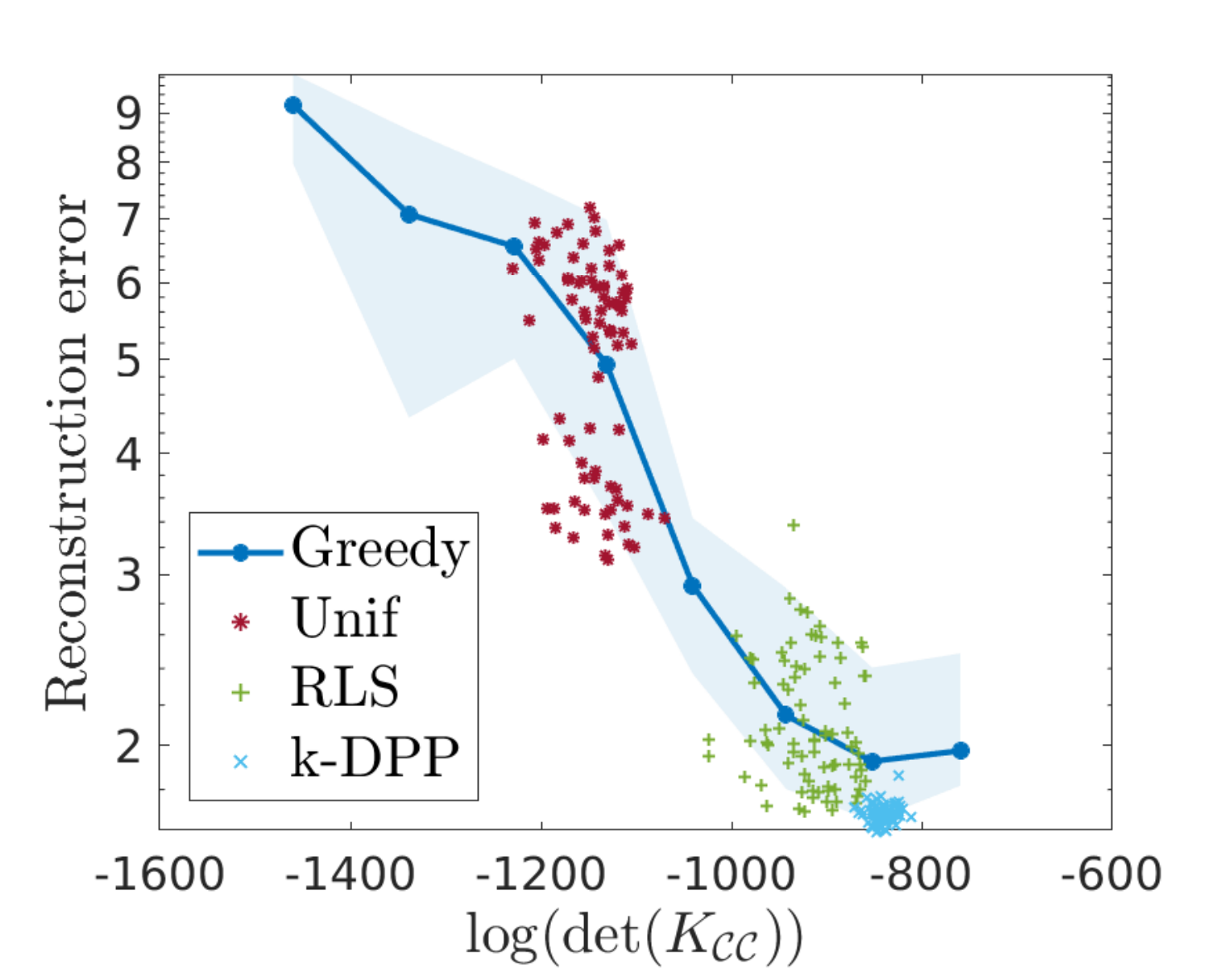}
			\caption{\texttt{Wine Q.}: error}
		\end{subfigure}				
		\caption{KPCA results. The condition number and reconstruction error using half of the components are plotted as a function of $\mathrm{log}(\mathrm{det}(K_{\mathcal{C}\mathcal{C}}))$.}\label{fig:KPCASupp50}
	\end{figure}

%\begin{figure}[b]
%	\centering
%	\begin{subfigure}[t]{0.24\textwidth}
%		\includegraphics[width=\textwidth, height= 0.85\textwidth]{}
%		\caption{\texttt{Park.}: error}
%	\end{subfigure}
%	\begin{subfigure}[t]{0.24\textwidth}
%		\includegraphics[width=\textwidth, height= 0.85\textwidth]{}
%		\caption{\texttt{Park.}: error}
%	\end{subfigure}
%	\begin{subfigure}[t]{0.24\textwidth}
%		\includegraphics[width=\textwidth, height= 0.85\textwidth]{}
%		\caption{\texttt{Wine Q.}: error}
%	\end{subfigure}
%	\begin{subfigure}[t]{0.24\textwidth}
%		\includegraphics[width=\textwidth, height= 0.85\textwidth]{}
%		\caption{\texttt{Wine Q.}: error}
%	\end{subfigure}				
%	\caption{KPCA results. The reconstruction error using one-fourth of the components are plotted as a function of the $\mathrm{log}(\mathrm{det}(K_{\mathcal{C}\mathcal{C}}))$.}\label{fig:KPCASupp25}
%\end{figure}

 \begin{figure}[t]
		\centering
		\begin{subfigure}[b]{0.31\textwidth}
			\includegraphics[width=\textwidth, height= 0.8\textwidth]{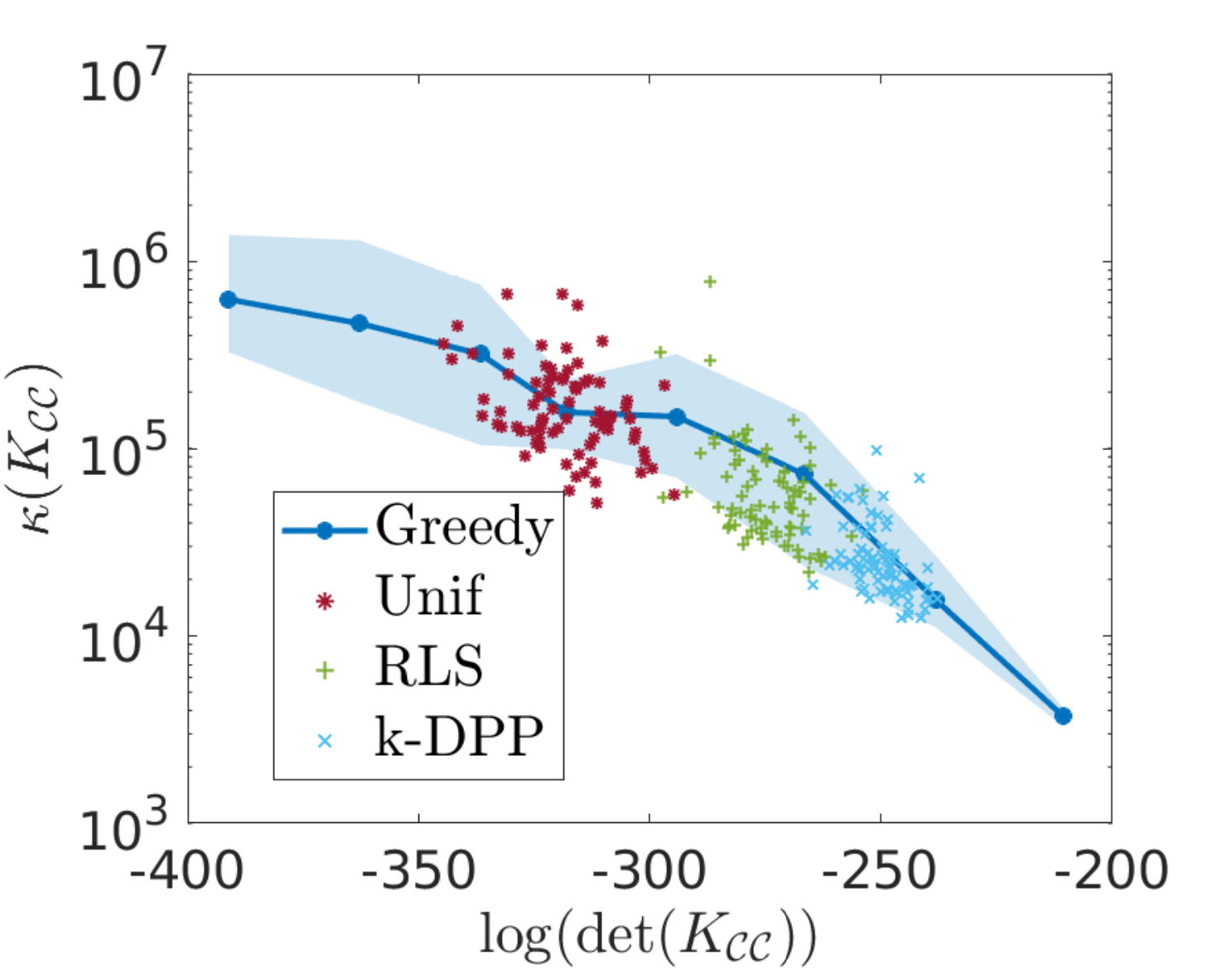}
			\caption{\texttt{Housing}: $\kappa(K_{\mathcal{C}\mathcal{C}})$}
		\end{subfigure}
		\begin{subfigure}[b]{0.31\textwidth}
			\includegraphics[width=\textwidth, height= 0.8\textwidth]{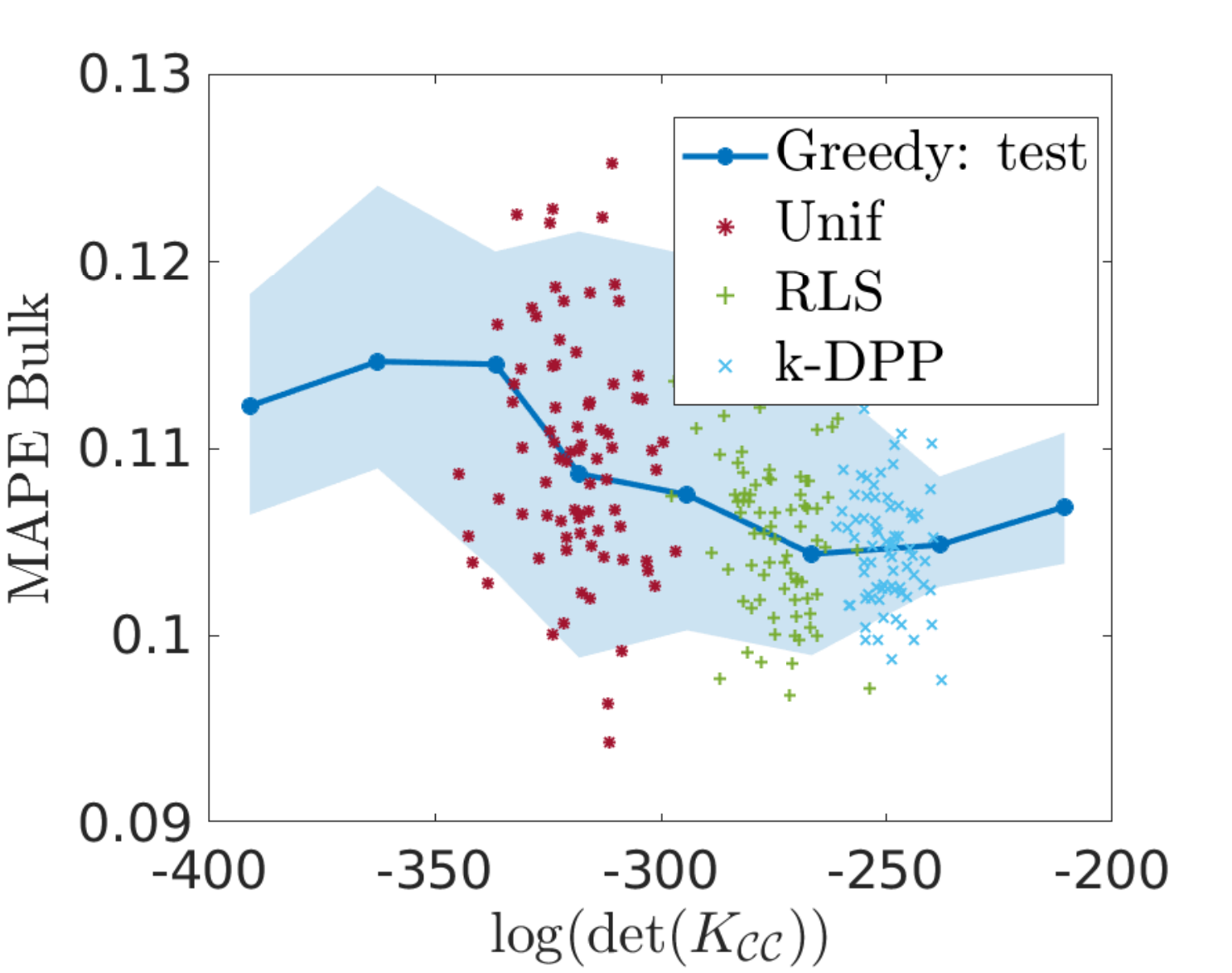}
			\caption{\texttt{Housing}: MAPE Bulk}
		\end{subfigure}
		\begin{subfigure}[b]{0.31\textwidth}
			\includegraphics[width=\textwidth, height= 0.8\textwidth]{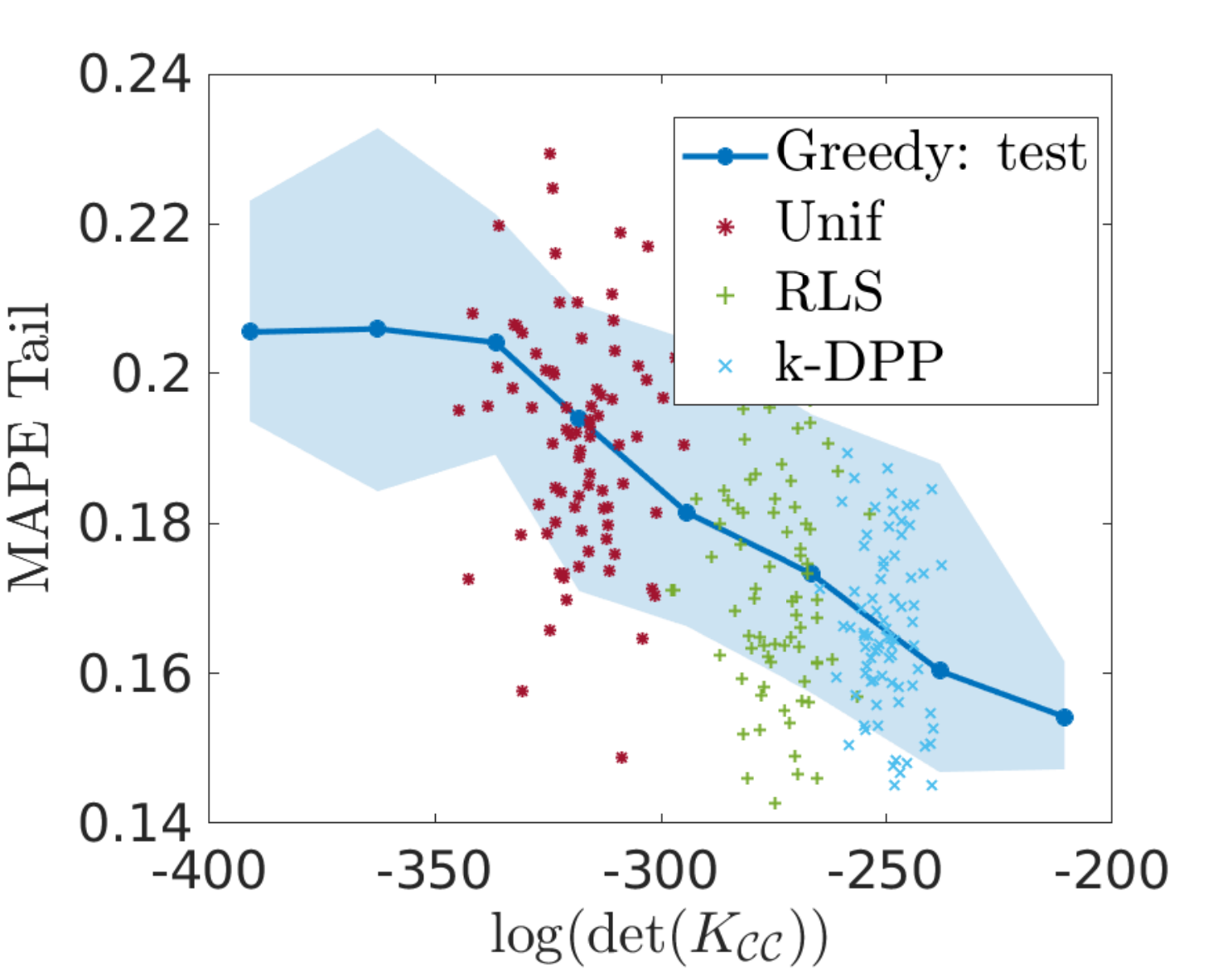}
			\caption{\texttt{Housing}: MAPE Tail}
		\end{subfigure}
		\begin{subfigure}[b]{0.31\textwidth}
			\includegraphics[width=\textwidth, height= 0.8\textwidth]{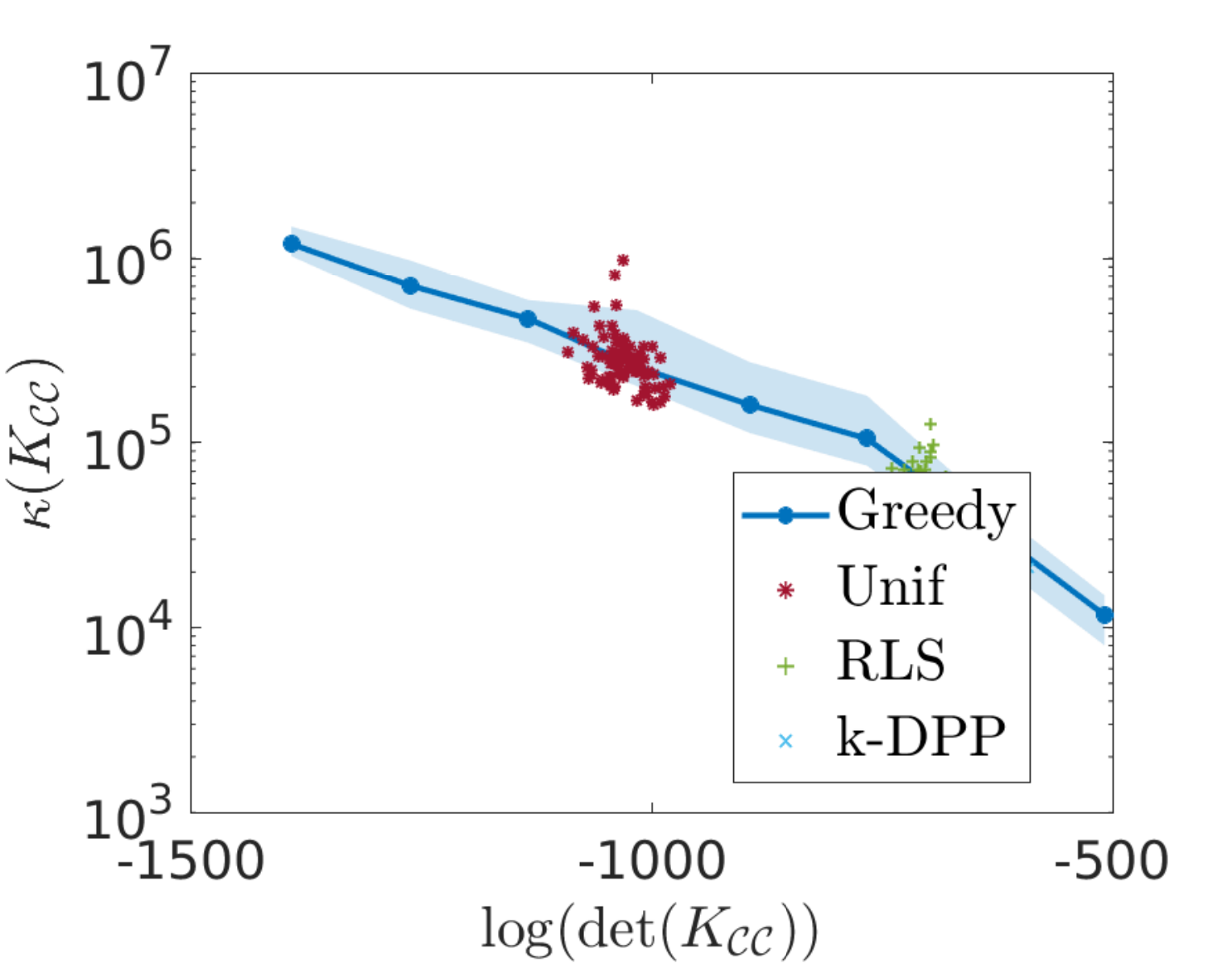}
			\caption{\texttt{Park.}: $\kappa(K_{\mathcal{C}\mathcal{C}})$}
		\end{subfigure}
		\begin{subfigure}[b]{0.31\textwidth}
			\includegraphics[width=\textwidth, height= 0.8\textwidth]{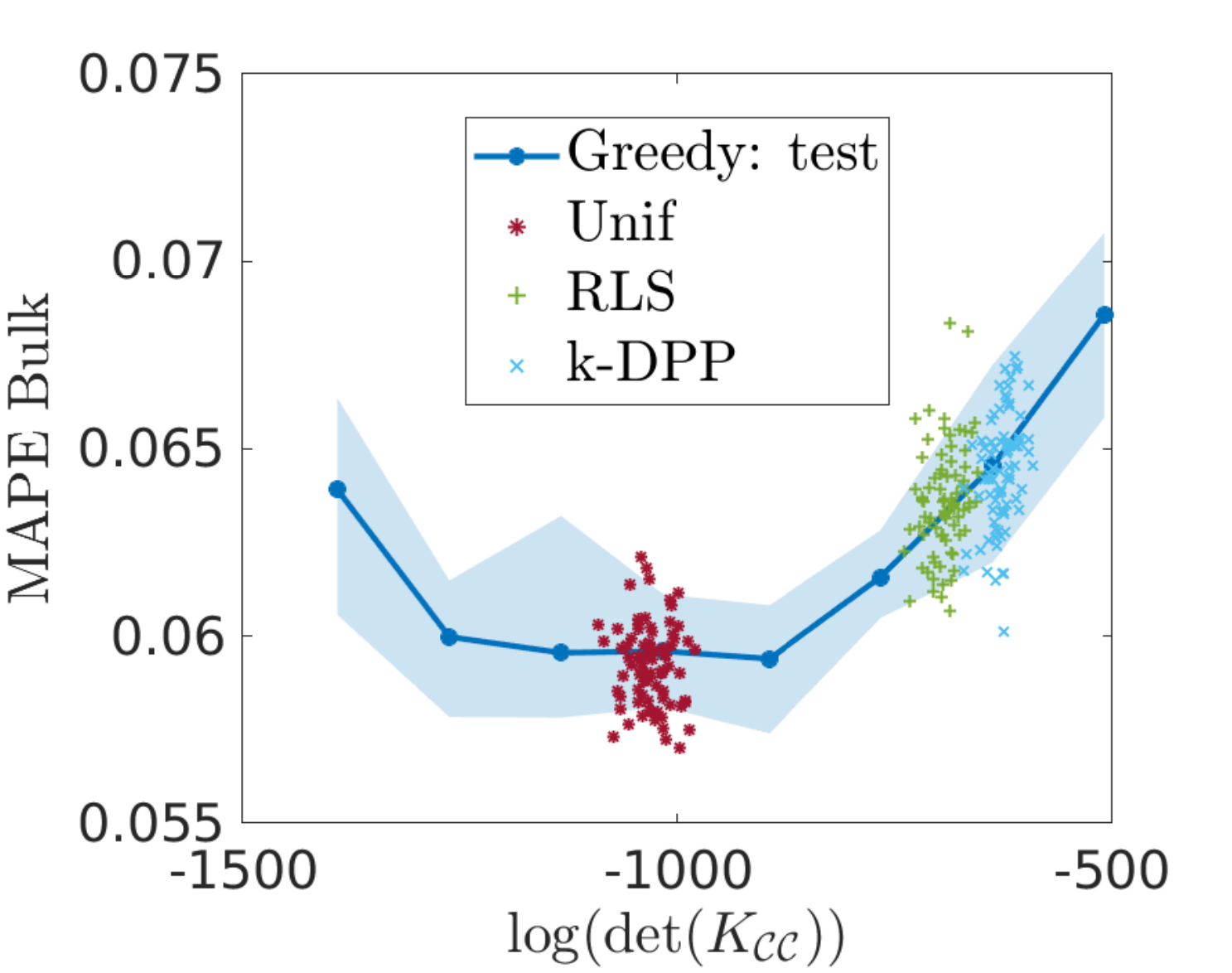}
			\caption{\texttt{Park.}: MAPE Bulk \label{fig:RegressionParkinsonSupp}}
		\end{subfigure}
		\begin{subfigure}[b]{0.31\textwidth}
			\includegraphics[width=\textwidth, height= 0.8\textwidth]{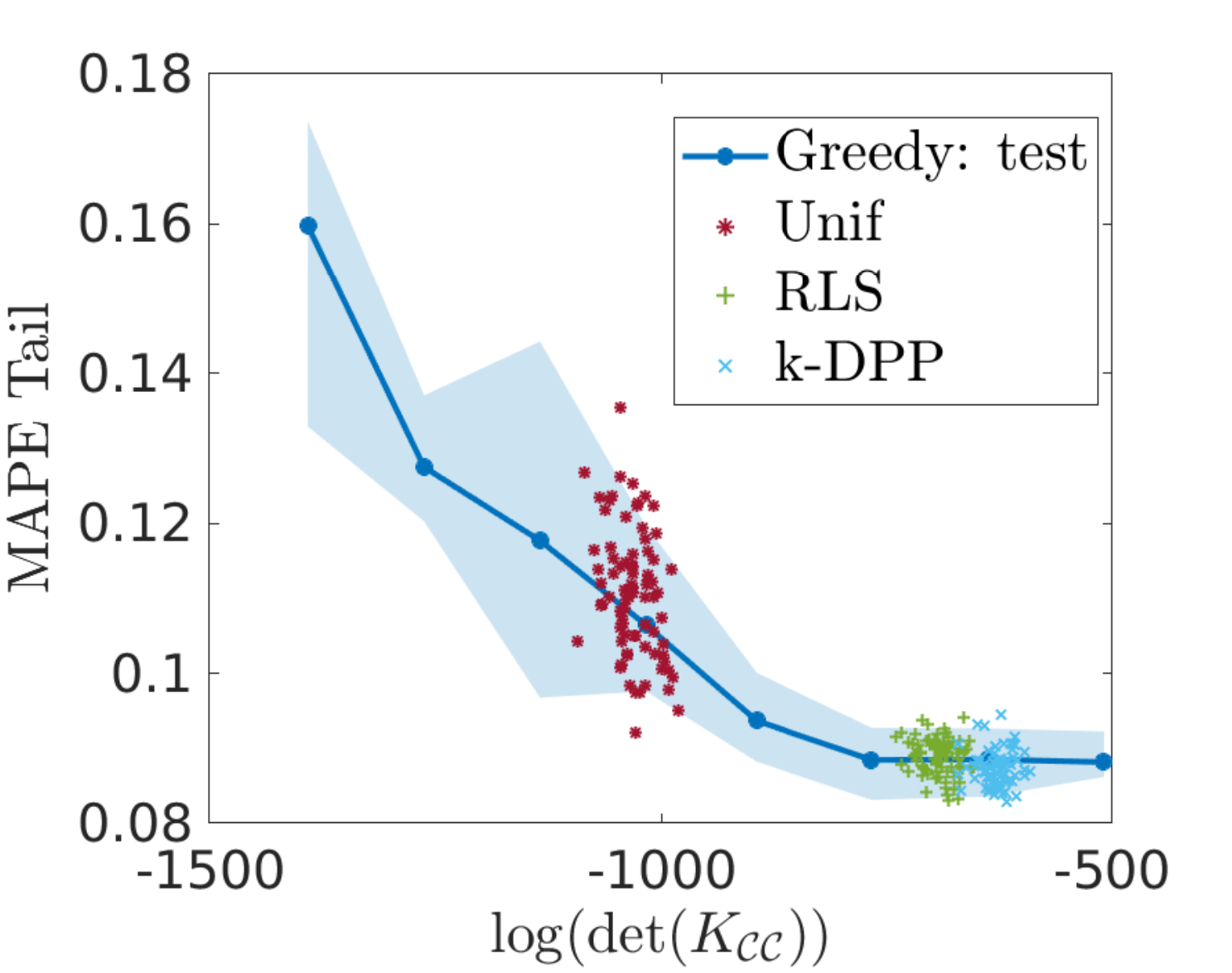}
			\caption{\texttt{Park.}: MAPE Tail}
		\end{subfigure}
		\caption{Regression results. The condition number and MAPE on the test set are plotted as a function of $\mathrm{log}(\mathrm{det}(K_{\mathcal{C}\mathcal{C}}))$.}\label{fig:RegressionSupp}
	\end{figure}

 \begin{figure}[t]
 	\centering
 	\begin{subfigure}[t]{0.24\textwidth}
 		\includegraphics[width=\textwidth, height= 0.8\textwidth]{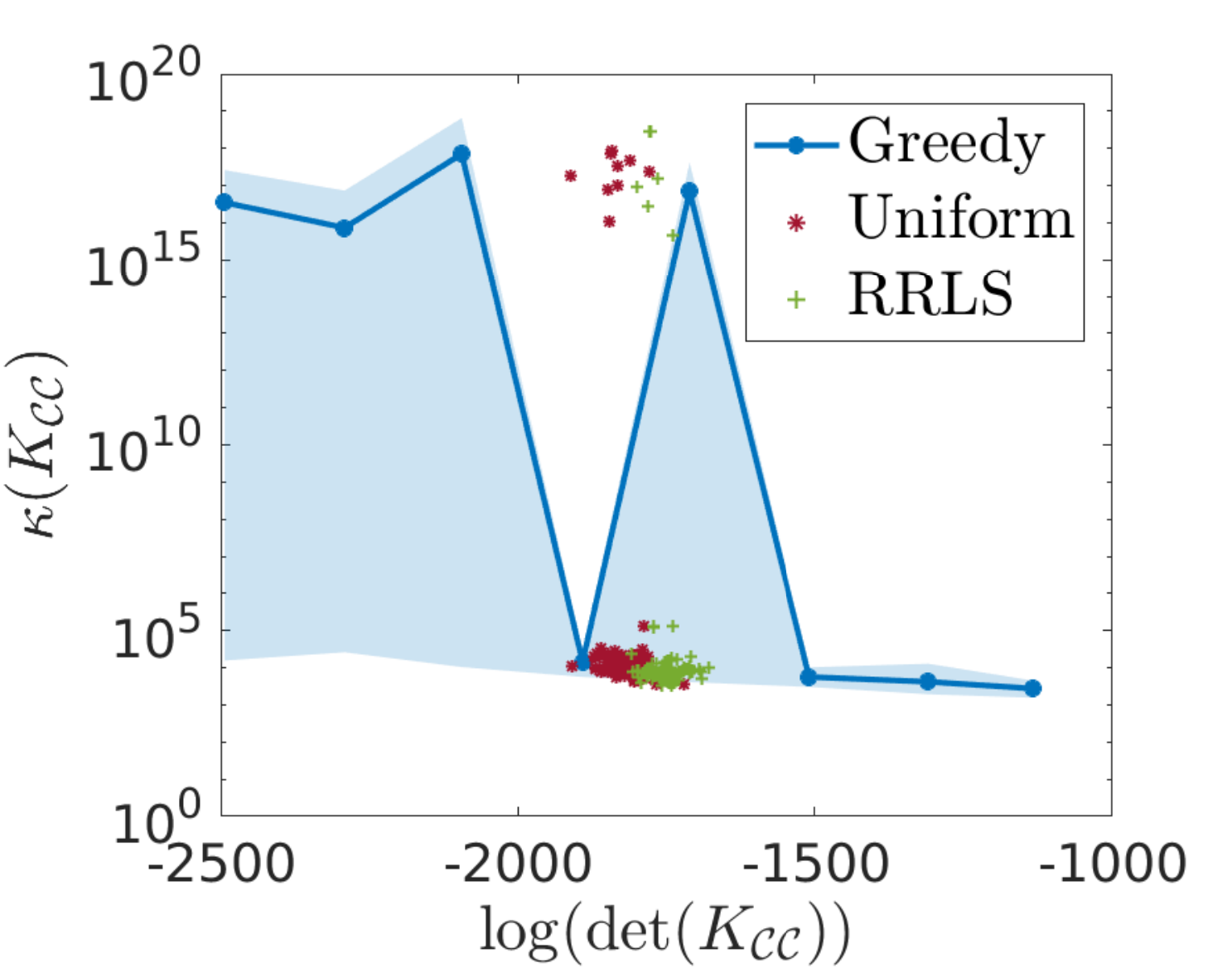}
 		\caption{\texttt{Bike S.}: $\kappa(K_{\mathcal{C}\mathcal{C}})$}
 	\end{subfigure}
  	\begin{subfigure}[t]{0.24\textwidth}
  		\includegraphics[width=\textwidth, height= 0.8\textwidth]{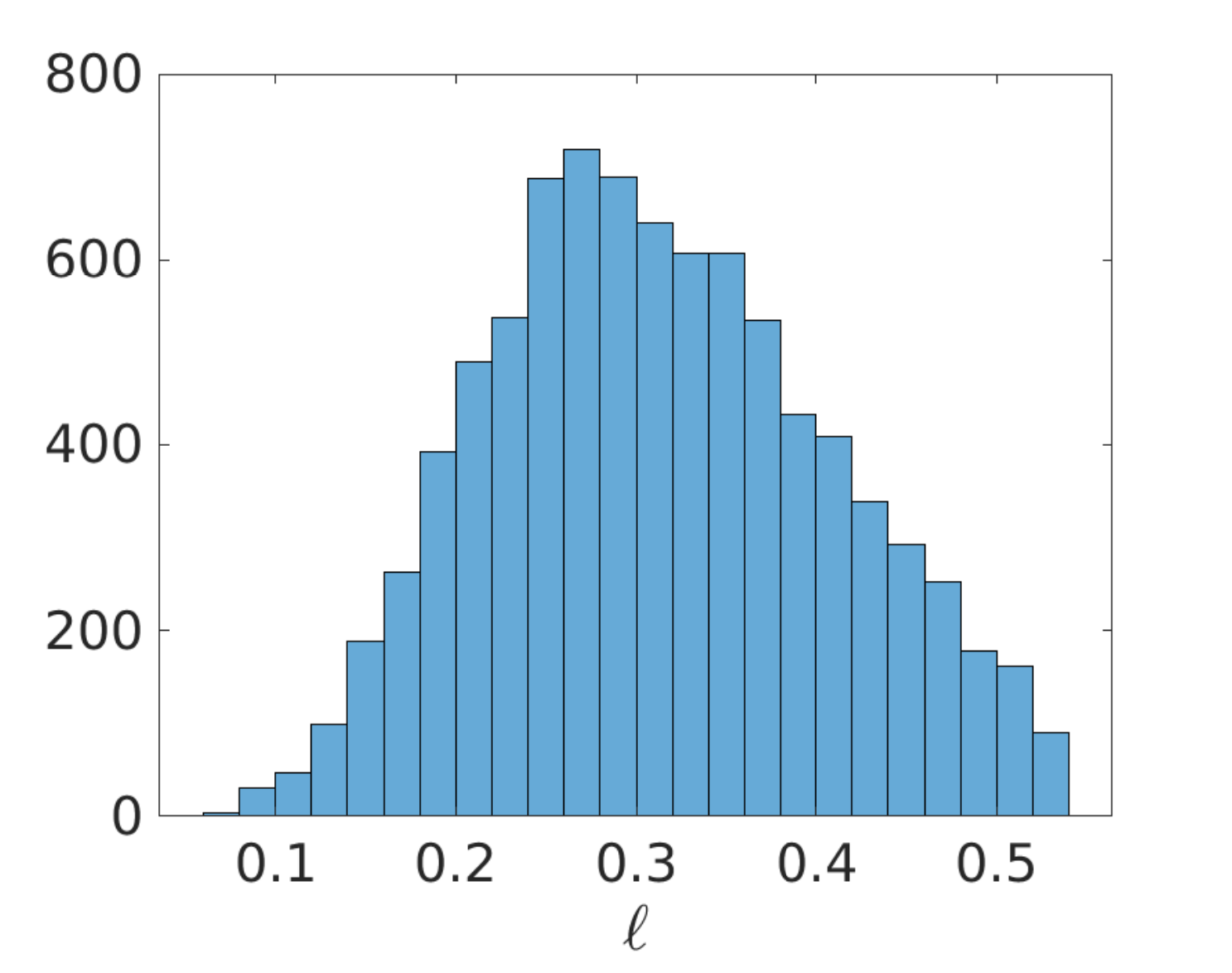}
  		\caption{\texttt{Bike S.}: RLS}
  	\end{subfigure}
 	\begin{subfigure}[t]{0.24\textwidth}
 		\includegraphics[width=\textwidth, height= 0.8\textwidth]{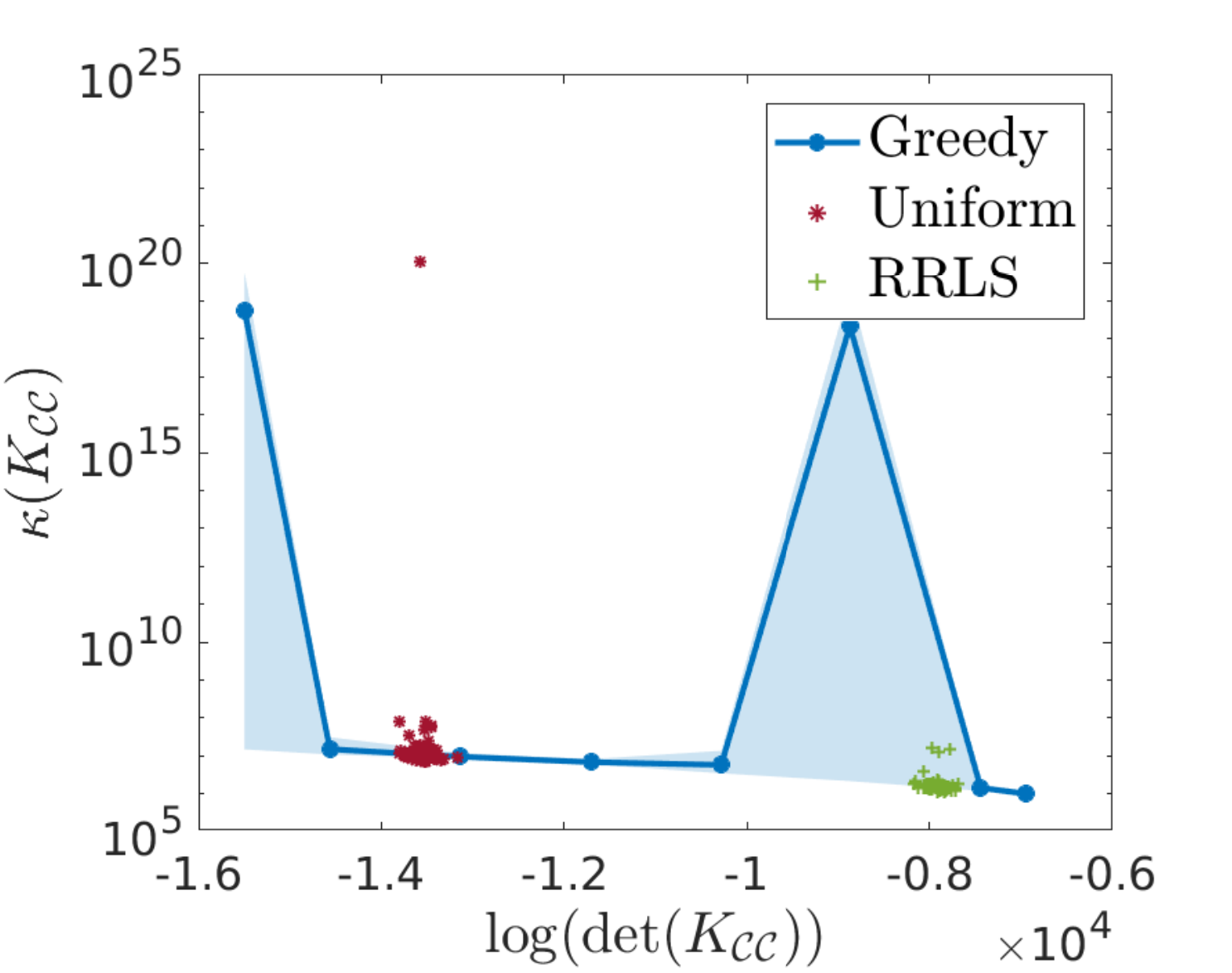}
 		\caption{\texttt{Year P.}: $\kappa(K_{\mathcal{C}\mathcal{C}})$}
 	\end{subfigure}
 	\begin{subfigure}[t]{0.24\textwidth}
 		\includegraphics[width=\textwidth, height= 0.8\textwidth]{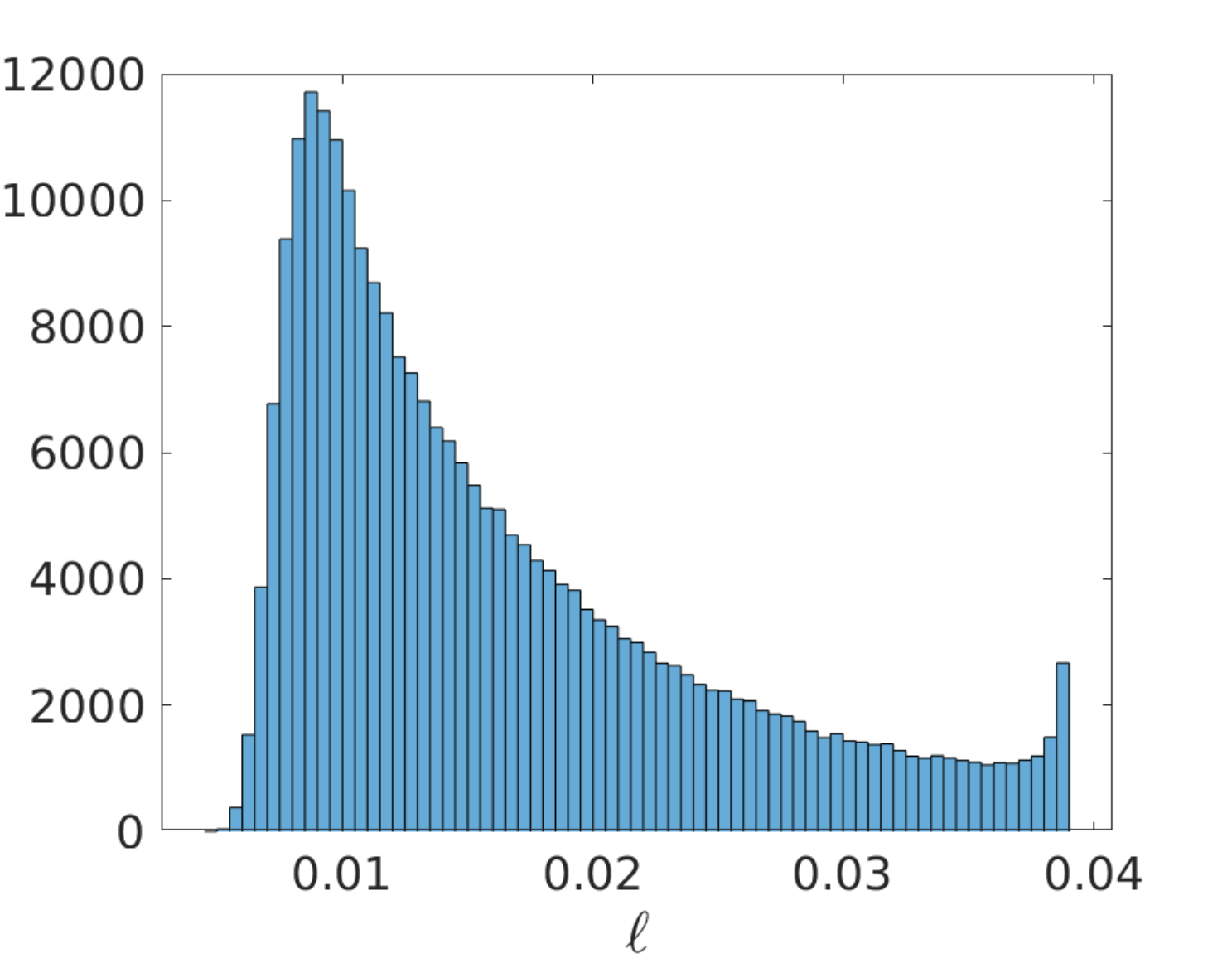}
 		\caption{\texttt{Year P.}: RLS}
 	\end{subfigure}  	
 	\caption{Results accompanying Figure \ref{fig:RegressionLS}. The condition number as a function of $\mathrm{log}(\mathrm{det}(K_{\mathcal{C}\mathcal{C}}))$ and the approximate RLS distribution are visualized.}\label{fig:RegressionLSSupp}
 \end{figure}

 	\begin{figure}[h]
		\centering
		\begin{subfigure}[b]{0.35\textwidth}
			\includegraphics[width=\textwidth, height= 0.95\textwidth]{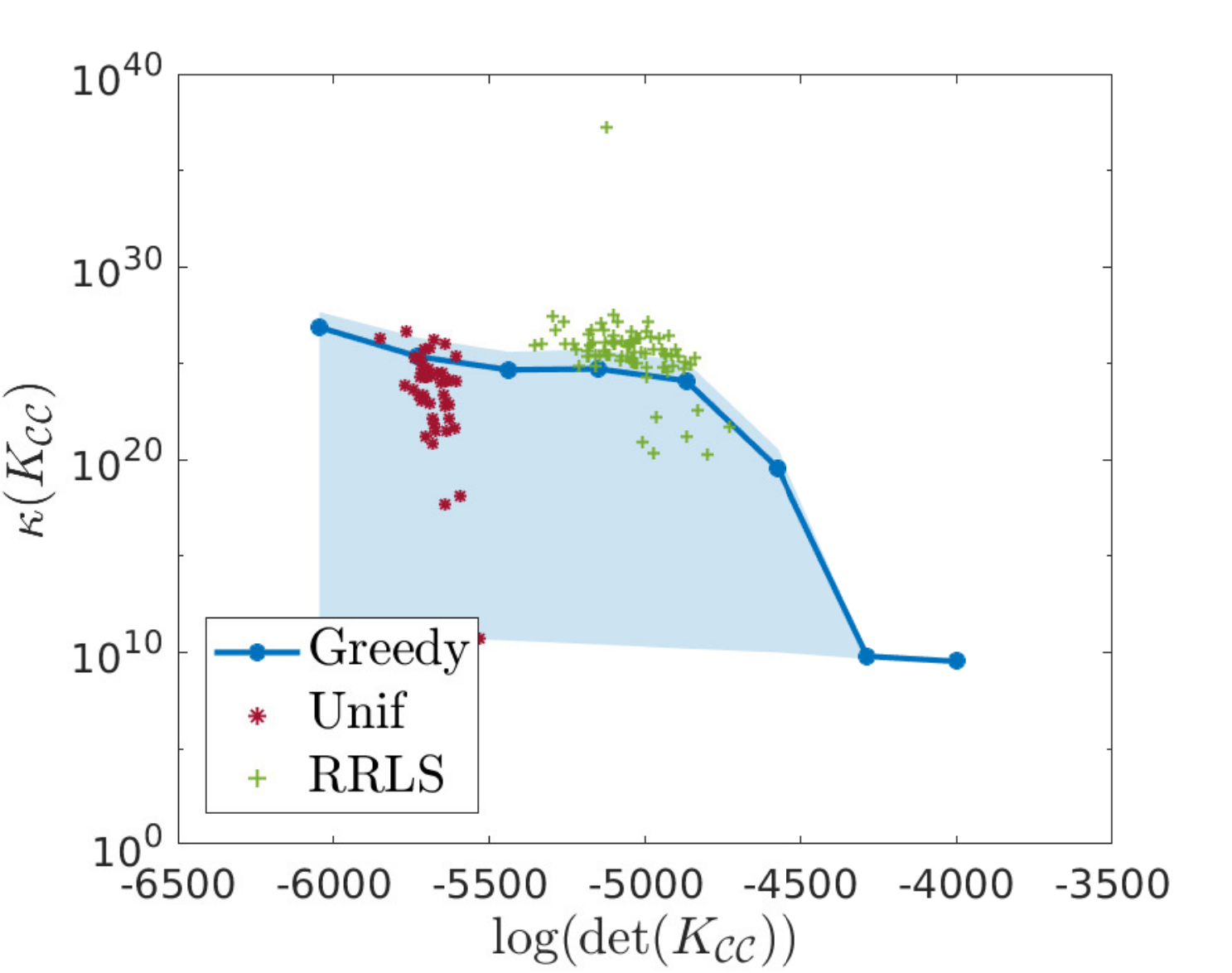}
			\caption{\texttt{MiniBooNE}:$\kappa(K_{\mathcal{C}\mathcal{C}})$}
		\end{subfigure}
		\begin{subfigure}[b]{0.35\textwidth}
			\includegraphics[width=\textwidth, height= 0.95\textwidth]{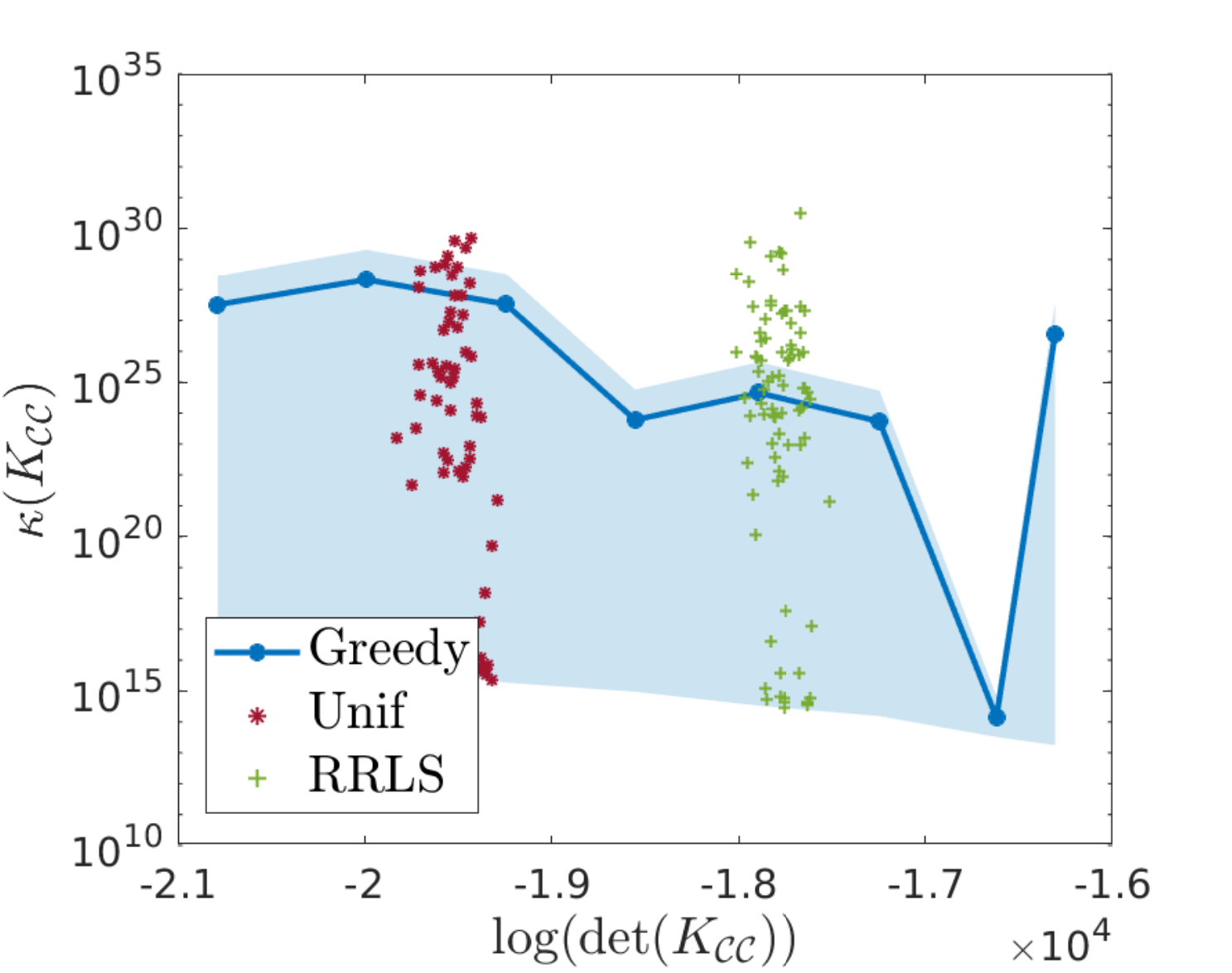}
			\caption{\texttt{codRNA}: $\kappa(K_{\mathcal{C}\mathcal{C}})$}
		\end{subfigure}		
		\caption{Results accompanying Figure \ref{fig:NystromLS}. The condition number of $K_{\mathcal{C}\mathcal{C}}$ versus the sample diversity.}\label{fig:NystromLSSupp}
	\end{figure}

%\section*{Acknowledgments}
%We thank Alessandro Rudi and Luigi Carratino for providing us with the code of BLESS.

\FloatBarrier
\bibliographystyle{siamplain}
\bibliography{References}
\end{document}